\newtheorem{assumption}{Assumption}[section]
\newtheorem{theorem}{Theorem}[section]
\newtheorem{proposition}{Proposition}[section]
\newtheorem{definition}{Definition}[section]
\newtheorem{lemma}{Lemma}[section]
\newtheorem{remark}{Remark}[section]
\numberwithin{equation}{section}
\begin{document}

\title{Boundary detection algorithm inspired by locally linear embedding}

\author{Pei-Cheng Kuo}
\address{Department of Mathematics\\
National Taiwan University, Taipei, 10617, Taiwan}
\email{r12221017@ntu.edu.tw}

\author{Nan Wu}
\address{Department of Mathematical Sciences\\
The University of Texas at Dallas, Richardson, TX 75080, United States}
\email{nan.wu@utdallas.edu}

\begin{abstract}
In the study of high-dimensional data, it is often assumed that the data set possesses an underlying lower-dimensional structure. A practical model for this structure is an embedded compact manifold with boundary. Since the underlying manifold structure is typically unknown, identifying boundary points from the data distributed on the manifold is crucial for various applications.  In this work, we propose a method for detecting boundary points inspired by the widely used locally linear embedding algorithm. We implement this method using two nearest neighborhood search schemes: the $\epsilon$-radius ball scheme and the $K$-nearest neighbor scheme. This algorithm incorporates the geometric information of the data structure, particularly through its close relation with the local covariance matrix. We analyze the algorithm by exploring the spectral properties of the local covariance matrix, with the findings guiding the selection of key parameters.  In the presence of high-dimensional noise, we propose a framework aimed at enhancing boundary detection in noisy data. Furthermore, we demonstrate the algorithm’s performance with simulated examples.
\end{abstract}

\keywords
{Manifold Learning;  Manifold with boundary;  Boundary detection;  Local covariance matrix;  Locally linear embedding;  Nearest neighborhood search scheme.}

\maketitle

\section{Introduction}
In modern data analysis,  it is common to assume that high-dimensional data concentrates around a low-dimensional structure. A typical model for this low-dimensional structure is an unknown manifold, which motivates manifold learning techniques. However, many approaches in manifold learning assume the underlying manifold of the data to be closed (compact and without boundary), which does not always align with realistic scenarios where the manifold may have boundaries.  This work aims to address the identification of boundary points for data distributed on an unknown compact manifold with boundary.

Due to the distinct geometric properties of boundary points compared to interior points, boundary detection is crucial for developing non-parametric statistical methods (see \cite{ding2019bdrygp} for Gaussian process regression and see \cite{berry2017density} and Proposition \ref{strong uniform consistency of KDE} in the Supplementary Material for kernel density estimation) and dimension reduction techniques. Moreover, it plays an important role in kernel-based methods for approximating differential operators on manifolds under various boundary conditions (\cite{peoples2021spectral, jiang2023ghost, vaughn2024diffusion}). However, identifying boundary points on an unknown manifold presents significant challenges. Traditional methods for boundary detection may struggle due to the extrinsic geometric properties of the manifold and the non-uniform distribution of data. 

Locally Linear Embedding(LLE)\cite{Roweis_Saul:2000} is a widely applied nonlinear dimension reduction technique. In this work, we propose a \textbf{B}oundary \textbf{D}etection algorithm inspired by \textbf{L}ocally \textbf{L}inear \textbf{E}mbedding (BD-LLE). BD-LLE leverages barycentric coordinates within the {framework of LLE}, implemented via either an $\epsilon$-radius ball scheme or a $K$-nearest neighbor (KNN) scheme. It incorporates {the geometry of the manifold} through its relation with the local covariance matrix. Across sampled data points, BD-LLE approximates a bump function that concentrates at the boundary of the manifold, exhibiting a constant value on the boundary and zero within the interior. This characteristic remains consistent regardless of extrinsic curvature and data distribution. The clear distinction in bump function values between the boundary and the interior facilitates straightforward identification of points in a small neighborhood of the boundary by applying a simple threshold. Particularly in the $\epsilon$-radius scheme, BD-LLE identifies points within a narrow, uniform collar region of the boundary. { In practice, data is often contaminated by high-dimensional noise. We propose a framework that combines BD-LLE with Diffusion Maps \cite{coifman2006diffusion} to enhance boundary detection in noisy data.}

We outline our theoretical contributions in this work. {We present the bias and variance analyses of the local covariance matrix constructed under the KNN scheme for data sampled from manifolds with boundary.  Leveraging the spectral properties of the local covariance matrix, we provide an analysis of the BD-LLE algorithm in both the $\epsilon$-radius ball scheme and the KNN scheme, offering guidance on selecting key parameters for the algorithm.} Previous theoretical analyses of manifold learning algorithms have predominantly focused on the $\epsilon$-radius ball search scheme, with fewer results available for the KNN scheme. (Refer to \cite{calder2022improved, cheng2022convergence} for analyses of the Diffusion Maps (Graph Laplacians) on a closed manifold in the KNN scheme.) The framework developed in this paper provides useful tools for analyzing kernel-based manifold learning algorithms in the KNN scheme applicable to manifolds with  boundary. 

We provide a brief overview of the related literature concerning the local covariance matrix constructed from samples on embedded manifolds in Euclidean space.  Most research focuses on closed manifolds. \cite{alvarez2020local} explicitly calculates a higher order expansion in the bias analysis of the local covariance matrix using the $\epsilon$-radius ball scheme.  \cite{Tyagi2013} studies the spectral properties of  the local covariance matrix in the $\epsilon$-radius ball scheme for data under specific distributions on a closed manifold. \cite{Wu_Wu:2017} presents the bias and variance analyses of the local covariance matrix in the $\epsilon$-radius ball scheme on a closed manifold and studies its spectral properties. These analyses are further extended for samples on a manifold with boundary under the $\epsilon$-radius ball scheme in \cite{cheng2013local, wu2018locally}.  Notably, \cite{singer2012vector} provides the bias and variance analyses of the local covariance matrix constructed using a smooth kernel for samples on manifolds with and without boundary. Recent studies include considerations of noise. \cite{kaslovsky2014non} investigates the local covariance matrix in the KNN scheme,  focusing on data sampled from a specific class of closed manifolds contaminated by Gaussian noise. \cite{little2017multiscale, dunson2021inferring} explore the spectral properties of  the local covariance matrix constructed in the $\epsilon$-radius ball scheme, for data sampled on closed manifolds with Gaussian noise.

We further review the boundary detection methods developed in recent {decades}. The $\alpha$-shape algorithm and its variations \cite{Edelsbrunner:1983,Edelsbrunner:1994, chintakunta2013distributed} are widely applied in boundary detection. Other approaches \cite{dibakar1999computational, Bo:2008,  cholaquidis2016set} utilize convexity and concavity relative to the inward normal direction of the boundary. However, these methods except \cite{Bo:2008} are most effective when applied to data on manifolds with boundary of the same dimension as the ambient Euclidean space.  Several methods \cite{xue2009boundary, Xia:2006, Qiu:2007, cao2018multidimensional, qiu2016clustering} are developed based on the asymmetry and the volume variation near the boundary; e.g. as a point moves from the boundary to the interior, its neighborhood should encompass more points. Nevertheless, the performance of these algorithms is sensitive to the manifold's extrinsic geometry and data distribution. Recently, \cite{aamari2023minimax} proposes identifying boundary points using Voronoi tessellations over projections of neighbors onto estimated tangent spaces. \cite{calder2022boundary} detects boundary points by directly estimating the distance to the boundary function for points near the boundary.

The remainder of the paper is structured as follows. We review the LLE algorithm and its relation with the local covariance matrix in Section 2. In Section 3.1, we define the detected boundary points based on the manifold with boundary setup and introduce the BD-LLE algorithm in both the $\epsilon$-radius ball and KNN schemes. Section 3.2 discusses the relation between BD-LLE and the local covariance matrix. In Section 3.3, we propose the selection of the regularizer parameter for BD-LLE based on the spectral properties of the local covariance matrix. {In Section 3.4, we propose the selection of the scale parameters $\epsilon$ and $K$ in both nearest neighborhood search schemes.  Section 4 presents the bias and variance analyses of the local covariance matrix in the KNN scheme and the BD-LLE algorithm in both the $\epsilon$-radius ball and KNN schemes.} Section 5 provides numerical simulations comparing the performance of BD-LLE with different boundary detection algorithms. {Section 6 presents a framework designed to improve boundary detection in noisy data.} Table \ref{Table:Notations} summarizes the commonly used notations.

\begin{table}\label{Table:Notations}
\caption{Commonly used notations in this paper.}
\begin{tabular}{|l|l|} 
\hline $Symbol$ & $Meaning$\\ 
\hline\hline 
$M$ &  $d$-dimensional compact smooth manifold with smooth boundary \\ 
$\partial M$ &  The boundary of $M$ \\
$\iota$ & An isometric embedding of $M$ in $\mathbb{R}^p$\\
$P$ & P.d.f. on $M$ with lower and upper bounds $P_m$ and $P_M$ respectively\\
$\{x_i\}_{i=1}^n$ & Points sampled based on $P$ from $M$ \\ 
$\mathcal{X}=\{z_i=\iota(x_i)\}_{i=1}^n$ & The point cloud \\ 
$\epsilon$, $K$ & The scale parameters \\
$\mathcal{O}_k$ & The nearest neighbors of $z_k$  \\
$B_k$ & The value of the boundary indicator at $z_k$ \\
$B(x)$, $\tilde{B}(x)$ & The bump functions in the analyses of BD-LLE in different \\
& nearest neighborhood search schemes \\
\hline 
\end{tabular}
\end{table}

\section{Review of the Locally Linear Embedding}
Recall the definitions of the following two nearest neighborhood search schemes. 
\begin{definition}\label{nn definition}
Suppose $\mathcal{X}=\{z_i\}_{i=1}^n \subset \mathbb{R}^p$.  Denote the nearest {neighbors} of $z_k\in \mathcal{X}$ as $\mathcal{O}_k=\{z_{k,i}\}_{i=1}^{N_k}$ with $N_k$ to be the number of points in $\mathcal{O}_k$.  

In the $\epsilon$-radius ball scheme with $\epsilon>0$, 
$$\mathcal{O}_k= \{z_i \in \mathcal{X}| 0<\|z_i-z_k\|_{\mathbb{R}^p} \leq \epsilon \}.$$ 

In the KNN scheme with $1 \leq K \leq n-1$,  for any $z_k \in \mathcal{X}$, we rearrange $\mathcal{X} \setminus  \{z_k\}=\{z'_i\}_{i=1}^{n-1}$ based on their distances to $z_k$, i.e. $\|z'_1-z_k\|_{\mathbb{R}^p} \leq \cdots \leq \|z'_K-z_k\|_{\mathbb{R}^p} \leq \cdots \leq \|z'_{n-1}-z_k\|_{\mathbb{R}^p} $. {Then, $\|z'_K-z_k\|_{\mathbb{R}^p}$ is called the $K$-distance of $z_k$, and}
$$\mathcal{O}_k=\{z_i \in \mathcal{X}| 0<\|z_i-z_k\|_{\mathbb{R}^p} \leq \|z'_K-z_k\|_{\mathbb{R}^p} \}. $$ 
\end{definition}

\begin{remark}
{For the above definitions of $\mathcal{O}_k$ and $N_k$ in the KNN scheme, if there is $z'_j \in \{z'_i\}_{i=1}^{n-1}$ with $j>K$ and $\|z'_j-z_k\|_{\mathbb{R}^p}=\|z'_K-z_k\|_{\mathbb{R}^p}$, then $N_k > K$. Otherwise $N_k=K$.}
\end{remark}

We summarize the essential details of the LLE and {direct the reader} to \cite{Roweis_Saul:2000,Wu_Wu:2017, wu2018locally} for an in-depth discussion. {The LLE algorithm is based on the regularized barycentric coordinates. For any $z_k$ in the point cloud $\mathcal{X}=\{z_i\}_{i=1}^n \subset \mathbb{R}^p$, the regularized barycentric coordinates of $z_k$ are constructed through the following steps. First, either the  $\epsilon$-radius ball scheme or the KNN scheme is applied to determine the nearest neighbors $\mathcal{O}_k$ of $z_k$.  Let $\mathcal{O}_k=\{z_{k,i}\}_{i=1}^{N_k}$ denote those neighbor points. Second, using $\mathcal{O}_k$, we construct the {\em local data matrix} $G_{n,k}\in \mathbb{R}^{p \times {N_k}}$, where the $j$-th column is the vector $z_{k,j}-z_k$. Finally, the regularized barycentric coordinates are the weights $w_k \in \mathbb{R}^{N_k}$ assigned to the points in $\mathcal{O}_k$ and are the solutions of the following equation:
\begin{align}
(G_{n,k}^\top G_{n,k}+c I_{N_k\times N_k})y_k=\boldsymbol{1}_{N_k}\,, \quad w_k=\frac{y_k}{y_k^\top \boldsymbol{1}_{N_k}},  \label{Section2:wk}
\end{align}
where $\boldsymbol{1}_{N_k}$ is the vector in $\mathbb{R}^{N_k}$ with all entries equal to $1$ and $c>0$ is the regularizer chosen by the user. The regularized barycentric coordinates are extended to the LLE matrix $W \in \mathbb{R}^{n \times n}$, where the entry $W_{ki}$ equals $w_k(j)$ if $z_i=z_{k,j}$ in $\mathcal{O}_k$, and $0$ otherwise. Dimension reduction  is performed using the eigenvectors of the matrix $(I-W)^\top (I-W)$.  Our boundary detection algorithm is  developed based on the regularized barycentric coordinates, i.e. the solutions of \eqref{Section2:wk}.

The regularized barycentric coordinates can be expressed through the eigenpairs of the local covariance matrix.  Define
\begin{align}\label{local covariance matrix}
C_{n,k}=G_{n,k}G_{n,k}^\top =\sum_{i=1}^{N_k}(z_{k,i}-z_k)(z_{k,i}-z_k)^\top \in \mathbb{R}^{p \times p}.
\end{align}
Then $\frac{1}{n}C_{n,k}$ represents the local covariance matrix associated with $\mathcal{O}_k$ at $z_k$. Consider the orthonormal eigen-decomposition of the matrix $C_{n,k}=U_{n,k} \Lambda_{n,k} U_{n,k}^\top$ , where $U_{n,k} \in O(p)$ and $\Lambda_{n,k}$ is the eigenvalue matrix with the $i$-th diagonal entry denoted as $\lambda_{n,i}(z_k)$. We assume the eigenvalues are arranged in the decreasing order, i.e. ${\lambda}_{n,1}(z_k) \geq {\lambda}_{n,2}(z_k) \geq \cdots \geq {\lambda}_{n,p}(z_k)\geq 0$. Let $r_n=\texttt{rank}(C_{n,k}) \leq p$, and define $I_{p,r_n}:=\begin{bmatrix}
I_{r_n} &  0 \\
0 & 0  \\
\end{bmatrix}\in \mathbb{R}^{p\times p}$.  The regularized pseudo inverse of $C_{n,k}$ is given by
\begin{equation}\label{Definition:Irho:Soft}
\mathcal{I}_c(C_{n,k}):=U_{n,k}I_{p,r_n} (\Lambda_{n,k}+c I_{p\times p})^{-1} U_{n,k}^\top,
\end{equation}
where $c$ is the regularizer of the LLE. Note that $\mathcal{I}_c(C_{n,k})$ is a symmetric matrix.  It is shown in \cite[Section 2]{Wu_Wu:2017} that the solution to (\ref{Section2:wk}) is 
\begin{align}
y_{k} =&\,c^{-1}(\boldsymbol{1}_{N_k}-  G_{n,k}^\top \mathcal{I}_c(C_{n,k}) G_{n,k}\boldsymbol{1}_{N_k})  \label{solution y_n},
\end{align}
and hence 
\begin{align}
w_k&\,=\frac{\boldsymbol{1}_{N_k}-  G_{n,k}^\top \mathcal{I}_c(C_{n,k}) G_{n,k}\boldsymbol{1}_{N_k}}
{N_k- \boldsymbol{1}_{N_k} ^\top G_{n,k}^\top \mathcal{I}_c(C_{n,k}) G_{n,k}\boldsymbol{1}_{N_k}}\,.\label{Expansion:LLEweightedKernel}
\end{align}
}

\section{Boundary detection algorithm inspired by the LLE}
\subsection{Setup of the problem and the main idea of the algorithm}
In this section, we focus on the identification of boundary points distributed on a manifold with boundary. Prior to delving into the BD-LLE algorithm, we introduce the following model of a manifold with boundary.
\begin{assumption}\label{main assumption 1}
Let $(M,g)$ be a d-dimensional compact, smooth Riemannian manifold with boundary isometrically embedded in $\mathbb{R}^p$ via $\iota:M \hookrightarrow \mathbb{R}^p$. We assume the boundary of $M$, denoted as $\partial M$, is smooth.  {Denote the pushforward of $\iota$ as $\iota_*$}.
\end{assumption}
Since $\iota$ is an embedding, the boundary of $\iota(M)$ satisfies $\partial \iota(M)=\iota(\partial M)$.  Next, we provide the following assumption about the samples on the manifold  with boundary $M$.
\begin{assumption}\label{main assumption 2}
{Suppose $(\Omega, \mathcal{F}, \mathbb{P})$ is a probability space, where $\mathbb{P}$ is a  probability measure defined on the Borel sigma algebra $\mathcal{F}$ on $\Omega$. Let  $X$ be a random variable on $(\Omega, \mathcal{F}, \mathbb{P})$ with the range on $(M,g)$.  We assume $\mathbb{P}_X:=X_*\mathbb{P}$ is absolutely continuous with respect to the volume measure $\mathfrak{m}$ on $M$ associated with $g$ so that $d\mathbb{P}_X=P d\mathfrak{m}$ by the Radon-Nikodym theorem, where $P$ is a non-negative function defined on $M$. We call $P$ the probability density function (p.d.f.) associated with $X$. We further assume $P \in C^2(M)$  and $0 <P_m \leq P(x) \leq P_M$ for all $x \in M$.  We assume $\{x_1 \cdots, x_n\} \subset M$ are i.i.d. sampled from $P$.}
\end{assumption} 

Under Assumptions \ref{main assumption 1} and \ref{main assumption 2}, we consider the point cloud $\mathcal{X}=\{z_i=\iota(x_i)\}_{i=1}^n$.  In this work, the geometric information of $\iota(M)$ is not accessible, and we propose the BD-LLE algorithm to detect the boundary points on $\iota(M)$ through the Euclidean coordinates of $\mathcal{X}$. Since $\partial M$ is a measure $0$ subset of $M$ and {$\mathbb{P}_X$} is absolutely continuous with respect to the volume measure on $M$,  the probability for a sample in $\mathcal{X}$ to lie on $\partial \iota(M)$ is $0$.  The best we can do is finding all points $\partial \mathcal{X}$ from $\mathcal{X}$ lying in a small neighborhood of $\partial  \iota(M)$ in $\iota(M)$. We call $\partial \mathcal{X}$  the {\em{detected boundary points}} from $\mathcal{X}$. 

The BD-LLE algorithm includes the construction of a {\em boundary indicator} (BI) over $\mathcal{X}$ using the barycentric coordinates of each sample point $z_k$. We describe the intuition behind this construction. Specifically, the value of the BI at $z_k$,  $B_k$,  approximates the value of a function $B(x)$ on $M$ at $x=x_k$.  The function $B(x)$ is constant on $\partial M$ and {attains its maximum} on $\partial  M$. It decreases rapidly along the normal direction of $\partial  M$ towards the interior of $M$. The value of $B(x)$ at a point $x$ is $0$ whenever $x$ is away from $\partial M$. If we choose a threshold, then the preimage of the values larger than the threshold under $B(x)$ is a neighborhood of  $\partial M$. In the context of the $\epsilon$ radius ball scheme, this preimage is a narrow, uniform collar region of $\partial M$.  Refer to (6) in Theorem \ref{prop BI 1} for a precise description. Therefore, points $z_k$  corresponding  to $B_k$ greater than the threshold are identified as boundary points.  

We summarize the steps of BD-LLE in Algorithm \ref{Algorithm inspired by LLE}. The inputs of the algorithm include the point cloud $\mathcal{X}$, the scale parameters $\epsilon$ or $K$, and the regularizer $c$. The outputs of the algorithm are the detected boundary points $\partial \mathcal{X} \subset \mathcal{X}$.


\begin{algorithm}[h]
%
\begin{algorithmic}[1]
\STATE{Inputs: $\mathcal{X}$, $\epsilon$ or $K$, and $c$}

\STATE{For each $k$, find the neighborhood $\mathcal{O}_k=\{z_{k,i}\}_{i=1}^{N_k} \subset \mathcal{X}$ of $z_k$ through either the $\epsilon$-ball scheme or the KNN scheme.} 

\STATE{Construct $G_{n,k}=\begin{bmatrix}
 | & | & |\\
\ldots & z_{k,j}-z_k & \ldots
\\   |  &  |  & | 
\end{bmatrix}. \quad  z_{k,j} \in \mathcal{O}_k$.\;}

\STATE{Let $y_k=(G_{n,k}^\top G_{n,k}+c I_{N_k\times N_k})^{-1} \boldsymbol{1}_{N_k}$.  Let $B_k=\frac{N_k-c y_k^\top \boldsymbol{1}_{N_k}}{N_k}$.}

\STATE{Set $\partial \mathcal{X}:=\{x_k|\, B_k\geq \frac{1}{2} \max_{i}{B_i}\}$.\; }
\caption{BD-LLE algorithm} \label{Algorithm inspired by LLE}
\end{algorithmic}
\end{algorithm}

\subsection{Relation between the local covariance matrix and the boundary indicator}
{We express BI explicitly through the local data matrix $G_{n,k}$ and the local covariance matrix  $\frac{1}{n}C_{n,k}$.}  Since $G_{n,k}$ is invariant under translation, and $G_{n,k}^\top G_{n,k}$ is invariant under orthogonal transformation on $\mathbb{R}^p$, $y_k$ in Step 3 of Algorithm \ref{Algorithm inspired by LLE} is invariant under orthogonal transformation and translation.  Hence, $B_k$ is invariant under translation and orthogonal transformation on $\mathbb{R}^p$. Moreover, $B_k$ is related to the local covariance matrix $\frac{1}{n}C_{n,k}$ through \eqref{solution y_n}.  We summarize {this fact} as the following proposition.

\begin{proposition}\label{invariant B_k}
The value of the BI in Algorithm \ref{Algorithm inspired by LLE} at each $z_k$, denoted $B_k$, is invariant under translation of $\mathcal{X}$ and orthogonal transformation on $\mathbb{R}^p$. {Moreover,
\begin{align}\label{proof prop1 main}
B_k=\frac{N_k-c y_k^\top \boldsymbol{1}_{N_k}}{N_k}=\frac{ \boldsymbol{1}_{N_k} ^\top G_{n,k}^\top \mathcal{I}_c(C_{n,k}) G_{n,k}\boldsymbol{1}_{N_k}}{N_k},\,
\end{align}
where $\mathcal{I}_c(C_{n,k})$ is defined in \eqref{Definition:Irho:Soft}.}
\end{proposition}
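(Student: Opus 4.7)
The proposition has two essentially independent parts: an invariance statement and an explicit algebraic rewriting of $B_k$. Both parts should follow by direct manipulation of definitions already established in Section 2, so there is no deep obstacle; the main task is to assemble the right chain of identities.

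My plan for the invariance part is as follows. First, observe that every column of $G_{n,k}$ is a difference $z_{k,j}-z_k$, which is unchanged if the entire point cloud is translated by a common vector. Under an orthogonal transformation $O\in O(p)$, each column transforms as $z_{k,j}-z_k \mapsto O(z_{k,j}-z_k)$, so $G_{n,k}\mapsto OG_{n,k}$ and hence $G_{n,k}^\top G_{n,k}\mapsto G_{n,k}^\top O^\top O G_{n,k}=G_{n,k}^\top G_{n,k}$. Since the right-hand side $\boldsymbol{1}_{N_k}$ of the equation defining $y_k$ in Step~4 of Algorithm~\ref{Algorithm inspired by LLE} is unaffected, $y_k$ is invariant, and therefore so is $B_k=(N_k-cy_k^\top\boldsymbol{1}_{N_k})/N_k$.

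For the algebraic identities, the plan is to start from the explicit formula \eqref{solution y_n} for $y_k$, namely $y_k=c^{-1}\boldsymbol{1}_{N_k}-c^{-1}G_{n,k}^\top \mathbf{T}_{n,x_k}$. Left-multiplying by $c\boldsymbol{1}_{N_k}^\top$ gives $cy_k^\top\boldsymbol{1}_{N_k}=N_k-\mathbf{T}_{n,x_k}^\top G_{n,k}\boldsymbol{1}_{N_k}$, so that
\begin{equation*}
\frac{N_k-cy_k^\top\boldsymbol{1}_{N_k}}{N_k}=\frac{\mathbf{T}_{n,x_k}^\top G_{n,k}\boldsymbol{1}_{N_k}}{N_k},
\end{equation*}
which is the first equality of \eqref{proof prop1 main}. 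For the second equality, substitute the definition \eqref{Definition:Tn} of $\mathbf{T}_{n,x_k}$ and use the symmetry of $\mathcal{I}_c(C_{n,k})$ noted right after \eqref{Definition:Irho:Soft} to transpose it freely, which produces the quadratic form $\boldsymbol{1}_{N_k}^\top G_{n,k}^\top \mathcal{I}_c(C_{n,k})G_{n,k}\boldsymbol{1}_{N_k}/N_k$.

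The only point requiring a little care, rather than being a genuine obstacle, is that the derivation in Section~2 of the formula \eqref{solution y_n} for $y_k$ involves the regularized pseudo-inverse $\mathcal{I}_c(C_{n,k})$, whose definition uses the eigen-decomposition and the projection $I_{p,r_n}$ onto the range of $G_{n,k}$. To be safe I would briefly note that $G_{n,k}\boldsymbol{1}_{N_k}$ lies in the column span of $G_{n,k}$, which is exactly the range on which $\mathcal{I}_c(C_{n,k})$ acts as $(C_{n,k}+cI)^{-1}$, so the manipulations leading to \eqref{solution y_n} are valid and the chain of equalities above is justified without further regularity assumptions on $C_{n,k}$.
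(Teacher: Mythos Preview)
Your proposal is correct and follows essentially the same approach as the paper: the paper's justification is the short paragraph immediately preceding the proposition, which notes that $G_{n,k}$ is translation-invariant, $G_{n,k}^\top G_{n,k}$ is invariant under orthogonal transformations, and then points to \eqref{Definition:Tn} and \eqref{solution y_n} for the algebraic identity---exactly the chain you spell out. Your extra remark about $G_{n,k}\boldsymbol{1}_{N_k}$ lying in the column span of $G_{n,k}$ is a helpful clarification but not something the paper addresses separately, since it simply cites \cite{Wu_Wu:2017} for the validity of \eqref{solution y_n}.
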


\subsection{Selection of the regularizer} \label{Selection of the regularizer}
{The choice of the regularizer $c$ is crucial for the LLE algorithm. In \cite{Wu_Wu:2017, wu2018locally}, when the point cloud is distributed on an embedded $d$-dimensional manifold with or without boundary, for dimension reduction purpose, the selection of $c$ is discussed for the LLE matrix to recover the Laplace-Beltrami operator of the manifold. Under the $\epsilon$-radius ball scheme, given certain relations between $\epsilon$ and the size of the point cloud $n$, these studies demonstrate that the $d$ largest eigenvalues of $C_{n,k}$ are of order $n\epsilon^{d+2}$, while the remaining smaller eigenvalues encode the local extrinsic geometric information of the data and are of order $O(n\epsilon^{d+4})$. For a review of the results regarding the spectral properties of the local covariance matrix in the $\epsilon$-radius ball scheme, refer to Section \ref{local covariance matrix eps scheme} of the Supplementary Material.  Since the Laplace-Beltrami operator depends only on the intrinsic geometry of the manifold, it is suggested that $c=n \epsilon^{d+3}$ should be used to dominate the smaller eigenvalues of $C_{n,k}$ and eliminate the impact the extrinsic geometry. }

From the discussion in the previous subsection, we observe that an ideal BI should satisfy two main criteria: (1) it should be smaller within the interior of the manifold to distinguish interior points from boundary points, and (2) it should approximate a constant near the boundary to facilitate straightforward threshold selection.  Proposition \ref{proof prop1 main} establishes that the BI depends on the regularized pseudo inverse $\mathcal{I}_c(C_{n,k})$. According to \eqref{Definition:Irho:Soft}, if the regularizer $c$ outweighs the eigenvalues of $C_{n,k}$, then $\mathcal{I}_c(C_{n,k})$ is dominated by $c^{-1} U_{n,k}I_{p,r_n} U_{n,k}^\top$ causing $B_k$ to lose the geometric information of the manifold. As a result, the values of $B_k$ over the interior and near the boundary may not be distinguishable. Refer to the right panel in Figure \ref{fig: BI}. Conversely, the inversion of $G_{n,k}^\top G_{n,k}+c I_{N_k\times N_k}$ in Step 3 of the algorithm implies that if $c$ is too small, the BI becomes unstable. Moreover, choosing c too small contaminates the values of $B_k$ near the boundary with the extrinsic geometric information. 

{In section \ref{Theoretical Analysis section}, we show that, within the KNN scheme, given certain relations between $K$ and $n$, the $d$ largest eigenvalues of $C_{n,k}$ are of order  $n(\frac{K}{n})^{\frac{d+2}{d}}$, while the remaining smaller eigenvalues are of order  $O(n(\frac{K}{n})^{\frac{d+4}{d}})$. Therefore, motivated by \cite{Wu_Wu:2017, wu2018locally}, we propose the regularizer $c=n\epsilon^{d+3}$ or $n(\frac{K}{n})^{\frac{d+3}{d}}$ in our theoretical analysis of the BI. This choice of $c$ is smaller than the first $d$ eigenvalues of  $C_{n,k}$ thereby enabling the BI to preserve the intrinsic geometric information of the manifold. Meanwhile, it dominates the remaining eigenvalues of  $C_{n,k}$, eliminating the influence of both the sample distribution and extrinsic geometric information. As a result, the BI will satisfy the desired properties.  As shown in Section \ref{local covariance matrix eps scheme} of the Supplementary Material and Section \ref{Theoretical Analysis section},  the eigenvalues of $C_{n,k}$ depend on the p.d.f. of the data. Thus, we propose the following more practical choice of $c$, which incorporates the distribution of the samples on the manifold. Recall that ${\lambda}_{n,1}(z_k) \geq {\lambda}_{n,2}(z_k) \geq \cdots \geq {\lambda}_{n,p}(z_k)\geq 0$ are the eigenvalues of $C_{n,k}$. If $d<p$ and $\lambda_{n,d+1}(z_i) \not=0$ for some $i$,
\begin{align}\label{selection of the regularizer 0}
c=\frac{1}{n}\sqrt{ \big(\sum_{i=1}^n \lambda_{n,d}(z_i)\big)\big(\sum_{i=1}^n \lambda_{n,d+1}(z_i)\big)}.
\end{align}
Otherwise, $C_{n,k}$ has only $d$ nonzero eigenvalues for all $k$. Then, we choose 
\begin{align}\label{selection of the regularizer 1}
c=\frac{\mathfrak{s}}{n}\big(\sum_{i=1}^n \lambda_{n,d}(z_i)\big),
\end{align}
where $\mathfrak{s}<1$ is smaller than $\epsilon$ or $(\frac{K}{n})^{\frac{1}{d}}$. Note that based on our analysis of the eigenvalues of $C_{n,k}$, the above choices of $c$ are of order $O(n\epsilon^{d+3})$ or $O\big(n(\frac{K}{n})^{\frac{d+3}{d}}\big)$) and exceed $\frac{1}{n}\sum_{i=1}^n \lambda_{n,d+1}(z_i)$.  We illustrate the performance of the BI on a unit disk based on the suggested $c$ in Figure \ref{fig: BI}.}

\subsection{Selection of the scale parameters} \label{selection of eps K}
{In Section \ref{Theoretical Analysis section}, we provide the bias and variance analyses of the BI under the KNN scheme. Suppose $c=n(\frac{K}{n})^{\frac{d+3}{d}}$.  The bias and variance errors of the BI are $(\frac{K}{n})^{\frac{1}{d}}$ and $\sqrt{\frac{\log(n)}{K}}$ respectively. Ignoring the $\log(n)$ factor, we propose to choose $K$ by balancing these errors, i.e. $K=\lceil n^{\frac{1}{1+d/2}} \rceil$. This choice of $K$ incorporates information about the distribution of $\mathcal{X}$ and the geometry of the underlying manifold $\iota(M)$. 

Under the $\epsilon$-radius ball scheme, suppose  $c=n\epsilon^{d+3}$. Then, the bias and variance errors of the BI are $\epsilon$ and $\frac{\sqrt{\log (n)}}{n^{1/2}\epsilon^{d/2}}$ respectively. Balancing these errors leads to a choice of $\epsilon$ that depends only on $n$. However,  unlike the KNN scheme, such choice of $\epsilon$ does not vary with respect to the distribution of $\mathcal{X}$ and the geometry of $\iota(M)$.  For example, for a fixed $n$, this choice of $\epsilon$ remains unchanged when the size of the underlying manifold is scaled. Therefore, we propose selecting $\epsilon$ based on $K$ in the KNN scheme. For any $z_k \in \mathcal{X}=\{z_i\}_{i=1}^n$, let $K=\lceil n^{\frac{1}{1+d/2}} \rceil$, and let $R_k$ represent the $K$-distance of $z_k$, as defined in Definition \ref{nn definition}. Based on the analysis in Section \ref{Theoretical Analysis section}, $R_k$ is small when $z_k$ is far from the boundary of $\iota(M)$ and points are densely distributed around $z_k$. In contrast, $R_k$ is large when $z_k$ is near the boundary of $\iota(M)$ or points are sparse around $z_k$. Let $\epsilon_{min} = \text{median}_{k=1, \ldots, n} R_k$ and $\epsilon_{max} = \max_{k=1, \ldots, n} R_k$. Since we expect $\epsilon$ to be sufficiently large such that the $\epsilon$ neighborhood of $z_k$ captures enough geometric information of the manifold, especially when $z_k$ is near the boundary of $\iota(M)$ or points are sparse around $z_k$, we propose selecting $\epsilon$ within the range between $\epsilon_{min}$ and $\epsilon_{max}$.  In Figure \ref{fig: BI}, we illustrate the performance of the algorithm on a unit disk, using the proposed scale parameters and the suggested regularizer from the previous section. In Figure \ref{fig: BI 2}, we demonstrate the performance of the algorithm on a unit disk, using various values of  $\epsilon$ between $\epsilon_{min}$ and $\epsilon_{max}$ along with the regularizer recommended in the previous section. For each choice of $\epsilon$,  the algorithm successfully identifies the points within a narrow, uniform collar region of the boundary, with the width of the region increasing as $\epsilon$ increases.}
\begin{figure}[ht]
 \centering
 \includegraphics[scale=0.48]{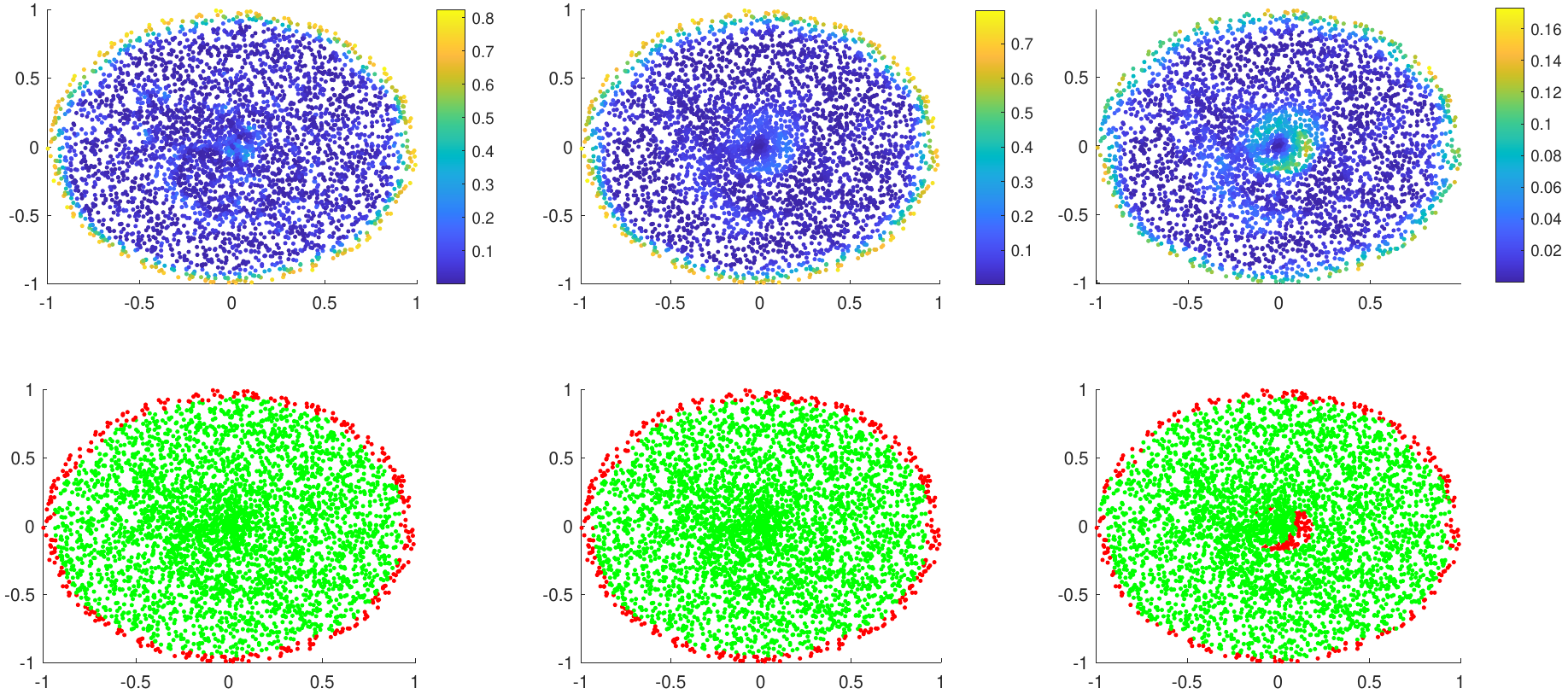}
 \caption{{Top Panels: The plots of the BIs constructed over $n=4000$ non-uniformly sampled points on the unit disk. Bottom Panels: The plots of the detected boundary points through the BIs in the corresponding top panels.  Left Panels: The BI is constructed in the KNN scheme with $K=\lceil \sqrt{4000} \rceil=64$, as proposed in Section \ref{selection of eps K}. Since $d=p=2$,  the regularizer is $c=\frac{1}{100n}\big(\sum_{i=1}^n \lambda_{n,2}(z_i)\big)$,  as specified in \eqref{selection of the regularizer 1}. Middle Panels: The BI is constructed in the $\epsilon$-radius ball scheme.  $\epsilon_{min}=0.130$ and $\epsilon_{max}=0.231$ with $\epsilon=0.180$ selected. The regularizer is $c=\frac{1}{100n}\big(\sum_{i=1}^n \lambda_{n,2}(z_i)\big)$.  Right Panels: The BI is constructed with $\epsilon=0.180$ and a large regularizer $c=\frac{1}{n}\sum_{i=1}^n \lambda_{n,2}(z_i)$ is selected. Then, the values of the BI are not distinguishable between the boundary and interior points, resulting in some boundary points not being identified and some interior points being incorrectly identified as boundary points.  For the same $\epsilon$, when the regularizer is too small, for example when $c$ is extremely close to $0$, the BI is not stable.}}
 \label{fig: BI}
\end{figure}
\begin{figure}[ht]
 \centering
 \includegraphics[scale=0.52]{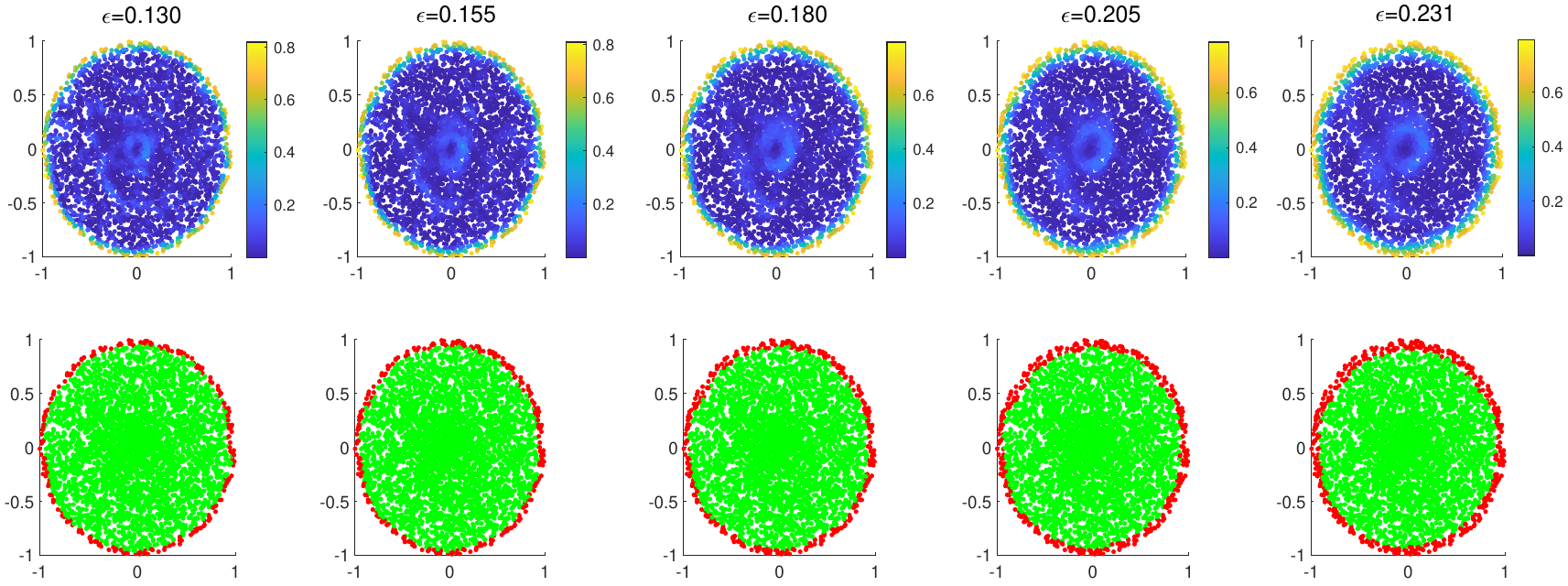}
 \caption{{Top Panels: The plots of the BIs constructed in the $\epsilon$-radius ball scheme over $n=4000$ non-uniformly sampled points on the unit disk.  As outlined in Section \ref{selection of eps K}, $\epsilon_{min}=0.130$ and $\epsilon_{max}=0.231$. The BIs are constructed with $\epsilon=0.130, 0.155, 0.180, 0.205, 0.231$ respectively.  Since in this case $d=p=2$,  the regularizer is $c=\frac{1}{100n}\big(\sum_{i=1}^n \lambda_{n,2}(z_i)\big)$,  where $\lambda_{n,2}(z_i)$ is the second largest eigenvalue of $C_{n,i}$ in the corresponding $\epsilon$-radius ball scheme, as specified in \eqref{selection of the regularizer 1}. Bottom Panels: The plots of the detected boundary points through the BIs in the corresponding top panels. }} \label{fig: BI 2}
\end{figure}

\section{Theoretical Analysis}\label{Theoretical Analysis section}
{In this section, we delve into the theoretical analysis of the local covariance matrix in the KNN scheme,  as well as the BI in both the $\epsilon$-radius ball scheme and the KNN scheme.}  To start, we provide an introduction of the essential geometric preliminaries.
\subsection{Preliminary definitions}
Suppose $g$ is the Riemannian metric on $M$. Denote $d_g(\cdot,\cdot)$ to be the distance function on $M$ associated with $g$. We define the following concepts around the boundary of $M$. 
\begin{definition}\label{preliminary def on boundary} \
\begin{enumerate}[label=(\arabic*)]
{\item For any $x \in M$,  the distance to the boundary function is defined as
\begin{equation}\label{Definition tildeepsilon gamma}
\tilde{\epsilon}(x) =d_g(x,\partial M)=\min_{y \in \partial M} d_g(y,x). 
\end{equation}
\item
For $\epsilon>0$, we define the $\epsilon$-neighborhood of $\partial M$ as 
\begin{equation*}
M_{\epsilon}=\{x \in M | d_g(x, \partial M) < \epsilon\}.
\end{equation*}
Denote $x_\partial:=\arg\min_{y \in  \partial M} d_g(y,x)$. When $x \in M_{\epsilon}$ and $\epsilon$ is sufficiently small, due to the smoothness of the boundary, such $x_\partial$ is unique and we have  $0\leq \tilde{\epsilon}(x) <\epsilon$.  }
\item
For any $x \in \partial M$, let $\gamma_x(t)$ be the unit speed geodesic such that $\gamma_x(0)=x$ and $\gamma\,'_x(0)$ is the unit inward normal vector {of $\partial M$} at $x$. 
\end{enumerate}
\end{definition}

The distance to the boundary function $d_g(x,\partial M)$ satisfies the following properties. 
\begin{proposition}\label{properties of the distance to boundary function}
Under Assumption \ref{main assumption 1}, $d_g(x,\partial M)$ is a continuous function on $M$ and differentiable almost everywhere on $M$. When $\epsilon$ is small enough, $d_g(x,\partial M)$ is smooth on $M_\epsilon$.
\end{proposition}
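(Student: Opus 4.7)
The plan is to address the three claims separately, in order of increasing difficulty. For continuity I would first establish that $d_g(\cdot, \partial M)$ is $1$-Lipschitz with respect to $d_g$. This follows from the triangle inequality: for any $x, y \in M$ and $z \in \partial M$, $d_g(x, z) \leq d_g(x, y) + d_g(y, z)$; taking the infimum over $z \in \partial M$ and then symmetrizing gives $|d_g(x, \partial M) - d_g(y, \partial M)| \leq d_g(x, y)$. Continuity on the compact manifold $(M, d_g)$ is then immediate.

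For differentiability almost everywhere, I would invoke Rademacher's theorem chart-by-chart. Because $M$ is compact, it is covered by finitely many smooth coordinate charts; in each chart the Riemannian distance is comparable to the Euclidean distance, so the pull-back of $d_g(\cdot, \partial M)$ is locally Lipschitz in the Euclidean sense. Rademacher's theorem provides differentiability Lebesgue almost everywhere in each chart, and a finite union argument together with the absolute continuity of the Riemannian volume measure with respect to Lebesgue measure in a chart yields differentiability almost everywhere on $M$.

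The main work lies in the smoothness assertion near $\partial M$, for which the strategy is the tubular neighborhood theorem applied to the smooth compact submanifold $\partial M$ of $M$. After extending $M$ slightly beyond its boundary to a smooth manifold without boundary, I would consider the normal exponential map
\[
E : \partial M \times [0, \delta) \to M, \qquad E(y,t) = \gamma_y(t),
\]
with $\gamma_y$ the unit inward normal geodesic as in Definition \ref{preliminary def on boundary}. Since $\partial M$ is smooth and compact, $E$ is a diffeomorphism onto an open neighborhood $U$ of $\partial M$ in $M$ for some $\delta > 0$. The hard step is to identify $d_g(x, \partial M)$ with the second coordinate $t$ on $U$: the inequality $d_g(E(y,t), \partial M) \leq t$ is automatic because $\gamma_y$ has unit speed, whereas the reverse inequality requires the Gauss lemma applied to the normal exponential map, combined with the injectivity of $E$ on $\partial M \times [0,\delta)$, to rule out any shorter competing curve from $x = E(y,t)$ to $\partial M$. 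Once $d_g(x, \partial M) = t = \pi_2(E^{-1}(x))$ is established on $U$, the function is smooth there as a composition of smooth maps, and choosing $\epsilon < \delta$ so that $M_\epsilon \subset U$ completes the proof.
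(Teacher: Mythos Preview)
The paper does not supply a proof of this proposition; it is stated as a standard fact about the distance-to-boundary function on a compact Riemannian manifold with smooth boundary and is then used later (e.g.\ in the proof of Theorem~\ref{prop BI 1}) without further justification. Your proposal is correct and is precisely the standard argument one would give: the $1$-Lipschitz estimate for continuity, Rademacher's theorem in charts for almost-everywhere differentiability, and the normal exponential map (equivalently, the existence of a smooth collar or tubular neighborhood of $\partial M$) together with the Gauss lemma to identify $d_g(\cdot,\partial M)$ with the collar coordinate $t$ on $M_\epsilon$. There is nothing to compare against in the paper, and no gap in your outline.
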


Recall that both the BI and the local covariance matrix are invariant under translation. Moreover, the BI is invariant under orthogonal transformation. Hence, we introduce the following assumption to simplify the proofs and the statements of of the main results.

\begin{assumption}\label{assumption tangent space} \
For each fixed $x_k$, we translate and rotate $\iota(M)$ in $\mathbb{R}^p$ as follows.
\begin{enumerate}[label=(\arabic*)]
\item
We translate $\iota(M)$ in $\mathbb{R}^p$ so that $\iota(x_k)=0$.
\item
Denote $\{e_i\}_{i=1}^p$ to be the canonical basis of $\mathbb{R}^p$, where $e_i$ is a unit vector with $1$ in the $i$-th entry. Denote $\iota_*T_{x}M$ to be the embedded tangent space of $\iota(M)$ at $\iota(x)$ in $\mathbb{R}^p$ and $(\iota_*T_{x}M)^\bot$ be the normal space at $\iota(x)$. Fix any $\iota(x_k)=0 \in \iota(M)$, we assume that {$\mathbb{R}^p$} has been properly rotated so that $\iota_*T_{x_k}M$ is spanned by $e_1,\ldots,e_d$.
\item
When $x_k \in M_{\epsilon}$ and $\epsilon$ is sufficiently small, let $x_{\partial,k}$ be the unique point on $\partial M$ which realizes the distance from $x_{k}$ to $\partial M$ {defined in (2)} of Definition \ref{preliminary def on boundary}. {Let $\gamma_{x_{\partial,k}}(t)$ represent} the unit speed geodesic with $\gamma_{x_{\partial,k}}(0)=x_{\partial,k}$ and $\gamma_{x_{\partial,k}}(\tilde{\epsilon}(x_k))=x_k$.  We further rotate {$\mathbb{R}^p$} so that 
$$e_d=\iota_* \frac{d}{dt}\gamma_{x_{\partial,k}}(\tilde{\epsilon}(x_k)).$$
In particular, when $x \in \partial M$, $e_d$ is the inward normal direction of $\partial \iota(M)$ at $\iota(x)$.
\end{enumerate}
\end{assumption}

At last, we introduce the following functions that will be used in the main results.  

\begin{definition} \label{sigmas summary} 
Let $|S^m|$ denote the volume of the $m$-dimensional unit sphere. We define the following functions on $[0,\infty)$, where the constant $\frac{|S^{d-2}|}{d-1}$ is defined to be $1$ when $d=1$.
\begin{align*}
\sigma_{0}(t, \epsilon)&:= \left\{
\begin{array}{ll}
\frac{|S^{d-1}|}{2d}+\frac{|S^{d-2}|}{d-1}\int_{0}^{\frac{t}{\epsilon}} (1-x^2)^{\frac{d-1}{2}}dx& \mbox{ for }0 \leq t \leq \epsilon\\
\frac{|S^{d-1}|}{d} & \mbox{ for }t > \epsilon
\end{array}\right.\\
\sigma_{1,d}(t,\epsilon)&:=\left\{\begin{array}{ll}
-\frac{|S^{d-2}|}{d^2-1}(1-(\frac{t}{\epsilon})^2)^{\frac{d+1}{2}}&\mbox{ for }0 \leq t \leq \epsilon\\
0&\mbox{ otherwise}
\end{array}\right.\nonumber\\
\sigma_{2}(t,\epsilon)&:=
\left\{
\begin{array}{ll}
\frac{|S^{d-1}|}{2d(d+2)}+\frac{|S^{d-2}|}{d^2-1}\int_{0}^{\frac{t}{\epsilon}} (1-x^2)^{\frac{d+1}{2}} dx& \mbox{ for }0 \leq t \leq \epsilon\\
\frac{|S^{d-1}|}{d(d+2)} &\mbox{ otherwise}
\end{array}\right.\nonumber\\
\sigma_{2,d}(t,\epsilon)&:=\left\{
\begin{array}{ll}
\frac{|S^{d-1}|}{2d(d+2)}+\frac{|S^{d-2}|}{d-1}\int_{0}^{\frac{t}{\epsilon}} (1-x^2)^{\frac{d-1}{2}}x^2 dx&\mbox{ for } 0 \leq t \leq \epsilon\\
\frac{|S^{d-1}|}{d(d+2)} & \mbox{ otherwise}
\end{array}\right.\nonumber
\end{align*}
\end{definition}

Note that {the function $\sigma_{1,d}(t,\epsilon)$ is bounded} from {above by $0$ and below by a constant} depending on $d$. The functions $\sigma_{0}(t, \epsilon)$, $\sigma_{2}(t,\epsilon)$, and $\sigma_{2,d}(t,\epsilon)$ are bounded from below and above by constants depending on $d$. These functions are smooth everywhere except at $t=\epsilon$. At $t=\epsilon$, they are continuous and their level of smoothness depends on $d$.

\subsection{Analysis of the boundary indicator in the $\epsilon$-radius ball scheme}
We provide the following bias and variance analyses of the BI under the $\epsilon$-radius ball scheme. The proof of the theorem is in Section \ref{proof prop BI} of the Supplementary Material.

\begin{theorem}\label{prop BI 1}
Under Assumptions \ref{main assumption 1}, \ref{main assumption 2},  and the $\epsilon$-radius ball scheme, suppose the regularizer $c=n\epsilon^{d+3}$ and suppose $\epsilon=\epsilon(n)$ so that $\frac{\sqrt{\log(n)}}{n^{1/2}\epsilon^{d/2+1}}\to 0$ and $\epsilon\to 0$ as $n\to \infty$. When $\epsilon$ is small enough, with probability greater than $1-2n^{-2}$,  for all $k=1,\ldots,n$,
\begin{equation*}
B_k=B(x_k)+O(\epsilon)+ O\Big(\frac{\sqrt{\log (n)}}{n^{1/2}\epsilon^{d/2}}\Big)\,,
\end{equation*}
where the constants in $O(\epsilon)$ and $O\Big(\frac{\sqrt{\log(n)}}{n^{1/2}\epsilon^{d/2}}\Big)$ depend on $P_m$,  the $C^1$ norm of $P$ and the second fundamental form of $\iota(M)$.

The function $B(x):M\to \mathbb{R}$ has the following properties:
\begin{enumerate}[label=(\arabic*)]
\item $B(x)= \frac{(\sigma_{1,d}(\tilde{\epsilon}(x),\epsilon))^2}{\sigma_0(\tilde{\epsilon}(x),\epsilon) \sigma_{2,d}(\tilde{\epsilon}(x),\epsilon)}$. Hence, $B(x)$ is always continuous on $M$. When $\epsilon$ is small enough, it is smooth except at the set $\{x \in M | d_g(x, \partial M) = \epsilon\}$.
\item $B(x)=\frac{4d^2(d+2)|S^{d-2}|^2}{(d^2-1)^2|S^{d-1}|^2}$ when $x\in \partial M$.
\item $B(x)=0$ when $x\in M\setminus M_\epsilon$.
\item For $x_1, x_2 \in \partial M$, $B(\gamma_{x_1}(t))=B(\gamma_{x_2}(t))$ for $0 \leq t \leq \epsilon$.
\item Fix any $x \in \partial M$, $B(\gamma_{x}(t))$ is a monotone decreasing function of $t$ for $0 \leq t \leq \epsilon$. Moreover 
$\frac{d^2(d+2)|S^{d-2}|^2}{(d^2-1)^2|S^{d-1}|^2} (1-(\frac{t}{\epsilon})^2)^{d+1} \leq  B(\gamma_{x}(t)) \leq \frac{4d^2(d+2)|S^{d-2}|^2}{(d^2-1)^2|S^{d-1}|^2} (1-(\frac{t}{\epsilon})^2)^{d+1}$ for $0 \leq t \leq \epsilon$.
\item For any $0<\tau<\frac{4d^2(d+2)|S^{d-2}|^2}{(d^2-1)^2|S^{d-1}|^2}$, $B^{-1}(\tau, \infty)=M_r$ with $M_r \subset M_{\epsilon}$.
\end{enumerate}
\end{theorem}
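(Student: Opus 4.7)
The plan is to start from the explicit representation
\begin{equation*}
B_k=\frac{(G_{n,k}\boldsymbol{1}_{N_k})^\top \mathcal{I}_c(C_{n,k})(G_{n,k}\boldsymbol{1}_{N_k})}{N_k}
\end{equation*}
given by Proposition \ref{invariant B_k} and analyze the three ingredients $G_{n,k}\boldsymbol{1}_{N_k}$, $\mathcal{I}_c(C_{n,k})$, and $N_k$ separately. First I would carry out a bias/variance analysis of the vector $G_{n,k}\boldsymbol{1}_{N_k}=\sum_i(z_{k,i}-z_k)$ and the count $N_k$ in the same spirit as Theorem \ref{local covariance epsilon}. Using normal coordinates at $x_k$, the alignment of $e_d$ with the inward boundary normal from Assumption \ref{assumption tangent space}, and the uniform concentration estimates from \cite{wu2018locally}, I expect to obtain, with probability at least $1-n^{-2}$ and uniformly in $k$,
\begin{align*}
\tfrac{1}{n}N_k&=P(x_k)\,\sigma_0(\tilde\epsilon(x_k),\epsilon)\,\epsilon^d+O(\epsilon^{d+1})+O\bigl(\sqrt{\tfrac{\log n}{n}}\,\epsilon^{d/2}\bigr),\\
\tfrac{1}{n}G_{n,k}\boldsymbol{1}_{N_k}&=P(x_k)\,\sigma_{1,d}(\tilde\epsilon(x_k),\epsilon)\,\epsilon^{d+1}e_d+O(\epsilon^{d+2})+O\bigl(\sqrt{\tfrac{\log n}{n}}\,\epsilon^{d/2+1}\bigr),
\end{align*}
the leading $e_d$ direction being isolated because the tangent components in $e_1,\ldots,e_{d-1}$ vanish at first order by symmetry of the domain of integration over $B_\epsilon^M(x_k)$, while the normal-space components are higher order via the second fundamental form.

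Second, I would analyze $\mathcal{I}_c(C_{n,k})$ by plugging the eigenvalue and eigenvector estimates already extracted from Theorem \ref{local covariance epsilon} into definition \eqref{Definition:Irho:Soft}. With the choice $c=n\epsilon^{d+3}$, the shifted spectrum $(\Lambda_{n,k}+cI)^{-1}$ behaves like $\Lambda_{n,k}^{-1}$ on the top $d$ eigendirections (where the eigenvalues are of order $n\epsilon^{d+2}$) and like $c^{-1}=n^{-1}\epsilon^{-(d+3)}$ on the remaining directions in the range of $C_{n,k}$. Because $U_{n,k}$ is block diagonal aligning tangent and normal blocks up to a perturbation of size $O(\epsilon)+O\bigl(\sqrt{\log n/n}\,\epsilon^{-d/2}\bigr)$, the quadratic form $(G_{n,k}\boldsymbol{1}_{N_k})^\top\mathcal{I}_c(C_{n,k})(G_{n,k}\boldsymbol{1}_{N_k})$ is dominated by its $e_d$-contribution with weight $\bigl(n\epsilon^{d+2}P(x_k)\sigma_{2,d}(\tilde\epsilon(x_k),\epsilon)\bigr)^{-1}$. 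Forming the ratio, powers of $n,\epsilon,$ and $P(x_k)$ cancel and the leading term collapses to $\sigma_{1,d}^2/(\sigma_0\sigma_{2,d})$. The error tracking yields bias $O(\epsilon)$ (one order of $\epsilon$ worse than in Theorem \ref{local covariance epsilon} because $G_{n,k}\boldsymbol{1}_{N_k}$ is of order $n\epsilon^{d+1}$ rather than $n\epsilon^{d+2}$) and variance $O\bigl(\sqrt{\log n/n}\,\epsilon^{-d/2}\bigr)$.

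Finally, the six properties of $B(x)$ follow from its closed form together with Definition \ref{sigmas summary}. Continuity at $\tilde\epsilon(x)=\epsilon$ holds because each $\sigma$-function is continuous there and smoothness inside $M_\epsilon$ and outside $M_\epsilon$ follows from smoothness of $\tilde\epsilon$ (Proposition \ref{properties of the distance to boundary function}) and of the $\sigma$'s; evaluating at $\tilde\epsilon=0$ gives the constant $\frac{4d^2(d+2)|S^{d-2}|^2}{(d^2-1)^2|S^{d-1}|^2}$; $\sigma_{1,d}\equiv 0$ outside $M_\epsilon$ gives (3); invariance of $B\circ\gamma_x(t)$ in the choice of $x\in\partial M$ is immediate because $B$ depends on $x$ only through $\tilde\epsilon(x)$, proving (4); in (5), monotonicity in $t$ of $\sigma_{1,d}^2$ (decreasing), $\sigma_0$ (increasing), and $\sigma_{2,d}$ (increasing) yields the monotonicity of $B\circ\gamma_x$, and the two-sided bound follows by bounding $\sigma_0$ and $\sigma_{2,d}$ between their endpoint values and using the explicit formula for $\sigma_{1,d}^2$; (6) then follows from (4), (5), and the definition of $\gamma_x$.

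The main obstacle I anticipate is the uniform-in-$k$ control of the ratio in the transition region where $\tilde\epsilon(x_k)$ is comparable to $\epsilon$: there $\sigma_{1,d}(\tilde\epsilon(x_k),\epsilon)$ is small and of the same order as the additive $O(\epsilon)$ error in $G_{n,k}\boldsymbol{1}_{N_k}$, so one must argue carefully that the bound on $B_k-B(x_k)$ remains additive rather than multiplicative; this uses the fact that $\sigma_0$ and $\sigma_{2,d}$ are bounded uniformly from below. A secondary technical point is the perturbation of $U_{n,k}$ from the ideal tangent-normal frame; because the deviation is $O(\epsilon)+O(\sqrt{\log n/n}\,\epsilon^{-d/2})$, one must confirm that mixing of the $e_d$ leading contribution into directions governed by the other $\sigma$-weights produces only lower-order terms, which amounts to checking that the resulting cross terms are dominated by the stated error rates.
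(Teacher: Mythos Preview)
Your proposal is correct and reaches the same conclusion as the paper, but the organization differs in how the numerator $(G_{n,k}\boldsymbol{1}_{N_k})^\top \mathcal{I}_c(C_{n,k})(G_{n,k}\boldsymbol{1}_{N_k})$ is handled. The paper does not analyze $\mathcal{I}_c(C_{n,k})$ separately via the eigen-perturbation derived from Theorem~\ref{local covariance epsilon}; instead it rewrites the numerator as $\mathbf{T}_{n,x_k}^\top G_{n,k}\boldsymbol{1}_{N_k}$, controls the variance of this \emph{entire} product by a single lemma from \cite{wu2018locally} that replaces it with $\mathbf{T}(x_k)^\top\mathbb{E}[(\iota(X)-\iota(x_k))\chi]$, and then invokes a ready-made expansion of the population augmented vector $\mathbf{T}(x)$ (again from \cite{wu2018locally}) which already packages the effect of the regularized inverse: $\mathbf{T}(x)=\frac{\sigma_{1,d}}{\sigma_{2,d}}\epsilon^{-1}e_d+\epsilon^{-1}(\text{normal-space terms})+O(1)$. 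The orthogonality $e_d\perp(\iota_*T_xM)^\perp$ then kills the normal-space contribution in one line when paired with $\mathbb{E}[(\iota(X)-\iota(x))\chi]=P(x)\sigma_{1,d}\epsilon^{d+1}e_d+O(\epsilon^{d+2})$. Your route---splitting $v^\top A v$ into separate estimates for $v=G_{n,k}\boldsymbol{1}_{N_k}$ and $A=\mathcal{I}_c(C_{n,k})$---is more self-contained and avoids quoting the $\mathbf{T}(x)$ machinery, at the cost of having to track the eigenvector perturbation of $U_{n,k}$ explicitly (the ``secondary technical point'' you flag). The paper's route is shorter precisely because that perturbation work is already absorbed into the cited expansion of $\mathbf{T}(x)$. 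Your treatment of properties (1)--(6), and of the transition-region obstacle (using the uniform lower bound on $\sigma_0$ and $\sigma_{2,d}$ to keep the error additive), matches the paper's reasoning.
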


We discuss the implications of the above results regarding the BI in the $\epsilon$-radius ball scheme. By  (2) and (4), $B(x)$ remains constant and {attains its maximum} on $\partial M$. Additionally, (4) and (5) indicate that $B(x)$ decreases monotonically at the same speed along any geodesic perpendicular to $\partial M$. Therefore, according to (3),  the function $B(x)$ behaves like a bump function, concentrating on $\partial M$ and vanishing in $M \setminus  M_{\epsilon}$. 


Suppose $\epsilon$ and $n$ satisfy the conditions in Theorem \ref{prop BI 1}. For $n$ large enough, with high probability, we have {$|B_k-B(x_k)|=O(\epsilon)+ O\Big(\frac{\sqrt{\log (n)}}{n^{1/2}\epsilon^{d/2}}\Big)$, which is small enough}. Thus, when $x_k$ is near the boundary, $B_k$ approximates an order $1$ constant. Specifically, from \eqref{proof prop1 main}, we have $B_k=\frac{ \boldsymbol{1}_{N_k} ^\top G_{n,k}^\top \mathcal{I}_c(C_{n,k}) G_{n,k}\boldsymbol{1}_{N_k}}{N_k}$. The denominator $N_k$ acts like the $0-1$ kernel density estimator, eliminating the impact of the non-uniform distribution of the samples and ensuring that $B_k$ remains close to a constant near the boundary. Refer to Lemma \ref{Prop 1 Lemma 1.1} in the Supplementary Material for a detailed discussion. Additionally, refer to Proposition \ref{strong uniform consistency of  KDE} in the Supplementary Material for a strong uniform consistency result of kernel density estimation through $0-1$ kernel on a manifold with boundary. When $x_k \in M \setminus  M_{\epsilon}$, $B_k$ is of order {$O(\epsilon)+O\Big(\frac{\sqrt{\log (n)}}{n^{1/2}\epsilon^{d/2}}\Big)$. Therefore, by statement (6) in Theorem \ref{prop BI 1}, we can select a threshold on $B_k$ to identify points in $\iota(M_r)$ for some $M_r \subset M_\epsilon$. Moreover, as $n$ increases, choosing a smaller $\epsilon$ reduces the error between $B_k$ and $B(x_k)$ and decreases the size of $M_\epsilon$. Hence, larger $n$ enables more accurate detection of boundary points.}

\subsection{Analyses of the boundary indicator and the local covariance matrix in the KNN scheme}\label{Analysis KNN scheme 11}
We start with the following definition.
\begin{definition}\label{def of R(x)}
Let $B_{a}^{\mathbb{R}^p}(z)$ be the $p$-dimensional closed ball of radius $a$ centered at $z$ in $\mathbb{R}^p$. {Let $\mathcal{X}=\{z_i\}_{i=1}^n \subset \mathbb{R}^p$.  Under Assumption \ref{main assumption 1},} for any $x \in M$, define $N_a(x)=|B_{a}^{\mathbb{R}^p}(\iota(x)) \cap \mathcal{X}|$.  We define the following radius at $x$ associated with $K$: 
$$R(x)=\inf_a \{a>0, N_a(x) \geq K+1\}.$$
\end{definition}
Then, $R(x)$ has the following properties. The proof of the proposition is in Section \ref{proof prop BI 2} of the Supplementary Material. 
\begin{proposition}\label{continuity of R}  {Let $\mathcal{X}=\{z_i\}_{i=1}^n \subset \mathbb{R}^p$. Under Assumption \ref{main assumption 1},  we have}
\begin{enumerate}[label=(\arabic*)]
\item 
$R(x)$ is a continuous function on $M$. 
\item
For any $x \in \partial M$, suppose $\gamma_x(t)$ is {length minimizing} on $[0, t_2]$. Then $R(x)$ is Lipschitz along $\gamma_x(t)$ for $0 \leq t \leq t_2$. Specifically,
if $t_1<t_2$, then $|R(\gamma_x(t_1)) -R(\gamma_x(t_2))|  \leq t_2-t_1$. Moreover, $\frac{t_1}{R(\gamma_x(t_1))}<\frac{t_2}{R(\gamma_x(t_2))}$  whenever $t_1<R(\gamma_x(t_1))$.
\end{enumerate}
\end{proposition}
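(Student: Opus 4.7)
The plan is to prove a stronger statement than continuity: I will show that $R$ is globally $1$-Lipschitz with respect to the Euclidean distance $\|\iota(x) - \iota(y)\|_{\mathbb{R}^p}$, which immediately yields continuity in (1), and then pass to the geodesic setting for (2) using the fact that an isometric embedding is non-expanding in the ambient metric. The key observation throughout is that $\mathcal{X}$ is a fixed finite set, so $N_a(x)$ is right-continuous and nondecreasing in $a$; in particular, the infimum defining $R(x)$ is attained, i.e. $N_{R(x)}(x) \geq K+1$.

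For part (1), I fix $x, y \in M$ and set $\delta := \|\iota(x) - \iota(y)\|_{\mathbb{R}^p}$. The triangle inequality in $\mathbb{R}^p$ gives the inclusion
\begin{equation*}
B_{a}^{\mathbb{R}^p}(\iota(x)) \subset B_{a+\delta}^{\mathbb{R}^p}(\iota(y))
\end{equation*}
for every $a \geq 0$, hence $N_{a+\delta}(y) \geq N_{a}(x)$. Taking $a = R(x)$, the right-hand side is at least $K+1$, so by the definition of the infimum $R(y) \leq R(x) + \delta$. Swapping the roles of $x$ and $y$ yields $|R(x) - R(y)| \leq \delta$, and continuity of $R$ on $M$ follows since $\iota$ is smooth.

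For part (2), because $\gamma_x$ is minimizing on $[0, t_2]$ we have $d_g(\gamma_x(t_1), \gamma_x(t_2)) = t_2 - t_1$ for $0 \leq t_1 < t_2$, and since $\iota$ is isometric the Euclidean chord length satisfies $\|\iota(\gamma_x(t_1)) - \iota(\gamma_x(t_2))\|_{\mathbb{R}^p} \leq t_2 - t_1$. Applying the Lipschitz bound from part (1) along the curve gives $|R(\gamma_x(t_1)) - R(\gamma_x(t_2))| \leq t_2 - t_1$. For the strict monotonicity claim, I use the one-sided Lipschitz bound $R(\gamma_x(t_2)) \leq R(\gamma_x(t_1)) + (t_2 - t_1)$ and multiply by $t_1$:
\begin{equation*}
t_1\, R(\gamma_x(t_2)) \leq t_1\, R(\gamma_x(t_1)) + t_1(t_2 - t_1) < t_1\, R(\gamma_x(t_1)) + (t_2 - t_1)\, R(\gamma_x(t_1)) = t_2\, R(\gamma_x(t_1)),
\end{equation*}
where the strict inequality uses the hypothesis $t_1 < R(\gamma_x(t_1))$ together with $t_2 > t_1$. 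Dividing by $R(\gamma_x(t_1)) R(\gamma_x(t_2))$ delivers $\frac{t_1}{R(\gamma_x(t_1))} < \frac{t_2}{R(\gamma_x(t_2))}$.

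I do not anticipate a serious obstacle here; the only point requiring care is the attainment $N_{R(x)}(x) \geq K+1$, which rests on $N_a(x)$ being a right-continuous nondecreasing step function of $a$ with finitely many jumps (since $\mathcal{X}$ is finite). If this were not attained, continuity would still follow from the same argument applied to $a$ slightly larger than $R(x)$ and then letting $a \downarrow R(x)$, so the argument is robust.
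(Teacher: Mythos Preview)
Your proof is correct and follows essentially the same approach as the paper: both establish the $1$-Lipschitz bound $|R(x)-R(y)|\le\|\iota(x)-\iota(y)\|_{\mathbb{R}^p}$ via the ball inclusion $B_{R(x)}^{\mathbb{R}^p}(\iota(x))\subset B_{R(x)+\delta}^{\mathbb{R}^p}(\iota(y))$, then bound the Euclidean chord by the geodesic length $t_2-t_1$. For the strict monotonicity of $t/R(\gamma_x(t))$, your single chain of inequalities (multiplying the one-sided Lipschitz bound by $t_1$) is a slightly cleaner packaging than the paper's version, which writes out the difference $\frac{t_2}{R(\gamma_x(t_2))}-\frac{t_1}{R(\gamma_x(t_1))}$ as a quotient and then splits into the cases $R(\gamma_x(t_2))\le R(\gamma_x(t_1))$ and $R(\gamma_x(t_2))>R(\gamma_x(t_1))$; the underlying content is identical.
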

Since $R(x)$ is a continuous function on $M$ and $M$ is compact, $R(x)$ attains a maximum with 
\begin{align*}
R^*=\max_{x \in M} R(x).
\end{align*}
Recall the function $\sigma_0$ in Definition \ref{sigmas summary}. For a fixed $t \geq 0$, let
$$V(t, r)=\sigma_0(t, r) r^d.$$
Let $(x_1, x_2, \cdots, x_d)$ denote the coordinates in $\mathbb{R}^d$. The function $V(t, r)$ represents the volume of the region $\mathcal{R}_{t,r}$ between the ball of radius $r$ centered at the origin in $\mathbb{R}^d$ and the hyperplane $x_d=t$. Specifically, when $r \leq t$, $V(t, r)=\frac{|S^{d-1}|}{d} r^d$ is the volume of the ball of radius $r$. Note that $V(t,r):[0, \infty) \rightarrow [0, \infty)$ is a continuous, monotone increasing function of $r$ for a fixed $t$, and $V(t,r)$ is differentiable except at $r=t$. Hence, $s=V(t,r)$ has an inverse $r=U(t,s)$, where $U(t,s):[0, \infty) \rightarrow [0, \infty)$ is also monotone increasing. Specifically, $U(t,s)=(\frac{d s}{|S^{d-1}|})^{\frac{1}{d}}$ for $s<\frac{|S^{d-1}|}{d} t^d$.  Moreover, by the inverse function theorem, $U(t,s)$ is differentiable everywhere except at $s=0$ for $d>1$ and at $s=\frac{|S^{d-1}|}{d} t^d$.  Suppose $\tilde{\epsilon}(x)$ is the distance from $x \in M$ to $\partial M$ as defined in \eqref{Definition tildeepsilon gamma}. Let  
\begin{align*}
\tilde{R}(x)=U(\tilde{\epsilon}(x), \frac{K+1}{P(x)n}).
\end{align*}
We show that $\tilde{R}(x)$ is an estimator of $R(x)$. The proof of the proposition is in Section \ref{proof prop BI 2} of the Supplementary Material.

\begin{proposition}\label{tildeR and R}
{Under Assumptions \ref{main assumption 1} and \ref{main assumption 2},} suppose we have $\frac{K}{n} \rightarrow 0$ and $\frac{\log(n)}{K} (\frac{n}{K})^{2/d}\to 0$ as $n \to \infty$.  Then, for all $x \in M$, with probability greater than $1-2n^{-2}$,  we have $R(x)=\tilde{R}(x)(1+O((\frac{K}{n})^{\frac{1}{d}}))$, where the constant in $O((\frac{K}{n})^{\frac{1}{d}})$ depends on $d$, $C^1$ norm of $P$, and $P_m$.  Moreover, 
$$\frac{1}{2} (\frac{d}{|S^{d-1}|})^{\frac{1}{d}}(\frac{K}{P_M n})^{\frac{1}{d}} \leq R^* \leq 3 (\frac{2d}{|S^{d-1}|})^{\frac{1}{d}}(\frac{K}{P_m n})^{\frac{1}{d}}$$.
\end{proposition}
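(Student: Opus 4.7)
The plan is to characterize $R(x)$ through the (almost surely valid) identity $N_{R(x)}(x) = K+1$, and then to approximate the empirical count $N_a(x)$ by its expectation $n p_a(x)$, where $p_a(x) := \mathsf{P}[X \in B_a^{\mathbb{R}^p}(\iota(x))]$. A further volume expansion gives $p_a(x) \approx P(x) V(\tilde{\epsilon}(x), a)$, and inverting through $U$ at $s = (K+1)/(n P(x))$ then produces $R(x) \approx \tilde R(x)$. Rigorously, the argument decomposes into a bias estimate for $p_a(x)$ versus $P(x) V(\tilde{\epsilon}(x), a)$, a variance estimate uniform in $x$ for $N_a(x)$ versus $n p_a(x)$, and a sensitivity computation that propagates relative errors in $V$ through $U$ with a factor of $1/d$ (since $V(t, r) \sim r^d$).

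For the bias, I would work in normal coordinates on $M$ at $x$, rotated as in Assumption \ref{assumption tangent space} so that $e_d$ points along the inward normal at the nearest boundary point. In these coordinates the preimage $\iota^{-1}(B_a^{\mathbb{R}^p}(\iota(x)) \cap \iota(M))$ matches the model region $\mathcal{R}_{\tilde\epsilon(x),a}$ of volume $V(\tilde\epsilon(x), a)$ up to corrections of order $a^2$, coming from the metric expansion, the second fundamental form of $\iota(M)$ in $\mathbb{R}^p$, and the second fundamental form of $\partial M$ in $M$. Integrating $P$ and Taylor expanding around $x$ then yields $p_a(x) = P(x) V(\tilde\epsilon(x), a)(1 + O(a))$. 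For the variance, for any fixed $(x, a)$ the count $N_a(x)$ is a sum of $n$ independent Bernoulli($p_a(x)$) indicators, so Bernstein's inequality gives $|N_a(x) - n p_a(x)| \lesssim \sqrt{n p_a(x) \log n} + \log n$ with probability at least $1 - n^{-3}$. A union bound over a polynomial-size $\delta$-net of $M$, together with the monotonicity of $N_a$ in $a$ and the continuity of $R$ inherited from Proposition \ref{continuity of R}, lifts this to a uniform estimate with total probability cost at most $2 n^{-2}$.

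Combining these at $a = R(x)$, where $a \sim (K/n)^{1/d}$, one gets
\begin{equation*}
V(\tilde\epsilon(x), R(x)) \;=\; \frac{K+1}{n P(x)}\,\Bigl(1 + O\bigl(\sqrt{\log n / K}\bigr) + O\bigl((K/n)^{2/d}\bigr)\Bigr).
\end{equation*}
Since $\sigma_0(t, r)$ is pinched between $|S^{d-1}|/(2d)$ and $|S^{d-1}|/d$ uniformly, the inverse satisfies $U(t, s(1+\xi)) = U(t, s)(1 + O(\xi/d))$ for small $\xi$. Under the standing hypotheses $K/n \to 0$ and $\log(n) K^{-1}(n/K)^{2/d} \to 0$, both internal error terms are $o((K/n)^{1/d})$, so transferring them through $U$ yields $R(x) = \tilde R(x)(1 + O((K/n)^{1/d}))$ uniformly in $x$.

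For the bounds on $R^*$, I would exploit $\tfrac{|S^{d-1}|}{2d} r^d \leq V(t, r) \leq \tfrac{|S^{d-1}|}{d} r^d$ to bracket $\tilde R(x)$. The lower bound on $V$ together with $P(x) \geq P_m$ gives $\tilde R(x) \leq (2d(K+1)/(|S^{d-1}| n P_m))^{1/d}$ for every $x$, and combining this with $R(x) \leq \tilde R(x)(1 + O((K/n)^{1/d})) \leq \tfrac{3}{2} \tilde R(x)$ for $n$ large produces the constant $3$ on the upper side. Picking any $x_0 \in M$, the upper bound on $V$ gives $\tilde R(x_0) \geq (d(K+1)/(|S^{d-1}| n P_M))^{1/d}$, and then $R(x_0) \geq \tfrac{1}{2} \tilde R(x_0)$ delivers the constant $1/2$ on the lower side. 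The main obstacle I foresee is maintaining uniformity when $V$ and $U$ are only continuous (not smooth) at the transition stratum $r = \tilde\epsilon(x)$; I would parametrize the collar $M_{2R^*}$ by boundary Fermi coordinates and treat $\sigma_0(\tilde\epsilon(x), r)$ as a monotone function that can be sandwiched above and below on either side of the transition.
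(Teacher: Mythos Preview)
Your plan matches the paper's proof in all essentials: both rest on (i) the volume/bias expansion $p_a(x)=P(x)V(\tilde\epsilon(x),a)+O(a^{d+1})$, (ii) a uniform-in-$x$ concentration bound for $N_a(x)$ around $np_a(x)$, and (iii) a multiplicative sensitivity estimate relating perturbations of $V$ to perturbations of $U$. The paper packages (iii) as the forward inequality $V(t,r(1\pm\delta))\gtrless V(t,r)(1\pm C\delta)$, then fixes a \emph{deterministic} test radius $\tilde R(x)(1\pm\delta)$ and shows that the event $\{R(x)\notin[\tilde R(x)(1-\delta),\tilde R(x)(1+\delta)]\}$ forces a large deviation of $N_{\tilde R(x)(1\pm\delta)}(x)$; your version instead plugs in the random radius $a=R(x)$ and inverts. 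Both are valid once (ii) is uniform over all balls, but the paper's organization avoids any appearance of circularity (you implicitly assume $R(x)\sim(K/n)^{1/d}$ before you have established it), so you may want to phrase your argument that way.

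One concrete slip: the relative bias $p_a(x)/[P(x)V(\tilde\epsilon(x),a)]-1$ is $O(a)=O((K/n)^{1/d})$, not $O((K/n)^{2/d})$, and hence is not $o((K/n)^{1/d})$ as you claim. This is harmless---the target precision is exactly $O((K/n)^{1/d})$, so after passing through $U$ you still land on $R(x)=\tilde R(x)(1+O((K/n)^{1/d}))$---but the sentence ``both internal error terms are $o((K/n)^{1/d})$'' should be corrected. Your derivation of the $R^*$ bounds is the same as the paper's.
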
 

Observe that for any $z_k \in \mathcal{X}$, $C_{n,k}$ constructed through the KNN scheme is equal to the $C_{n,k}$ constructed through the $R(x_k)$-radius ball scheme. Hence, by applying Proposition \ref{tildeR and R}, we provide the bias and variance analyses of the local covariance matrix in the KNN scheme. The proof of theorem is in Section \ref{proof THM BI 2}  of the Supplementary Material.

\begin{theorem}\label{Covariance KNN setup}
Under Assumptions \ref{main assumption 1}, \ref{main assumption 2}, and \ref{assumption tangent space}, let $\frac{1}{n}C_{n,k}$ be the local covariance matrix at $z_k$ constructed in the KNN scheme, where $C_{n,k}$ is defined in \eqref{local covariance matrix}.  Suppose $K=K(n)$ so that  $\frac{K}{n} \rightarrow 0$ and $\frac{\log(n)}{K} (\frac{n}{K})^{2/d}\to 0$ as $n \to \infty$. Then, with probability greater than $1-4n^{-2}$,  for all $k$, 
\begin{align*}
\frac{1}{n} C_{n,k}= {
\begin{bmatrix}
\tilde{M}^{(0)}(x_k) & 0 \\
0& 0 
\end{bmatrix}} 
+\begin{bmatrix}
\tilde{M}^{(11)}(x_k, \frac{K}{n}) & \tilde{M}^{(12)}(x_k, \frac{K}{n}) \\
\tilde{M}^{(21)}(x_k, \frac{K}{n}) & 0 
\end{bmatrix}+O( (\frac{K}{n})^{\frac{d+4}{d}}) +O\Big(\frac{\sqrt{K \log(n)}}{n} (\frac{K}{n})^{\frac{2}{d}}\Big). 
\end{align*}
The properties of the block matrices are summarized as follows.
\begin{enumerate}[label=(\arabic*)]
\item
For $x \in M$, $\tilde{M}^{(0)}(x) \in \mathbb{R}^{d \times d}$ is a diagonal matrix. 
\begin{enumerate}
\item The $i$-th diagonal entry of $\tilde{M}^{(0)}(x)$ is $\mu_1(x)+O((\frac{K}{n})^{\frac{d+3}{d}})$, for $i=1, \cdots, d-1$. The $d$th diagonal entry of $\tilde{M}^{(0)}(x)$ is $
\mu_2(x)+O((\frac{K}{n})^{\frac{d+3}{d}})$.
\item $\mu_1(x)$ and $\mu_2(x)$ are continuous functions on $M$. For all $x \in M$,
$$\frac{1}{2(d+2)}(\frac{d}{|S^{d-1}| P_M}) ^{\frac{2}{d}} (\frac{K+1}{n})^{\frac{d+2}{d}}\leq \mu_1(x), \mu_2(x) \leq \frac{2}{d+2}(\frac{2d}{|S^{d-1}| P_m}) ^{\frac{2}{d}} (\frac{K+1}{n})^{\frac{d+2}{d}}.$$
\item
When $x \in \partial M$, 
 $$\mu_1(x) =\mu_2(x)=\frac{1}{(d+2)}(\frac{2d}{|S^{d-1}| P(x)}) ^{\frac{2}{d}} (\frac{K+1}{n})^{\frac{d+2}{d}}.$$
\end{enumerate}
\item 
$\tilde{M}^{(11)}(x_k, \frac{K}{n})$ is symmetric and $\tilde{M}^{(12)}(x_k, \frac{K}{n})=\tilde{M}^{(21)}(x_k, \frac{K}{n})^\top$. The entries in those matrices are of order $O((\frac{K}{n})^{\frac{d+3}{d}})$, where the constant in $O((\frac{K}{n})^{\frac{d+3}{d}})$ depends on $d$, $P_m$, the $C^1$ norm of $P$, the second fundamental form of $\iota(M)$ in $\mathbb{R}^p$ at $\iota(x_k)$, and the second fundamental form of $\partial M$ in $M$ at $x_{\partial,k}$.
\item
For $x_k \in M \setminus  M_{R^*}$,
\begin{align*}
\frac{1}{n} C_{n,k}= {
\begin{bmatrix}
\tilde{M}^{(0)}(x_k) & 0 \\
0& 0 
\end{bmatrix}} 
+O( (\frac{K}{n})^{\frac{d+4}{d}}) +O\Big(\frac{\sqrt{K \log(n)}}{n} (\frac{K}{n})^{\frac{2}{d}}\Big). 
\end{align*}
The $i$-th diagonal entry of $\tilde{M}^{(0)}(x_k) $ is 
$\frac{1}{(d+2)}(\frac{d}{|S^{d-1}| P(x_k)}) ^{\frac{2}{d}} (\frac{K+1}{n})^{\frac{d+2}{d}}+O((\frac{K}{n})^{\frac{d+3}{d}}),$ 
for $1 \leq i \leq d$.
\item $O( (\frac{K}{n})^{\frac{d+4}{d}})$ and $ O\Big(\frac{\sqrt{K \log(n)}}{n} (\frac{K}{n})^{\frac{2}{d}}\Big)$ represent $p \times p $ matrices whose entries are of orders $O( (\frac{K}{n})^{\frac{d+4}{d}})$ and  $ O\Big(\frac{\sqrt{K \log(n)}}{n} (\frac{K}{n})^{\frac{2}{d}}\Big)$ respectively.
\end{enumerate}
\end{theorem}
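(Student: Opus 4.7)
The plan is to leverage the observation stated immediately before the theorem: for each $k$, the local data matrix, and hence $C_{n,k}$, produced by the KNN scheme with parameter $K$ coincides exactly with the one produced by the $\epsilon$-radius ball scheme with $\epsilon=R(x_k)$. This reduces the KNN analysis to an application of Theorem \ref{local covariance epsilon} with a random, $k$-dependent radius, combined with the concentration of $R(x_k)$ around $\tilde R(x_k)$ supplied by Proposition \ref{tildeR and R}. First I would verify the hypotheses: by Proposition \ref{tildeR and R}, with probability $\geq 1-2n^{-2}$, $R(x_k)=\tilde R(x_k)(1+O((K/n)^{1/d}))$ and $R(x_k)\asymp (K/n)^{1/d}$ for all $k$, so $R(x_k)\to 0$ and $\sqrt{\log n}/(n^{1/2}R(x_k)^{d/2+1})\to 0$ under the scaling on $K$, matching the conditions in Theorem \ref{local covariance epsilon}.

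Since $\epsilon=R(x_k)$ is random and varies with $k$, I would establish a uniform-in-$\epsilon$ version of Theorem \ref{local covariance epsilon}. The cleanest route is to fix a polynomially fine grid of candidate radii covering the concentration range $[c_1(K/n)^{1/d},c_2(K/n)^{1/d}]$ from Proposition \ref{tildeR and R}, apply the pointwise theorem at each grid value with a union bound over the grid and over $k$, and interpolate using the fact that $C_{n,k}(\epsilon)$ is piecewise constant in $\epsilon$ (jumping only at sample distances). The logarithmic inflation of the failure probability is absorbed by the $\sqrt{\log n}$ already present in the variance term.

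Applying this uniform statement with $\epsilon=R(x_k)$ and multiplying by $R(x_k)^{d+2}$, I then substitute $R(x_k)^{d+2}=\tilde R(x_k)^{d+2}(1+O((K/n)^{1/d}))$. The leading diagonal entries become $P(x_k)\sigma_2(\tilde\epsilon(x_k),\tilde R(x_k))\tilde R(x_k)^{d+2}$ and $P(x_k)\sigma_{2,d}(\tilde\epsilon(x_k),\tilde R(x_k))\tilde R(x_k)^{d+2}$, which I take as the definitions of $\mu_1(x)$ and $\mu_2(x)$ (absorbing the factor $P(x)$). The $O(\epsilon)$ block in Theorem \ref{local covariance epsilon}, multiplied by $R(x_k)^{d+2}$, produces $\tilde M^{(11)},\tilde M^{(12)},\tilde M^{(21)}$ with entries of order $O((K/n)^{(d+3)/d})$ and the symmetry/transpose relations stated in (2). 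The remainders $O(R(x_k)^{d+4})$ and $O(\sqrt{\log n/n}\,R(x_k)^{d/2+2})$ translate exactly to the declared $O((K/n)^{(d+4)/d})$ and $O(\sqrt{K\log n}/n\cdot (K/n)^{2/d})$.

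For the auxiliary properties: continuity of $\mu_1,\mu_2$ in (1)(b) follows from continuity of $P$, of $\tilde\epsilon$ (Proposition \ref{properties of the distance to boundary function}), of $\tilde R=U(\tilde\epsilon,(K+1)/(Pn))$ (inverse function theorem applied to $V(t,r)=\sigma_0(t,r)r^d$), and of $\sigma_2,\sigma_{2,d}$. The quantitative bounds in (1)(b) come from sandwiching $\sigma_2,\sigma_{2,d}\in[\frac{|S^{d-1}|}{2d(d+2)},\frac{|S^{d-1}|}{d(d+2)}]$ together with $\sigma_0\in[\frac{|S^{d-1}|}{2d},\frac{|S^{d-1}|}{d}]$, which yields $\tilde R(x)^{d+2}\in[(\frac{d(K+1)}{|S^{d-1}|P_M n})^{(d+2)/d},(\frac{2d(K+1)}{|S^{d-1}|P_m n})^{(d+2)/d}]$, and combining with $P_m\leq P(x)\leq P_M$. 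For (1)(c), at $x\in\partial M$ we have $\tilde\epsilon(x)=0$, forcing $\sigma_2(0,\tilde R)=\sigma_{2,d}(0,\tilde R)=\frac{|S^{d-1}|}{2d(d+2)}$ and $\tilde R(x)=(\frac{2d(K+1)}{|S^{d-1}|P(x)n})^{1/d}$; simplification yields the claimed common value. For (3), when $x_k\in M\setminus M_{R^*}$ we have $\tilde\epsilon(x_k)>R^*\geq R(x_k)$, so the $M^{(11)},M^{(12)},M^{(21)}$ blocks in Theorem \ref{local covariance epsilon} vanish and $\sigma_2=\sigma_{2,d}=\frac{|S^{d-1}|}{d(d+2)}$, giving the stated explicit diagonal entry. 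The main obstacle is the uniform-in-$\epsilon$ extension of Theorem \ref{local covariance epsilon}; the discretization-plus-monotonicity argument sketched above circumvents the need for a full chaining bound.
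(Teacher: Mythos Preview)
Your proposal follows essentially the same architecture as the paper's proof: reduce the KNN covariance to the $\epsilon$-ball covariance with $\epsilon=R(x_k)$, invoke Theorem~\ref{local covariance epsilon}, and convert all $R(x_k)$-powers into $(K/n)$-powers via Proposition~\ref{tildeR and R}. The derivation of properties (1)--(4) is also the same in spirit.

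The one substantive difference is how you handle the fact that $R(x_k)$ is random and $k$-dependent. You propose a discretization of the radius range plus a union bound over the grid to manufacture a uniform-in-$\epsilon$ version of Theorem~\ref{local covariance epsilon}. The paper does not do this; it simply plugs $\epsilon=R(x_k)$ into Theorem~\ref{local covariance epsilon} directly. This is legitimate because the variance part of Theorem~\ref{local covariance epsilon} ultimately rests on Lemma~\ref{Prop 1 Lemma 1}(1), which already controls $\sup_{A\in\mathcal{B}_r(\iota(M))}|N(A)/n-\mathbb{E}\chi_A|$ uniformly over \emph{all} balls of radius up to $r$, on a single event of probability $\geq 1-n^{-2}$. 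Taking $r$ to be the deterministic upper bound $3(2d/|S^{d-1}|)^{1/d}(K/(P_m n))^{1/d}$ from Proposition~\ref{tildeR and R} then covers every $R(x_k)$ simultaneously without any grid. So your grid-and-union-bound step is correct but unnecessary; the simpler route is to trace Theorem~\ref{local covariance epsilon} back to its uniform source.

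A small discrepancy: you define $\mu_1(x)=P(x)\sigma_2(\tilde\epsilon(x),\tilde R(x))\tilde R(x)^{d+2}$, replacing $R(x)$ by $\tilde R(x)$ inside $\sigma_2$ as well, whereas the paper keeps $R(x)$ there and only replaces the outer factor $R(x)^{d+2}$ by $\tilde R(x)^{d+2}$. Both choices are consistent with the theorem statement, since $\sigma_2(t,\epsilon)$ depends on $\epsilon$ only through the ratio $t/\epsilon$ and is Lipschitz in that ratio, so $|\sigma_2(\tilde\epsilon,R)-\sigma_2(\tilde\epsilon,\tilde R)|=O((K/n)^{1/d})$, which after multiplication by $\tilde R^{d+2}$ lands in the allowed $O((K/n)^{(d+3)/d})$.
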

Under Assumptions \ref{main assumption 1} and \ref{main assumption 2}, since the eigenvalues $\{\lambda_{n,i}(z_k)\}_{i=1}^p$ of $C_{n, k}$ are invariant under translation of $\mathcal{X}$ and orthogonal transformation on $\mathbb{R}^p$, and based on the above theorem and a perturbation argument (Appendix A in \cite{Wu_Wu:2017}), the eigenvalues $\{\lambda_{n,i}(z_k)\}_{i=1}^p$ of $C_{n, k}$ constructed in the KNN scheme can be characterized as follows for all $k$.
\begin{align*}
& \frac{\lambda_{n,i}(z_k)}{n}=\mu_1(x_k) +O( (\frac{K}{n})^{\frac{d+3}{d}}) +O\Big(\frac{\sqrt{K \log(n)}}{n} (\frac{K}{n})^{\frac{2}{d}}\Big) & &\text{for $i=1, \cdots, d-1$};\\
& \frac{\lambda_{n,i}(z_k)}{n}= \mu_2(x_k)+O( (\frac{K}{n})^{\frac{d+3}{d}}) +O\Big(\frac{\sqrt{K \log(n)}}{n} (\frac{K}{n})^{\frac{2}{d}}\Big) & &\text{for $i=d$}; \\
& \frac{\lambda_{n,i}(z_k)}{n}= O( (\frac{K}{n})^{\frac{d+4}{d}}) +O\Big(\frac{\sqrt{K \log(n)}}{n} (\frac{K}{n})^{\frac{2}{d}}\Big) & &\text{for $i=d+1, \cdots p$}.
\end{align*}
In particular, when $x_k \in M \setminus  M_{R^*}$, $\mu_1(x_k)=\mu_2(x_k)=\frac{1}{(d+2)}(\frac{d}{|S^{d-1}| P(x_k)}) ^{\frac{2}{d}} (\frac{K+1}{n})^{\frac{d+2}{d}}$.

Moreover, under Assumption \ref{assumption tangent space}, suppose $U_{n,k} \in O(p)$ is the corresponding orthonormal eigenvector matrix of $C_{n,k}$.  For any $k$,
\begin{align*}
U_{n,k}=\begin{bmatrix}
X_{k,1}& 0 \\
0& X_{k,2}
\end{bmatrix}+O( (\frac{K}{n})^{\frac{d+3}{d}}) +O\Big(\frac{\sqrt{K \log(n)}}{n} (\frac{K}{n})^{\frac{2}{d}}\Big),
\end{align*}
where $X_{k,1} \in O(d)$ and $ X_{k,2} \in O(p-d)$.
If $x_k \in M \setminus M_{R^*}$, then
\begin{align*}
U_{n,k}=\begin{bmatrix}
X_{k,1}& 0 \\
0& X_{k,2}
\end{bmatrix}+O( (\frac{K}{n})^{\frac{d+4}{d}}) +O\Big(\frac{\sqrt{K \log(n)}}{n} (\frac{K}{n})^{\frac{2}{d}}\Big).
\end{align*}

To end this subsection, we provide the following bias and variance analyses of the BI in the KNN scheme. The proof of the theorem is in Section \ref{proof THM BI 2} of the Supplementary Material.
\begin{theorem}\label{prop BI 2}
Under Assumptions \ref{main assumption 1}, \ref{main assumption 2},  and the KNN scheme, let $c=n (\frac{K}{n})^{\frac{d+3}{d}}$ and suppose $K=K(n)$ so that  $\frac{K}{n} \rightarrow 0$ and $\frac{\log(n)}{K} (\frac{n}{K})^{2/d}\to 0$ as $n \to \infty$. Then, with probability greater than $1-4n^{-2}$,  for all $k$, 
\begin{align}
B_k= \tilde{B} (x_k)+O((\frac{K}{n})^{\frac{1}{d}})+O\Big(\sqrt{\frac{\log(n)}{K}}\Big), \nonumber
\end{align}
The constants in $O((\frac{K}{n})^{\frac{1}{d}})$ and $O\Big(\sqrt{\frac{\log(n)}{K}}\Big)$ depend on $P_m$,  the $C^1$ norm of $P$ and the second fundamental form of $\iota(M)$. The function $\tilde{B}(x):M\to \mathbb{R}$ has the following properties:
\begin{enumerate}[label=(\arabic*)]
\item $\tilde{B}(x)$ is continuous on $M$.
\item  Define $\frac{|S^{d-2}|}{d-1}=1$ when $d=1$. $ \tilde{B}(x)=\frac{4d^2(d+2)|S^{d-2}|^2}{(d^2-1)^2|S^{d-1}|^2}$ when $x\in \partial M$.
\item  If $n$ is large enough, then $R^*$ is small enough and $\gamma_x(t)$ is mimizing on $[0, 2R^*]$ for all $x \in \partial M$. There exists $0 <t_x^*<R^*$ with $ \tilde{B}(\gamma_x(t))=0$ for $t \geq t_x^*$ and $ \tilde{B}(\gamma_x(t))$ decreasing for $0<t<t_x^*$. 
\end{enumerate}
\end{theorem}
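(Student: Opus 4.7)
The plan is to reduce to the $\epsilon$-radius ball analysis of Theorem \ref{prop BI 1} by exploiting that, for each fixed $z_k$, the KNN neighborhood of $z_k$ coincides with the $R(x_k)$-radius ball neighborhood, where $R(x_k)$ is the data-dependent radius in Definition \ref{def of R(x)}. As a first step, condition on the intersection of the high-probability events from Proposition \ref{tildeR and R} and Theorem \ref{Covariance KNN setup}; by a union bound this has probability at least $1-4n^{-2}$, matching the statement. On this event, $R(x_k) = \tilde{R}(x_k)(1+O((K/n)^{1/d}))$ uniformly in $k$, and $R^* = O((K/n)^{1/d})$, so every $R(x_k)$ is small and of the correct order. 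The KNN conditions $K/n \to 0$ and $\frac{\log n}{K}(n/K)^{2/d} \to 0$ translate precisely to the $\epsilon$-ball conditions $\epsilon \to 0$ and $\sqrt{\log n}/(n^{1/2}\epsilon^{d/2+1}) \to 0$ when $\epsilon = \tilde{R}(x_k)$.

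Next, starting from $B_k = \frac{\mathbf{1}_{N_k}^\top G_{n,k}^\top \mathcal{I}_c(C_{n,k}) G_{n,k}\mathbf{1}_{N_k}}{N_k}$ in Proposition \ref{invariant B_k}, I would expand $\mathcal{I}_c(C_{n,k})$ using the spectral expansion that follows Theorem \ref{Covariance KNN setup}: with $c = n(K/n)^{(d+3)/d}$, the regularizer dominates the $p-d$ small eigenvalues of order $(K/n)^{(d+4)/d}$ but is dominated by the $d$ large eigenvalues of order $(K/n)^{(d+2)/d}$, so $\mathcal{I}_c(C_{n,k})$ has a $d$-block leading term supported on the approximate tangent space. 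The vector $G_{n,k}\mathbf{1}_{N_k} = \sum_{i=1}^{N_k}(z_{k,i}-z_k)$ equals, on our good event, the integral of $\iota(y)-\iota(x_k)$ against $P$ over a ball of radius $R(x_k)$ in $\iota(M)$, modulo an $O(\sqrt{K\log n}/n)$ variance from the same Bernstein concentration used in Theorem \ref{local covariance epsilon}. Assembling these expansions as in the proof of Theorem \ref{prop BI 1}, and then replacing $R(x_k)$ by $\tilde{R}(x_k)$ via Proposition \ref{tildeR and R}, produces $B_k$ as a ratio of $\sigma$-functions evaluated at $\tilde{\epsilon}(x_k)$ and $\tilde{R}(x_k)$, plus an error of $O((K/n)^{1/d})+O(\sqrt{\log n/K})$. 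The substitution step uses that the $\sigma$-functions depend only on the ratio $t/\epsilon$ and are Lipschitz in this ratio, so a relative $O((K/n)^{1/d})$ perturbation of $R(x_k)$ becomes an additive $O((K/n)^{1/d})$ error in $\tilde{B}$. This naturally defines $\tilde{B}(x) := \frac{\sigma_{1,d}(\tilde{\epsilon}(x),\tilde{R}(x))^2}{\sigma_0(\tilde{\epsilon}(x),\tilde{R}(x))\,\sigma_{2,d}(\tilde{\epsilon}(x),\tilde{R}(x))}$, mirroring the formula for $B(x)$ in Theorem \ref{prop BI 1}.

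Properties (1)--(3) of $\tilde{B}$ are then verified in order. Continuity in (1) follows from continuity of $\tilde{\epsilon}$ and $P$, hence of $\tilde{R}(x) = U(\tilde{\epsilon}(x),(K+1)/(P(x)n))$, combined with continuity of the $\sigma$-functions. For (2), plugging $\tilde{\epsilon}(x)=0$ into the $\sigma$-formulas yields the same dimensional constants as in Theorem \ref{prop BI 1}, giving the stated boundary value. For (3), along a minimizing geodesic $\gamma_x$ emanating inward from $x \in \partial M$ one has $\tilde{\epsilon}(\gamma_x(t))=t$; since $V(t,r)$ is strictly increasing in $t$, its inverse $U(t,s)$ is strictly decreasing in $t$, and hence $\tilde{R}(\gamma_x(t))$ is decreasing in $t$ up to a smooth correction from $P(\gamma_x(t))$ that is controlled when $R^*$ is small. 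The ratio $t/\tilde{R}(\gamma_x(t))$ is therefore strictly increasing in $t$, starting at $0$ and reaching $1$ at a unique $t_x^* < R^*$; since $\tilde{B}$ is a strictly decreasing function of this ratio on $[0,1]$ and vanishes beyond it, (3) follows. The main technical obstacle is in the first paragraph: Theorem \ref{prop BI 1} is stated with a deterministic $\epsilon$, so applying its derivation pointwise with the random radius $\epsilon = R(x_k)$ requires either reworking the concentration estimates conditionally on $R(x_k)$ or, more cleanly, running the entire expansion with the deterministic $\tilde{R}(x_k)$ and absorbing the discrepancy $R(x_k)-\tilde{R}(x_k)$ into the error term via Proposition \ref{tildeR and R} and the scale invariance of the $\sigma$-functions.
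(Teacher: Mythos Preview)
Your overall strategy---reduce the KNN construction to the $R(x_k)$-radius ball construction, import the $\epsilon$-ball expansion of $B_k$, and then translate the error terms via the two-sided bounds on $R(x_k)$ from Proposition~\ref{tildeR and R}---is exactly the paper's route. The one substantive difference is your choice of $\tilde{B}$: you set $\tilde{B}(x)=\dfrac{\sigma_{1,d}(\tilde\epsilon(x),\tilde R(x))^2}{\sigma_0(\tilde\epsilon(x),\tilde R(x))\,\sigma_{2,d}(\tilde\epsilon(x),\tilde R(x))}$, whereas the paper keeps the \emph{random} radius and defines $\tilde{B}(x)=\dfrac{\sigma_{1,d}(\tilde\epsilon(x),R(x))^2}{\sigma_0(\tilde\epsilon(x),R(x))\,\sigma_{2,d}(\tilde\epsilon(x),R(x))}$. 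Both are legitimate, and the main expansion goes through either way since the $\sigma$-functions depend only on the ratio $t/\epsilon$. The payoff of the paper's choice is that property~(3) follows immediately from Proposition~\ref{continuity of R}(2): that proposition already gives, by a pure ball-inclusion argument independent of $P$, that $t\mapsto t/R(\gamma_x(t))$ is strictly increasing while $t<R(\gamma_x(t))$, so the existence of $t_x^*$ and the monotonicity of $\tilde B(\gamma_x(t))$ drop out with no further work. Your version with $\tilde R$ forces you to control the $P(\gamma_x(t))$-dependence of $\tilde R(\gamma_x(t))$ by hand; this can be done (your order-of-magnitude sketch is right), but it is extra work that the paper avoids by design.

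On your flagged obstacle about applying the $\epsilon$-ball analysis with a random $\epsilon=R(x_k)$: the paper does take this head on rather than rerouting through the deterministic $\tilde R(x_k)$. The point is that the variance control underlying Theorem~\ref{prop BI 1} (Lemma~\ref{Prop 1 Lemma 1}) is stated uniformly over \emph{all} balls of radius $\le r$, so it applies simultaneously at every radius $R(x_k)\le R^*$ on the same high-probability event; and the bias lemmas (Lemmas~\ref{Prop 1 Lemma 2} and~\ref{Prop 1 Lemma 4}) are shown to hold for any regularizer in a window $\tilde C_1 n\epsilon^{d+3}\le c\le \tilde C_2 n\epsilon^{d+3}$, which the fixed choice $c=n(K/n)^{(d+3)/d}$ satisfies for $\epsilon=R(x_k)$ thanks to the two-sided bound $(K/n)^{1/d}\asymp R(x_k)$ from Proposition~\ref{tildeR and R}. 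This is the cleaner of the two options you outline at the end, and it is what the paper actually does.
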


We discuss the implications of the above results concerning the BI in the KNN scheme. By (2) and (3), $\tilde{B}(x)$ remains constant and attains maximum on $\partial M$. Furthermore, for any point $x$ on $\partial M$, $\tilde{B}(\gamma_x(t))$ decreases along the geodesic $\gamma_x(t)$ from $x$ to $\tilde{B}(\gamma_x(t_x^*))$ and $\tilde{B}(\gamma_x(t))=0$ when $t \geq t_x^*$.  Since $t_x^* < R^*$, the region where $\tilde{B}(x)$ is non zero is contained in $M _{R^*}$. Hence, $\tilde{B}(x)$ behaves like a bump function, concentrating on $\partial M$ and vanishing in $M \setminus  M_{R^*}$. However, unlike $B(x)$ in the $\epsilon$-radius ball scheme, $\tilde{B}(x)$ in the KNN scheme may not decrease at the same speed along geodesics perpendicular to $\partial M$. Refer to Figure \ref{fig: KNN B function} for an illustration.  If we choose $0<\tau<\frac{4d^2(d+2)|S^{d-2}|^2}{(d^2-1)^2|S^{d-1}|^2}$, $\tilde{B}^{-1}(\tau, \infty)=N_{\tau}$, where $N_{\tau} \subset M_{R^*}$ is a neighborhood of $\partial M$ in $M$.

Suppose $K$ and $n$ satisfy the conditions in Theorem \ref{prop BI 2}. For sufficiently large $n$, with high probability, we have {$|B_k-\tilde{B}(x_k)|=O((\frac{K}{n})^{\frac{1}{d}})+O\Big(\sqrt{\frac{\log(n)}{K}}\Big)$, which is small enough.} Therefore, when $x_k$ is near the boundary, $B_k$ approximates a constant value, and within $M \setminus  M_{R^*}$, $B_k$ is of order {$O((\frac{K}{n})^{\frac{1}{d}})+O\Big(\sqrt{\frac{\log(n)}{K}}\Big)$}.
Thus, we can set a threshold on $B_k$ to identify all points in a neighborhood of the boundary contained in $M_{R^*}$. According to Proposition \ref{tildeR and R}, as $n$ increases, $R^*$ deceases, leading to more precise identification of boundary points.
\begin{figure}[ht]
\centering
\includegraphics[scale=0.23]{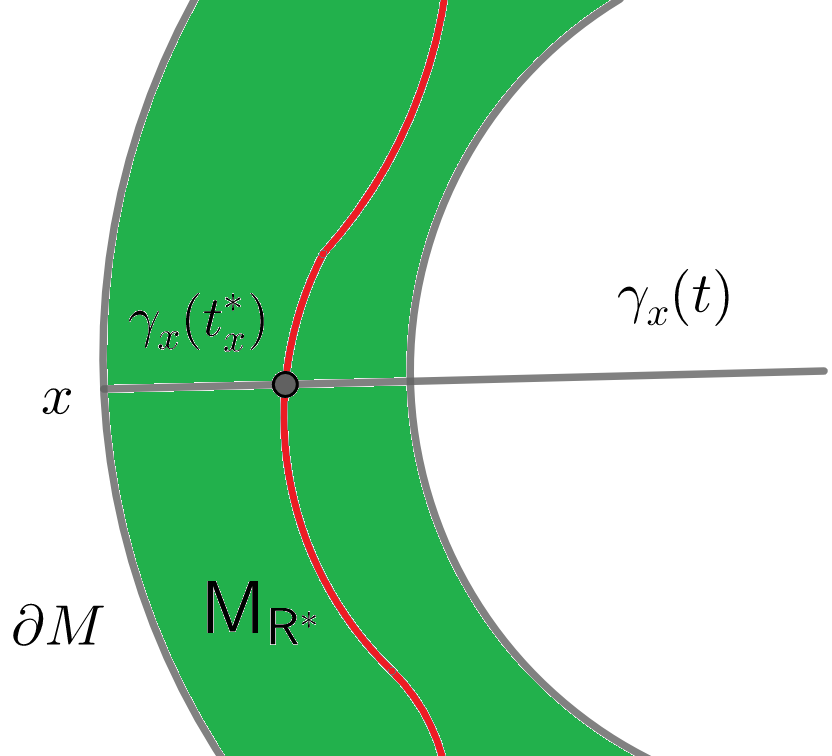}
\caption{An illustration to the function $ \tilde{B}(x)$ in the KNN scheme in a neighborhood near the boundary of $M$.  The green region is the intersection of $M_{R^*}$ and the neighborhood. The red curve is the union of $\gamma_x(t_x^*)$ corresponding to all $x \in \partial M$. For any $x \in \partial M$, $ \tilde{B}(\gamma_x(t))$ decreases along the geodesic $\gamma_x(t)$ from $x$ to $ \tilde{B}(\gamma_x(t_x^*))$ and $ \tilde{B}(\gamma_x(t))=0$ when $t \geq t_x^*$.   Since $t_x^* \leq R^*$, $\tilde{B}=0$ on  $M \setminus  M_{R^*}$.}
\label{fig: KNN B function}
\end{figure}

\section{Numerical results}\label{numerical simulation}
In this section, we compare the performances of BD-LLE in {four} examples with different boundary detection algorithms including $\alpha$-shape \cite{Edelsbrunner:1983,Edelsbrunner:1994}, BAND \cite{xue2009boundary}, BORDER \cite{Xia:2006}, BRIM \cite{Qiu:2007},  LEVER \cite{cao2018multidimensional}, SPINVER \cite{qiu2016clustering}, and the CPS algorithm \cite{calder2022boundary}(abbreviated by authors' initials for brevity). Detailed descriptions and discussions of all the algorithm are summarized in Section \ref{Summary algorithms} of the Supplementary Material, where each algorithm is presented along with the notations and setups used in this work. Note that all algorithms, except $\alpha$-shape, require either the $\epsilon$-radius ball scheme or the KNN scheme for nearest neighborhood search.  For $\alpha$-shape, we apply the boundary function in MATLAB, which includes a shrink factor $s \in[0,1]$ corresponding to $\alpha$. {For the BD-LLE algorithm, we use the $\epsilon$-radius ball search scheme. The scale parameter $\epsilon$ is chosen within the range between $\epsilon_{min}$ and $\epsilon_{max}$ as outlined in Section \ref{selection of eps K}, while the regularizer $c$ is selected according to \eqref{selection of the regularizer 0} or \eqref{selection of the regularizer 1} in Section \ref{Selection of the regularizer}.}

We introduce the following method to evaluate the performance of a boundary detection algorithm. Suppose we fix the scale parameter, $\epsilon$ or $K$, for an algorithm. Let $\partial \mathcal{X}$ denote the boundary points detected from $\mathcal{X}$. Let $M_r$ represent the $r$-neighborhood of $\partial M$ as defined in Definition \ref{preliminary def on boundary}. We define the $F1$ score of $\partial \mathcal{X}$ associated with $r$ as follows:
\begin{equation}\label{F1 def 1}
\text{F}1(\partial \mathcal{X}, r)=\frac{2}
{
\frac{1}{\frac{|\partial \mathcal{X}\cap \iota(M_{r})|}{|\partial \mathcal{X}|}}+
\frac{1}{\frac{|\partial \mathcal{X}\cap  \iota(M_{r})|}{|\mathcal{X} \cap  \iota(M_{r})|}}
}=\frac{2|\partial \mathcal{X} \cap  \iota(M_{r})|}{|\partial \mathcal{X}|+|\mathcal{X} \cap  \iota(M_{r})|}
\end{equation}
Since $\partial M$ is a measure $0$ subset of $M$, based on Assumption \ref{main assumption 2}, the probability for a sample in $\mathcal{X}$ to lie on the boundary of $\iota(M)$ is $0$. Therefore, our objective is to determine whether the detected boundary points $\partial \mathcal{X}$ coincide with the points within some regular neighborhood of the boundary, i.e. the $r$-neighborhood. We propose the metric $F1_{max}$,  defined as the maximum $F1$ score over a sequence $\{r_i=0.05i\}_{i=1}^k$,  i.e. 
\begin{align}\label{F1 def 2}
F1_{max}= \max_{1 \leq i \leq k}F1(\partial \mathcal{X}, r_i).
\end{align}
Note that unlike other algorithms, the CPS algorithm detects the boundary points through directly estimating the distance to the boundary function. In other words, for $z_k \in \mathcal{X}$ close to the boundary, $d_g(\iota^{-1}(z_k),\partial M)$ is estimated. By introducing an additional parameter $r$ alongside the scale parameter, the algorithm outputs $\partial \mathcal{X} (r)$ which estimates $\mathcal{X} \cap \iota(M_r)$. Therefore, for a given sequence $\{r_i=0.05i\}_{i=1}^k$, we apply the CPS algorithm to output the corresponding $\partial \mathcal{X}(r_i)$, and we define $F1_{max}= \max_{1 \leq i \leq k}F1(\partial \mathcal{X}(r_i), r_i)$. 

Next, we describe the construction of the point cloud for each example.

\subsection{Unit ball}
{We uniformly randomly sample $\{r_{i}\}_{i=1}^{8000}$, $\{\theta_{i}\}_{i=1}^{8000}$, and $\{\phi_{i}\}_{i=1}^{8000}$ from $[0,1]$,  $[0,2\pi]$, and $[0,\pi]$ respectively. Let $\mathcal{X}=\{z_i\}_{i=1}^{8000} \subset \mathbb{R}^3$, where
$$z_i=(r^{1/2}_{i}\sin(\phi_{i})\cos(\theta_{i}),r^{1/2}_{i}\sin(\phi_{i})\sin(\theta_{i}),r^{1/2}_{i}\cos(\phi_{i})).$$ 
Thus, we generate $8000$ non-uniform samples $\mathcal{X}=\{z_i\}_{i=1}^{8000}$ on the unit ball in $\mathbb{R}^3$. We apply BD-LLE to $\mathcal{X}$ and compare the result with those from other algorithms. The scale parameters and $F1_{\max}$ for all the algorithms are summarized in Table \ref{table 2} and Table \ref{table 1}. }
\begin{table}
\caption{Summary of the nearest neighborhood search schemes and the scale parameters in different algorithms .}\label{table 2}
\begin{center}
\begin{tabular}{ |c|c|c|c|c|c|} 
 \hline
Algorithms& Nearest Neighborhood & Unit ball& V-cut torus& T-cut torus& Klein bottle \\
 \hline
BD-LLE& $\epsilon$-radius ball & $\epsilon=0.2$ &  $\epsilon=1$ & $\epsilon=1.15$ & $\epsilon=0.25$ \\
 \hline
$\alpha$-shape& Shrink factor & 1 & 0 & 0& NA  \\
 \hline
BAND& KNN&  K=50& K=50 & K=50&  K=50 \\
 \hline
BORDER& KNN&  K=50& K=50 & K=50& K=50 \\
 \hline
BRIM & $\epsilon$-radius ball & $\epsilon=0.2$ &  $\epsilon=1$ & $\epsilon=1.15$ & $\epsilon=0.25$ \\
 \hline
CPS& $\epsilon$-radius ball & $\epsilon=0.2$ &  $\epsilon=1$ & $\epsilon=1.15$ & $\epsilon=0.25$ \\
 \hline
LEVER& KNN&  K=50& K=50 & K=50 & K=50 \\
 \hline
SPINVER&  KNN&  K=50& K=50 & K=50 & K=50 \\
 \hline
\end{tabular}
\end{center}
\end{table}

\begin{table}
\caption{Summary of $F1_{max}$ for all the algorithms. The largest $F1_{max}$ in each example is highlighted.}\label{table 1}
\begin{center}
\begin{tabular}{ |c|c|c|c|c| } 
 \hline
Algorithms& Unit ball& V-cut torus& T-cut torus& Klein bottle \\
 \hline
BD-LLE&  0.8705 &  {\color{red}0.9344} & {\color{red}0.7840}& {\color{red}0.8425} \\
 \hline
$\alpha$-shape& 0.8370& 0.1511& 0.2096& NA \\
 \hline
BAND& 0.0800& 0.3679& 0.3491& 0.2959 \\
 \hline
BORDER& 0.6507& 0.4895& 0.3833& 0.3176 \\
 \hline
BRIM & 0.5289 & 0.1238& 0.1017 & 0 \\
 \hline
CPS& {\color{red}0.8876}& 0.9022&  0.5810 & 0.7862 \\
 \hline
LEVER& 0.6472& 0.6679 & 0.5609& 0.5185 \\
 \hline
SPINVER & 0.3194& 0.5607 & 0.3313& 0.2913 \\
 \hline
\end{tabular}
\end{center}
\end{table}

\subsection{Vertical-cut (V-cut) torus}
We uniformly randomly sample $\{\theta_{i}\}_{i=1}^{5056}$ and $\{\phi_{j}\}_{i=1}^{5056}$ from $[-\pi,\pi)$ and $[-\pi,\pi) \setminus (-0.5,0.5)$ respectively. Let $\mathcal{X}=\{z_i\}_{i=1}^{5056}\subset \mathbb{R}^3$, where
\begin{equation*}
z_i=(3+1.2\cos(\theta_{i})\cos(\phi_{i}), 3+1.2\cos(\theta_{i})\sin(\phi_{i}), 1.2\sin(\theta_{i}) ).
\end{equation*}
Thus, we generate $5056$ non-uniform samples $\mathcal{X}=\{z_i\}_{i=1}^{5056}$ on the V-cut torus. We apply BD-LLE to $\mathcal{X}$ and compare the result with those from other algorithms. The scale parameters and $F1_{\max}$ for all the algorithms are summarized in Table \ref{table 2} and Table \ref{table 1}.  We plot $\mathcal{X}$ and the detected boundary points $\partial \mathcal{X}$ for each algorithm in Figure \ref{figure: gap torus}.
\begin{figure}
\centering
\SetFigLayout{2}{4}
  \subfigure[BD-LLE]{\includegraphics{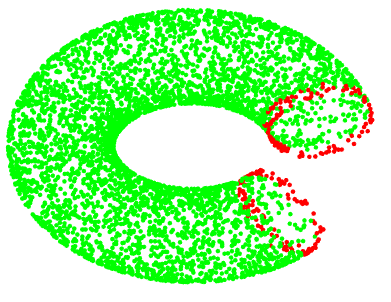}}
   \hfill
   \subfigure[$\alpha$-shape]{\includegraphics{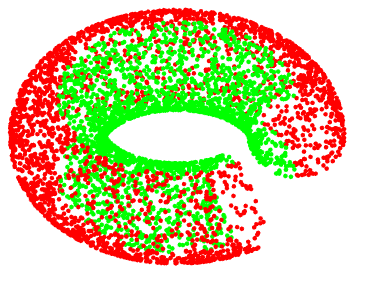}}
  \hfill
  \subfigure[BAND]{\includegraphics{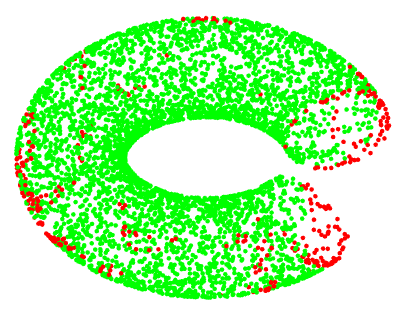}}
  \hfill
  \subfigure[BORDER]{\includegraphics{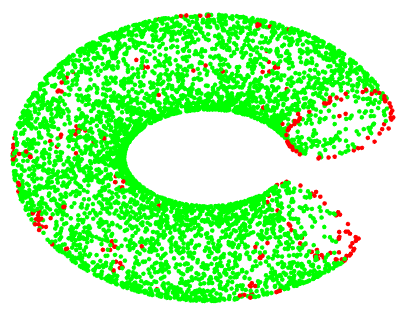}}
  \hfill
  \subfigure[BRIM]{\includegraphics{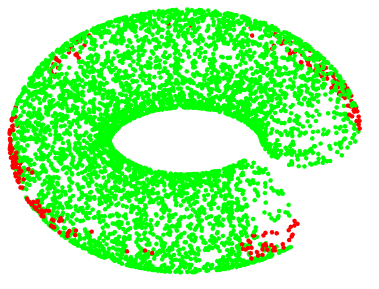}}
  \hfill
  \subfigure[CPS]{\includegraphics{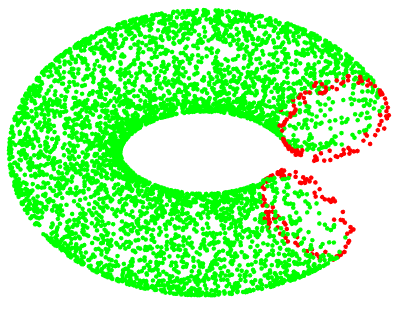}}
  \hfill
  \subfigure[LEVER]{\includegraphics{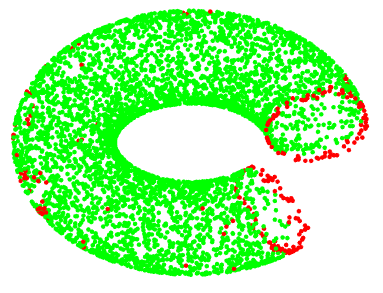}}
  \hfill
  \subfigure[SPINVER]{\includegraphics{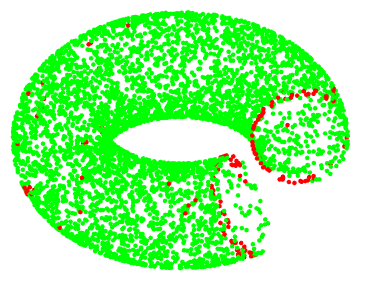}}
  \hfill
\caption{The plot of $\mathcal{X}$ (green and red) and $\partial \mathcal{X}$ (red) for different algorithms in the vertical-cut torus example.}\label{figure: gap torus}
\end{figure}
\subsection{Tilted-cut (T-cut) torus}
We uniformly randomly sample $\{\theta_{j}\}_{j=1}^{8000}$ and $\{\phi_{j}\}_{i=1}^{8000}$ from $[-\pi,\pi)$ respectively. Let $(u_j,v_j,w_j)=( 3+1.2\cos\theta_{j}\cos\phi_{j},3+1.2\cos\theta_{j}\sin\phi_{j},1.2\sin\theta_{j})$ be a point on a torus in $\mathbb{R}^3$. We rotate $\{(x_j,y_j,z_j)\}_{j=1}^{8000}$ around the y-axis through the following map,
$$(u_j,v_j,w_j) \rightarrow (u'_j, v'_j, w'_j)=(\cos(\frac{3\pi}{4}) u_j-\sin(\frac{3\pi}{4}) w_j, v_j, \sin(\frac{3\pi}{4}) u_j+\cos(\frac{3\pi}{4}) w_j).$$
Selecting all the points $\{(u'_j, v'_j, w'_j)\}$ with $w'_j<2.8$ generates $7596$ non-uniform samples $\mathcal{X}=\{z_i\}_{i=1}^{7596}$ on the T-cut torus. We apply BD-LLE to $\mathcal{X}$ and compare the result with those from other algorithms. The scale parameters and $F1_{\max}$ for all the algorithms are summarized in Table \ref{table 2} and Table \ref{table 1}.   We plot $\mathcal{X}$ and the detected boundary points $\partial \mathcal{X}$ for each algorithm in Figure \ref{figure: tilt-cut torus}.
\begin{figure}
\centering
\SetFigLayout{2}{4}
  \subfigure[BD-LLE]{\includegraphics[height=3cm]{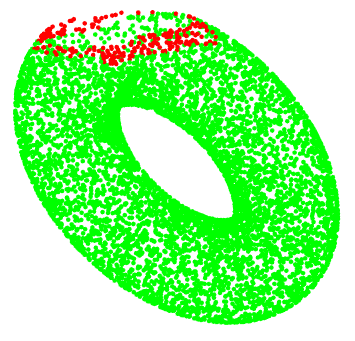}}
  \hfill
  \subfigure[$\alpha$-shape]{\includegraphics[height=3cm]{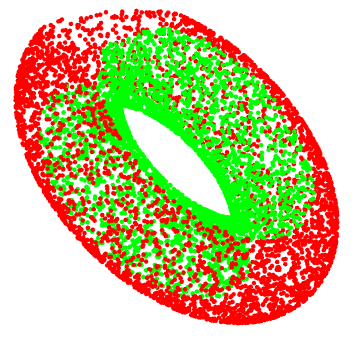}}
  \hfill
  \subfigure[BAND]{\includegraphics[height=3cm]{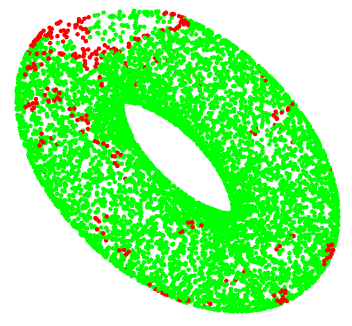}}
  \hfill
  \subfigure[BORDER]{\includegraphics[height=3cm]{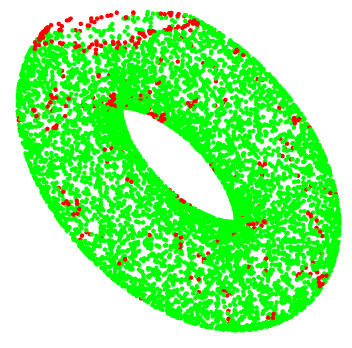}}
  \hfill
   \subfigure[BRIM]{\includegraphics[height=3cm]{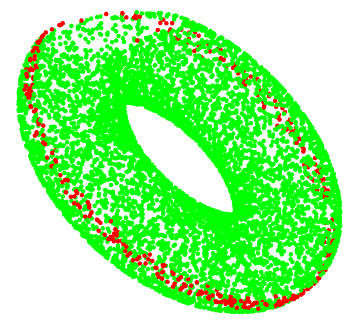}}
  \hfill
\hspace*{0.3cm}\subfigure[CPS]{\includegraphics[height=3cm]{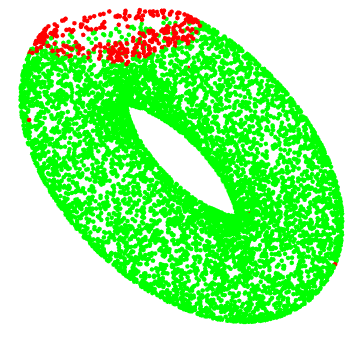}}
  \hfill
\hspace*{0.5cm}\subfigure[LEVER]{\includegraphics[height=3cm]{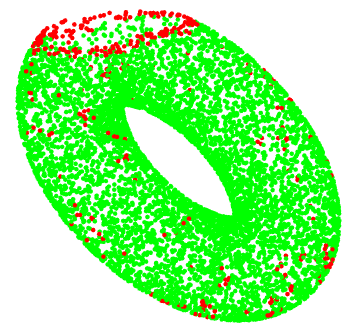}}
  \hfill
\hspace*{0.5cm}\subfigure[SPINVER]{\includegraphics[height=3cm]{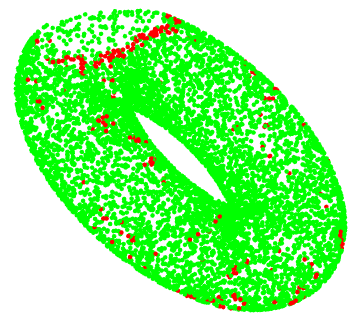}}
  \hfill
    \hfill
\caption{The plot of $\mathcal{X}$ (green and red) and $\partial \mathcal{X}$ (red) for different algorithms in the tilted-cut torus example.}
\label{figure: tilt-cut torus}
\end{figure}

\subsection{Punctured Klein bottle}
{Consider the domain $D=\{(\theta, \phi) |  0 \leq \theta <2\pi, 0\leq \phi <2\pi, (\theta-\pi)^2 + (\phi-\pi)^2 \geq 1\}$. For $(\theta, \phi) \in D$, the parametrization of a punctured Klein bottle in $\mathbb{R}^4$ is given by:
\begin{equation*}
w(\theta, \phi)=\Big((1 +\frac{1}{2}\cos \theta) \cos \phi, (1 +\frac{1}{2}\cos \theta) \sin \phi, \frac{1}{2} \sin \theta \cos \frac{\phi}{2}, \frac{1}{2}\sin \theta \sin \frac{\phi}{2} \Big)
\end{equation*}
By adding $496$ zeros in the coordinates after $w(\theta, \phi)$, we obtain a parametrization of the punctured Klein bottle in $\mathbb{R}^{500}$: $z(\theta, \phi)=(w(\theta, \phi), 0, \cdots, 0)$.  We randomly sample $\{(\theta_{i}, \phi_i)\}_{i=1}^{9689}$ from the domain $D$, so that the corresponding $\mathcal{X}=\{z_i(\theta_{i}, \phi_i)\}_{i=1}^{9689} \subset \mathbb{R}^{500}$ is uniformly distributed on the punctured Klein bottle. A visualization of $\mathcal{X}$ and the region removed from the Klein bottle is shown in Figure \ref{figure: visualize Klein bottle} through the projections $z_i \rightarrow \Big((1 +\frac{1}{2}\cos \theta_i) \cos \phi_i, (1 +\frac{1}{2}\cos \theta_i) \sin \phi_i, \frac{1}{2} \sin \theta_i \cos \frac{\phi_i}{2}\Big) $ and $z_i \rightarrow \Big((1 +\frac{1}{2}\cos \theta_i) \sin \phi_i, \frac{1}{2} \sin \theta_i \cos \frac{\phi_i}{2}, \frac{1}{2}\sin \theta_i \sin \frac{\phi_i}{2} \Big)$. We apply BD-LLE to $\mathcal{X}$ and compare the result with those from other algorithms. The $\alpha$ shape algorithm, implemented through Delaunay triangulation over $\mathcal{X}$ in $\mathbb{R}^{500}$,  is computationally extremely expensive. Hence, it is not included in the comparison. The scale parameters and $F1_{\max}$ for all the algorithms are summarized in Table \ref{table 2} and Table \ref{table 1}. Note that, under the parametrization of the punctured Klein bottle, the boundary corresponds to the unit circle centered at $(\pi, \pi)$ in the domain $D$. For each detected boundary point $z_i(\theta_{i}, \phi_i) \in \partial \mathcal{X}$, we plot the corresponding $(\theta_{i}, \phi_i)$ along with the samples  $\{(\theta_{i}, \phi_i)\}_{i=1}^{9689}$ in the domain $D$ in Figure \ref{figure: punctured Klein bottle}.}

\begin{figure}
 \centering
 \includegraphics[height=4cm, width=1\linewidth]{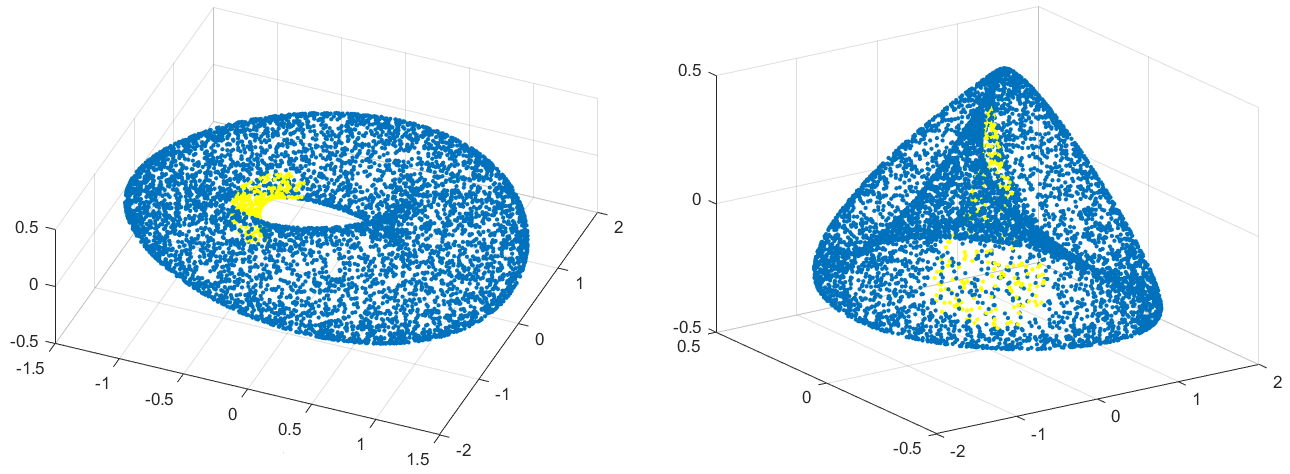}
\caption{{Left and right panels: The blue points represent the projection of points in $\mathcal{X}=\{z_i\}_{i=1}^{9689}$ to $\mathbb{R}^3$ through $z_i \rightarrow ((1 +\frac{1}{2}\cos \theta_i) \cos \phi_i, (1 +\frac{1}{2}\cos \theta_i) \sin \phi_i, \frac{1}{2} \sin \theta_i \cos \frac{\phi_i}{2})$ and $z_i \rightarrow ((1 +\frac{1}{2}\cos \theta_i) \sin \phi_i, \frac{1}{2} \sin \theta_i \cos \frac{\phi_i}{2}, \frac{1}{2}\sin \theta_i \sin \frac{\phi_i}{2})$ respectively.  The yellow points indicate the region removed from the Klein bottle under the same projections. }}
\label{figure: visualize Klein bottle}
\end{figure}
\begin{figure}
\centering
\SetFigLayout{2}{4}
  \subfigure[original data]{\includegraphics{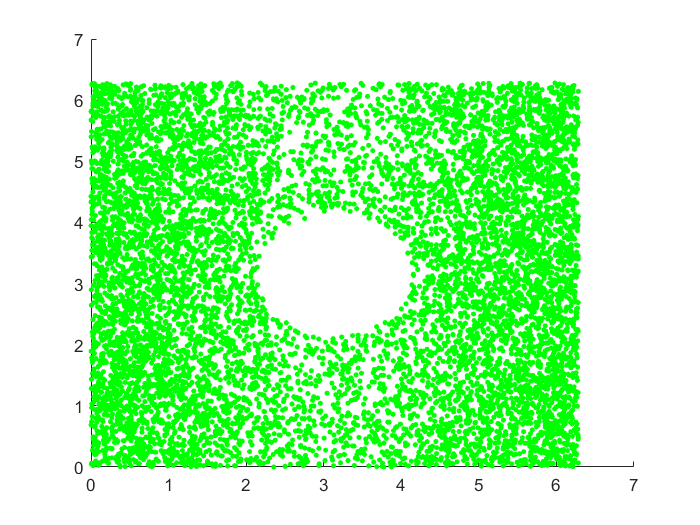}}
   \hfill
   \subfigure[BD-LLE]{\includegraphics{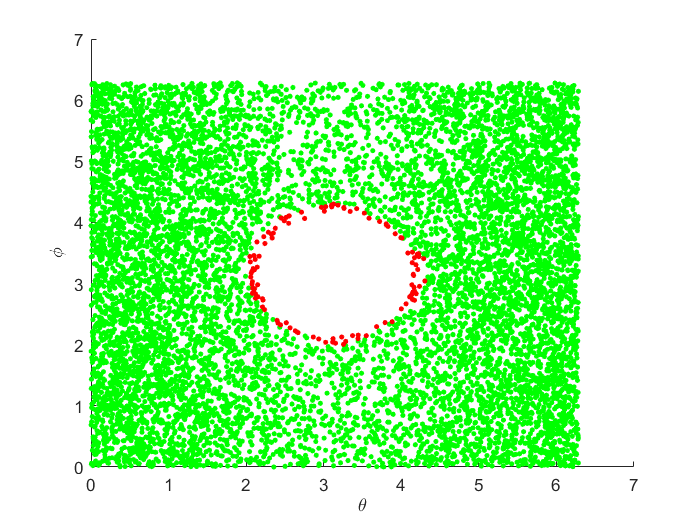}}
  \hfill
  \subfigure[BAND]{\includegraphics{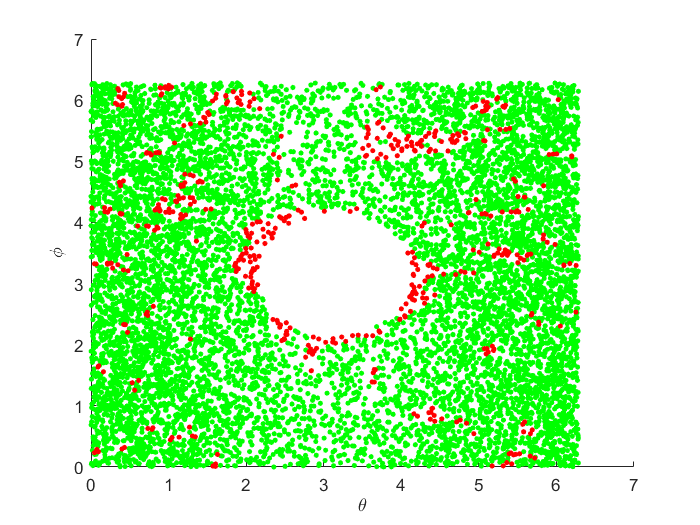}}
  \hfill
  \subfigure[BORDER]{\includegraphics{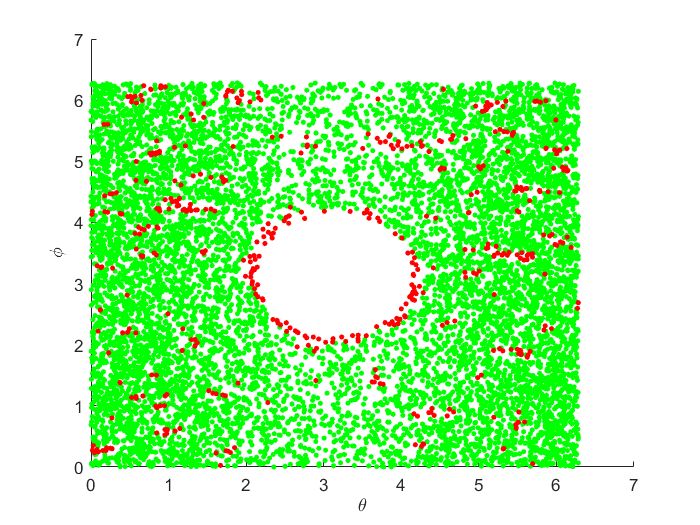}}
  \hfill
  \subfigure[BRIM]{\includegraphics{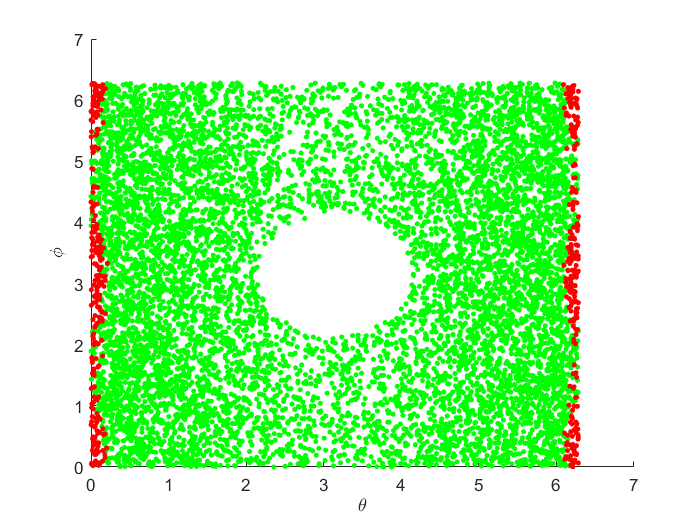}}
  \hfill
\subfigure[CPS]{\includegraphics{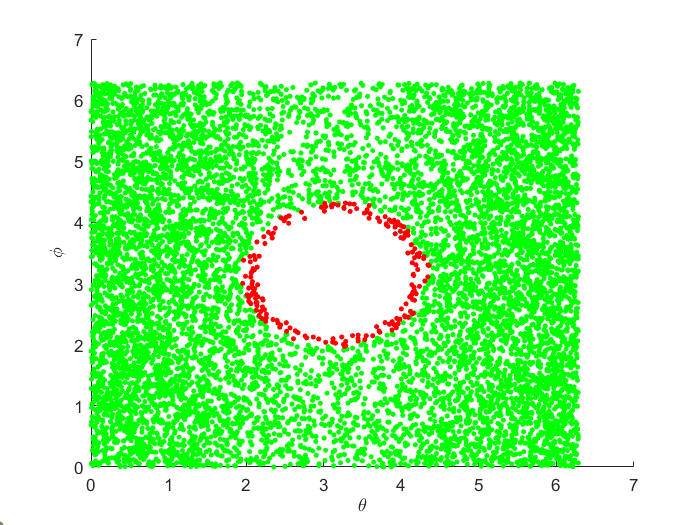}}
  \hfill
 \subfigure[LEVER]{\includegraphics{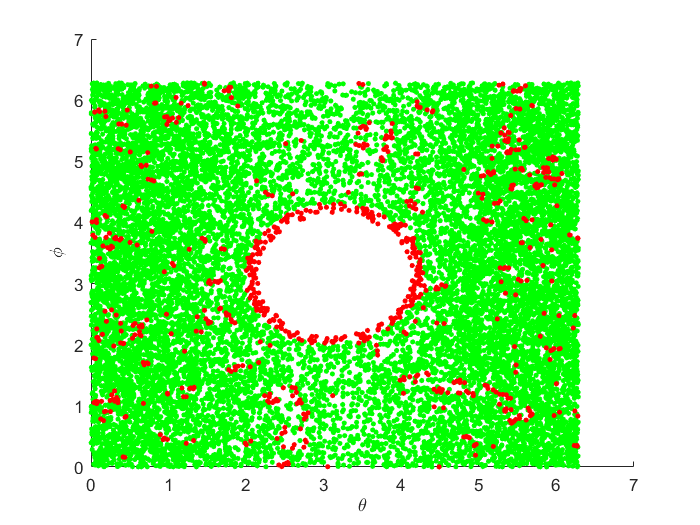}}
  \hfill
 \subfigure[SPINVER]{\includegraphics{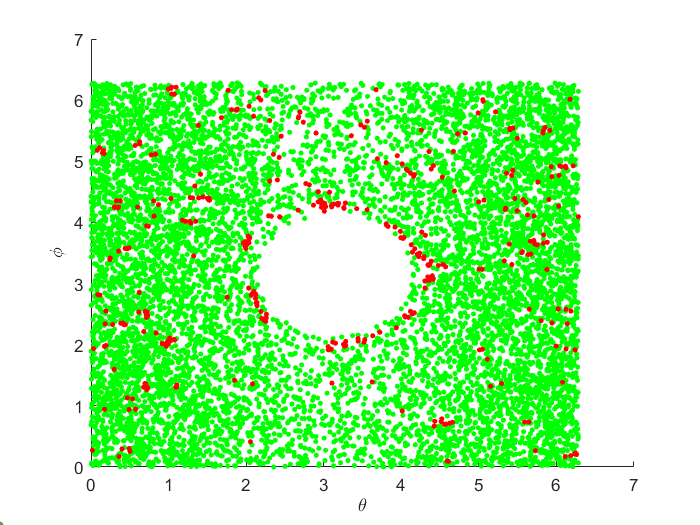}}
  \hfill
  \hfill
\caption{{The plot of the samples  $\{(\theta_{i}, \phi_i)\}_{i=1}^{9689}$ in the domain $D \subset \mathbb{R}^2$  (green and red) in the punctured Klein bottle example.  For each detected boundary point $z_i(\theta_{i}, \phi_i) \in \partial \mathcal{X}$, the corresponding $(\theta_{i}, \phi_i)$  is plotted as a red point. }}\label{figure: punctured Klein bottle}
\end{figure}
In the above results, $\alpha$-shape algorithm can successfully detects the boundary points when the dimension of $M$ equals the dimension of the ambient space, regardless of the distributions of the data.  However, it fails to handle the scenario when the manifold $M$ has a lower dimension. Additionally, the algorithm becomes computationally impractical when the dimension of the ambient space is large. The BAND, BORDER, BRIM,  LEVER, and SPINVER algorithms struggle to detect boundary points due to both the non-uniform distribution of the data and the extrinsic curvature of $\iota(M)$. In contrast, BD-LLE successfully identifies the boundary points in all examples and exhibits the best performance in the V-cut torus, T-cut torus, and Klein bottle examples, regardless of the extrinsic geometry of the manifold and data distributions.

\section{Boundary detection on noisy data}\label{Section: Boundary detection on noisy data}
{Previously, we consider the point cloud  $\mathcal{X}=\{z_i=\iota(x_i)\}_{i=1}^n$, where $\{x_i\}_{i=1}^n$ are sampled from a manifold with boundary $M$, and $M$ is isometrically embedded in $\mathbb{R}^p$ through $\iota$. In this section, for $\{x_i\}_{i=1}^n$ on $M$, we consider a noisy point cloud $\mathcal{X}=\{z_i\}_{i=1}^n \subset \mathbb{R}^p$ where $z_i=\iota(x_i)+\eta_i$ and $\eta_i \stackrel{i.i.d}{\sim} \mathcal{N}(0, \sigma^2 I_{p\times p})$ are sampled independently from $x_i$ for $i=1, \cdots, n$. Our goal is to detect the boundary points $\partial\mathcal{X} \subset \mathcal{X}$, such that $\partial \mathcal{X}$ consists of $\{z_i\}$ corresponding to all $\iota(x_i)$ in a small neighborhood of $\partial \iota(M)$ in $\iota(M)$.

Directly applying boundary detection algorithms may not accurately identify the boundary points of $\iota(M)$ from the noisy point cloud due to several factors. The components of the noise $\eta_i$ perpendicular to $\iota(M)$ at $\iota(x_i)$ cause the noisy points to be distributed in a tubular neighborhood of $\iota(M)$ in $\mathbb{R}^p$, which itself is a $p$-dimensional manifold with boundary.  Thus, interior points may be incorrectly identified as boundary points. Moreover, since $\eta_i$ has components tangent to $\iota(M)$ at $\iota(x_i)$, some boundary points may be displaced into the interior of $\iota(M)$ under the noise, while some interior points may be shifted close to the boundary, further complicating the boundary detection process.

We propose improving boundary detection performance through Diffusion Maps (DM) \cite{coifman2006diffusion}, a dimension reduction technique that constructs a kernel normalized graph Laplacian $L_{DM} \in \mathbb{R}^{n \times n}$ from a point cloud $\{z_i\}_{i=1}^n \subset \mathbb{R}^p$ and a kernel function with bandwidth $\epsilon_{DM}$.  Let $(\lambda^{DM}_i, V_i)_{i=0}^{n-1}$ be the orthonormal eigenpairs of $L_{DM}$ ordered by increasing eigenvalues.  Each $z_i$ is mapped to a low-dimensional space $\mathbb{R}^\ell$ through $z_i \rightarrow \tilde{z}_i=(V_1(i), V_2(i), \cdots, V_\ell(i))$. Refer to Section \ref{review of DM section} of the Supplementary Material for a review of the DM algorithm and {its theoretical foundation of DM for dimension reduction when the point cloud is distributed on a closed manifold.} Recent studies \cite{el2016graph, shen2020scalability, dunson2020diffusion, ding2022impact} show that DM is robust to noise. Moreover, when applied to clean points on $\iota(M)$, we expect the map $(V_1, V_2, \cdots, V_\ell)$  approximates a discretization of a diffeomorphism from $\iota(M)$ to an embedded manifold with boundary $\tilde{\iota}(\tilde{M})$ in $\mathbb{R}^\ell$ over the clean points. Hence, applying DM to a noisy point cloud $\mathcal{X}$ around $\iota(M)$ produces a much less noisy point cloud  $\mathcal{X}_{DM} =\{\tilde{z}_i\}_{i=1}^n \subset \mathbb{R}^{\ell}$ around $\tilde{\iota}(\tilde{M})$,  establishing a correspondence between points in a small neighborhood of $\partial \iota(M)$ in $\iota(M)$ and those in a small neighborhood of $\partial \tilde{\iota}(\tilde{M})$ in $\tilde{\iota}(\tilde{M})$. A boundary detection algorithm can identify the boundary points $\partial\mathcal{X}_{DM}$ from $\mathcal{X}_{DM}$.  The points  $\partial \mathcal{X}$, consisting of $\{z_i\} \subset \mathcal{X}$ associated with all $\tilde{z}_i$ in $\partial\mathcal{X}_{DM}$, should correspond to $\{\iota(x_i)\}$ in a small neighborhood of $\partial \iota(M)$ in $\iota(M)$, thereby representing the detected boundary points  in $\mathcal{X}$. We illustrate the performance of the proposed method through the following example.  

Consider a surface with boundary $\iota(M)$ in $\mathbb{R}^{500}$ parametrized by
{$$f(u,v)=(u,v,  0.2\sin(2\pi(u^2+v^2)), \cdots, a_1u^2+b_1v^2, a_{22}u^2+b_{22}u^2, 0\cdots,0) \in \mathbb{R}^{500}, \quad u^2+v^2 \leq 1,$$
where $a_j \stackrel{i.i.d}{\sim} \mathcal{N}(0, 0.1^2)$, and $b_j \stackrel{i.i.d}{\sim} \mathcal{N}(0, 0.05^2)$ for $j=1, \cdots, 22$. Thus, $\iota(M)$ is an embedded (curved) disk in $\mathbb{R}^{500}$. We randomly sample $\{(u_{i},v_{i})\}_{i=1}^{7897}$  uniformly on the unit disk} to obtain non-uniform samples $\mathcal{X}_{nn}=\{f(u_i,v_i)\}_{i=1}^{7897}$ on $\iota(M)$. Suppose $\eta_i \stackrel{i.i.d}{\sim} \mathcal{N}(0, \sigma^2 I_{500 \times 500})$ with $\sigma=0.05$ for $i=1, \cdots, 7897$. The noisy point cloud is given by $\mathcal{X}=\{z_i=(f(u_i,v_i)+\eta_i)\}_{i=1}^{7897}$. Refer to Figure \ref{fig: noisy surface} where we plot the projections of $\mathcal{X}_{nn}$ and $\mathcal{X}$ onto their first three coordinates. 

For the detected boundary points $\partial\mathcal{X}$ by an algorithm, we identify the corresponding points $\partial\mathcal{X}_{nn}$ in $\mathcal{X}_{nn}$. The $F1_{max}$ metric of $\partial\mathcal{X}$ is computed by applying $\mathcal{X}=\mathcal{X}_{nn}$ and $\partial\mathcal{X}=\partial\mathcal{X}_{nn}$  in \eqref{F1 def 1} and \eqref{F1 def 2}. This metric evaluates whether the corresponding clean points of the detected boundary points coincide with the points within some $r$-neighborhood of $\partial \iota(M)$. Note that the projection $f(u,v) \in \iota(M) \rightarrow (u,v)$ is a diffeomorphism which maps an $r$-neighborhood of $\partial \iota(M)$ to a $r'$-neighborhood of the unit disk. Therefore, if $z_i$ is detected as a boundary point, we plot the corresponding $(u_i, v_i)$ on the unit disk to better visualize the performance of the boundary point detection. We compare the results of BD-LLE with different boundary detection algorithms. For BD-LLE , we use the $\epsilon$-radius ball search scheme. The scale parameter $\epsilon$ is chosen within the range between $\epsilon_{min}$ and $\epsilon_{max}$ as outlined in Section \ref{selection of eps K}, while the regularizer $c$ is selected according to \eqref{selection of the regularizer 0} in Section \ref{Selection of the regularizer}.
\begin{figure}
    \centering
    \includegraphics[height=3cm, width=1\linewidth]{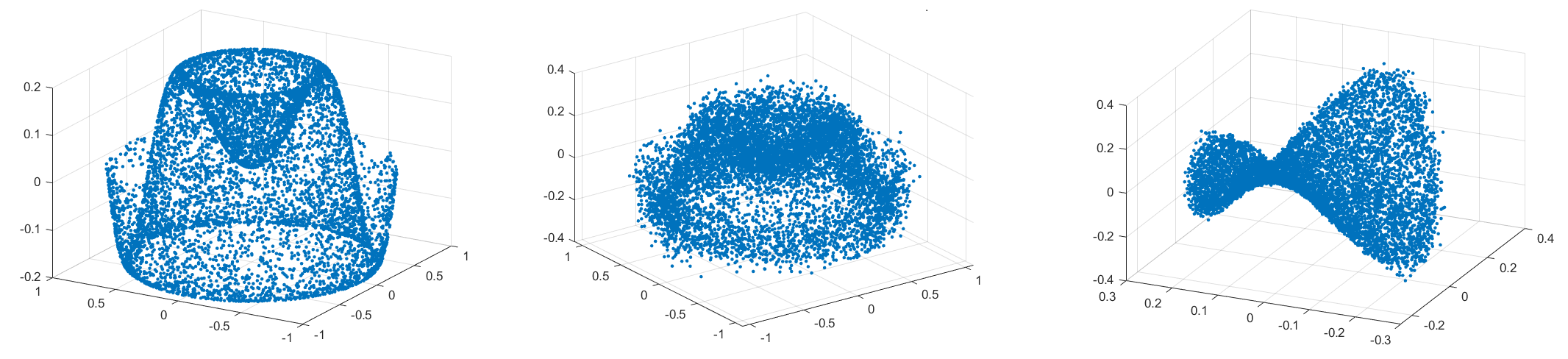}
    \caption{Left and middle panels: Plot of the projection of the clean point cloud $\mathcal{X}_{nn}$ and the noisy point cloud $\mathcal{X}$ onto the first three coordinates respectively.  Right panel: Plot of $\mathcal{X}_{DM}$ , which is constructed by applying DM to $\mathcal{X}$. $\mathcal{X}_{DM}$ is distributed on a saddle surface diffeomorphic to $\iota(M)$ in $\mathbb{R}^3$.}
    \label{fig: noisy surface}
\end{figure}

We directly apply the boundary detection algorithms to $\mathcal{X}$ to identify boundary points $\partial \mathcal{X}_1$. The scale parameters and $F1_{\max}$ for $\partial \mathcal{X}_1$ are summarized  for each algorithm in Table \ref{table noisy parameter}. To illustrate the performances, we plot $(u_i, v_i)$ corresponding to $z_i \in \partial \mathcal{X}_1$, along with $\{(u_{i},v_{i})\}_{i=1}^{7897}$ in Figure \ref{figure: noisy surface}. Due to the challenges discussed previously, all boundary detection algorithms fail to accurately detect the boundary points. Next, we apply the DM to $\mathcal{X}$ {with $\epsilon_{DM}=0.2$}. This creates a map $z_i \in \mathcal{X} \rightarrow \tilde{z}_i=(V_1(i), V_2(i), V_3(i)) \in \mathbb{R}^3$. We then apply the boundary detection algorithms to the lower-dimensional set $\mathcal{X}_{DM}=\{\tilde{z}_i\}_{i=1}^{7897}$ to identify boundary points $\partial \mathcal{X}_{DM}$.  Refer to Figure \ref{fig: noisy surface} for a plot of $\mathcal{X}_{DM}$. The points  $\partial \mathcal{X}_2$, consisting of $\{z_i\}$ associated with all $\tilde{z}_i$ in $\partial\mathcal{X}_{DM}$, represent the detected boundary points  in $\mathcal{X}$. The scale parameters for each algorithm applied to $\partial \mathcal{X}_{DM}$, along with $F1_{\max}$ for $\partial \mathcal{X}_2$, are summarized in Table \ref{table noisy parameter}. We plot $(u_i, v_i)$ corresponding to $z_i \in \partial \mathcal{X}_2$ and  $\{(u_{i},v_{i})\}_{i=1}^{7897}$ for an illustration of the performances in Figure \ref{figure: noisy surface DM}. After applying the DM, the performance of all boundary detection algorithms, except SPINVER, is significantly improved, with BD-LLE exhibiting the best performance.

\begin{table}
\caption{Summary of the scale parameters in different algorithms applied to $\mathcal{X}$ and $\mathcal{X}_{DM}$, as well as $F1_{\max}$ of the detected boundary points $\partial \mathcal{X}_1$ and $\partial \mathcal{X}_2$}\label{table noisy parameter}
\begin{center}
\begin{tabular}{ |c|c|c|c|c|c|} 
 \hline
Algorithms&  $Parameter\hspace{0.6mm}  for\hspace{0.6mm}   \mathcal{X}$ & $F1_{\max}$ of $\partial \mathcal{X}_1$ & $Parameter \hspace{0.6mm}  for\hspace{0.6mm}   \mathcal{X}_{DM}$ &  $F1_{\max}$ of $\partial \mathcal{X}_2$ \\
 \hline
BD-LLE&  $\epsilon=1.6$ & 0.2940 & $\epsilon=0.1$ &   \color{red} 0.7481  \\
 \hline
BAND&  K=90 & 0.1926 & K=90 & 0.6356 \\
 \hline
BORDER&  K=90 & 0.0754 & K=90 & 0.4454\\
 \hline
BRIM &  $\epsilon=1.6$ & 0.0103 &  $\epsilon=0.1$ & 0.4094 \\
 \hline
CPS&  $\epsilon=1.6$ & 0.3964 &  $\epsilon=0.1$ & 0.6924 \\
 \hline
LEVER&   K=90 & 0.1454& K=90 & 0.4484 \\
 \hline
SPINVER&   K=90 & 0.0468 & K=90 & 0.1053\\
 \hline
\end{tabular}
\end{center}
\end{table}

\begin{figure}
\centering
\SetFigLayout{2}{4}
\subfigure[BD-LLE]{\includegraphics{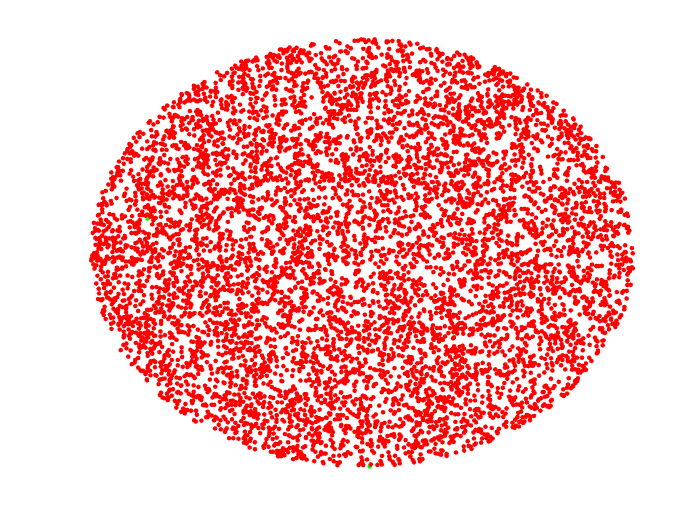}}
\hfill
\subfigure[BAND]{\includegraphics{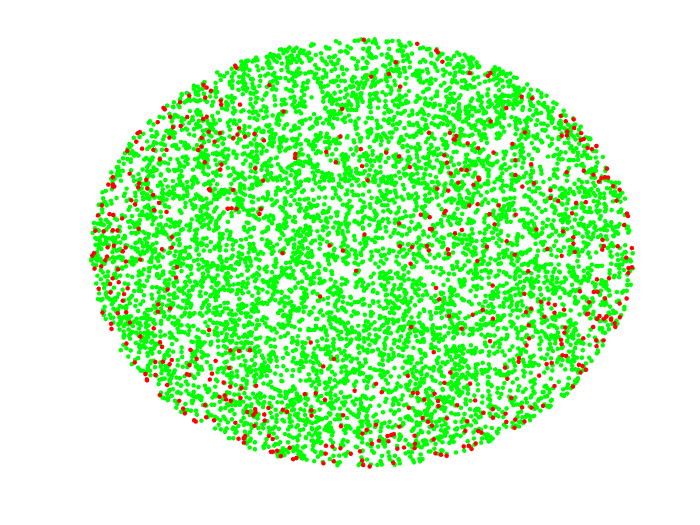}}
  \hfill
  \subfigure[BORDER]{\includegraphics{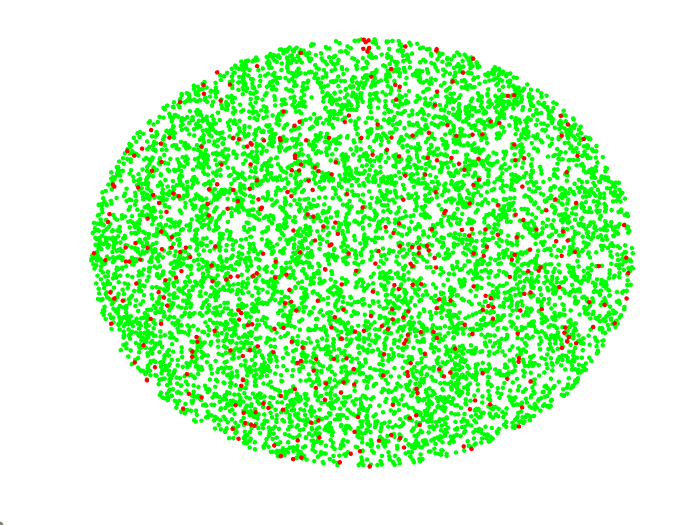}}
  \hfill
  \subfigure[BRIM]{\includegraphics{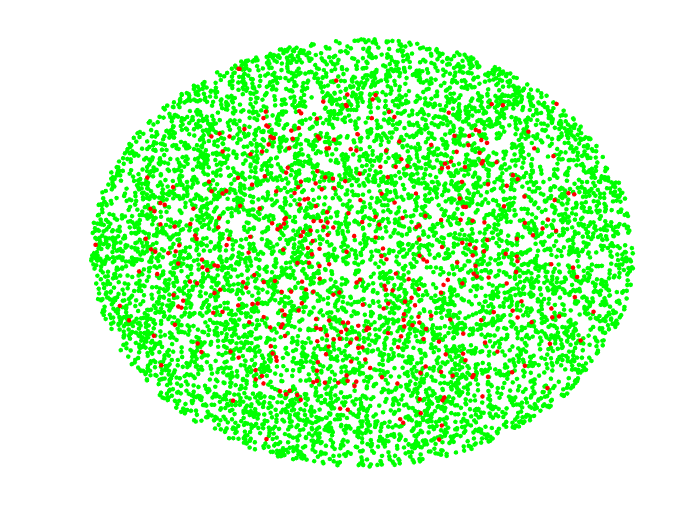}}
  \hfill
  \subfigure[CPS]{\includegraphics{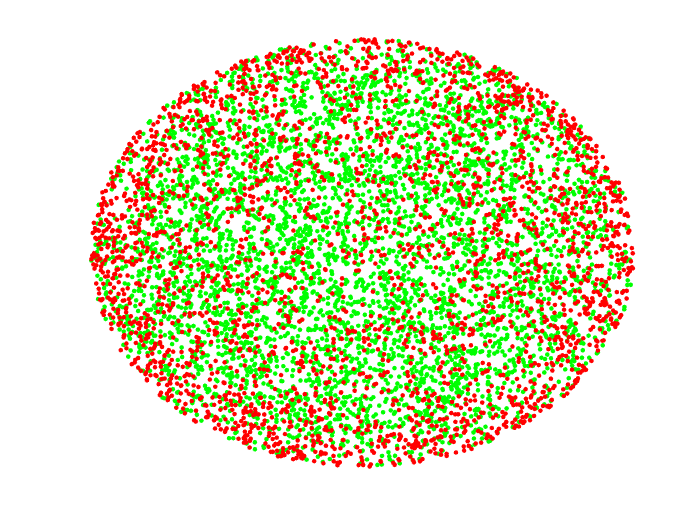}}
  \hfill
\hspace*{-3.5cm} \subfigure[LEVER]{\includegraphics{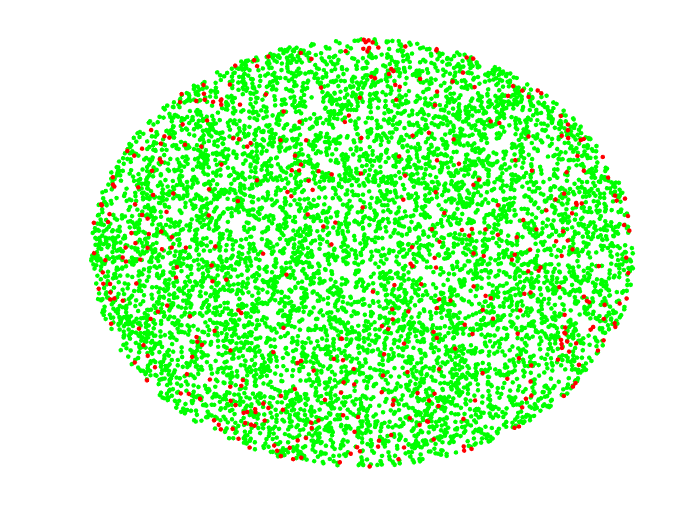}}
  \hfill
\hspace*{-3.5 cm} \subfigure[SPINVER]{\includegraphics{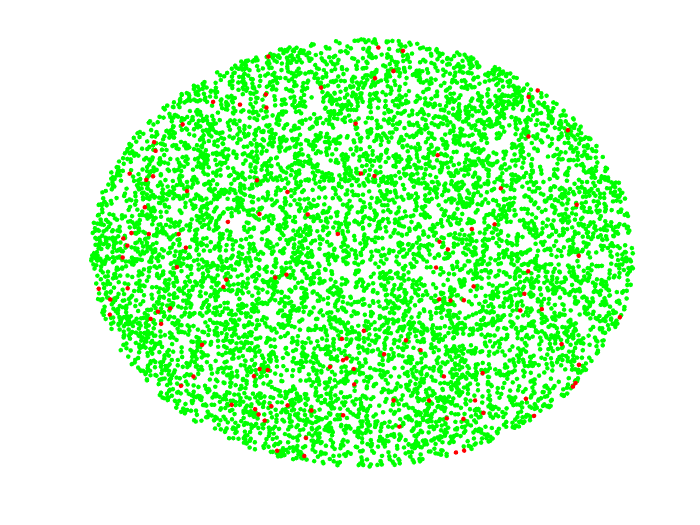}}
  \hfill
     \hfill
\caption{ Plot of $(u_i, v_i)$ (red) corresponding to $z_i $ from the detected boundary points $\partial \mathcal{X}_1$ for different algorithms along with $\{(u_{i},v_{i})\}_{i=1}^{7897}$ (red and green) in the domain of the parametrization of $\iota(M)$. }\label{figure: noisy surface}
\end{figure}

\begin{figure}
\centering
\SetFigLayout{2}{4}
  \subfigure[BD-LLE]{\includegraphics{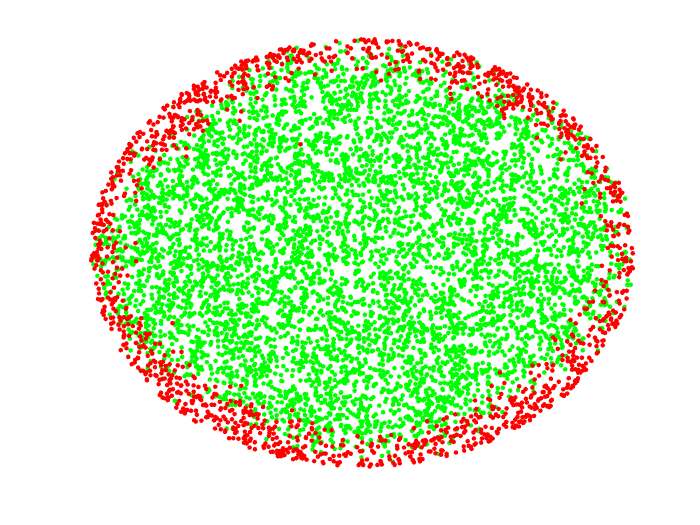}}
   \hfill
   \subfigure[BAND]{\includegraphics{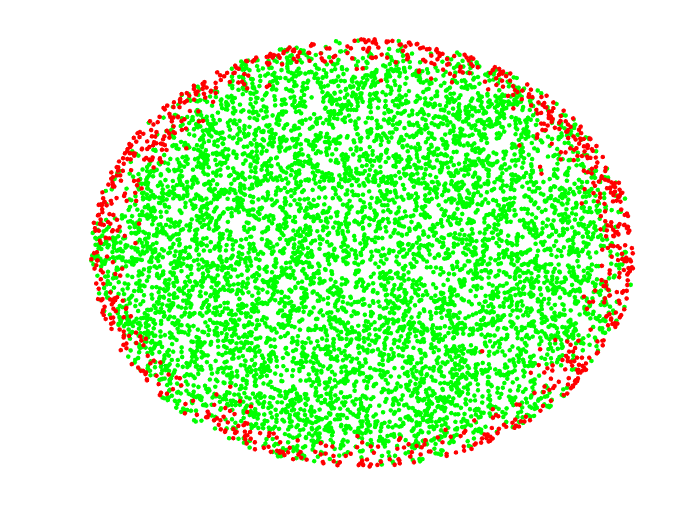}}
  \hfill
  \subfigure[BORDER]{\includegraphics{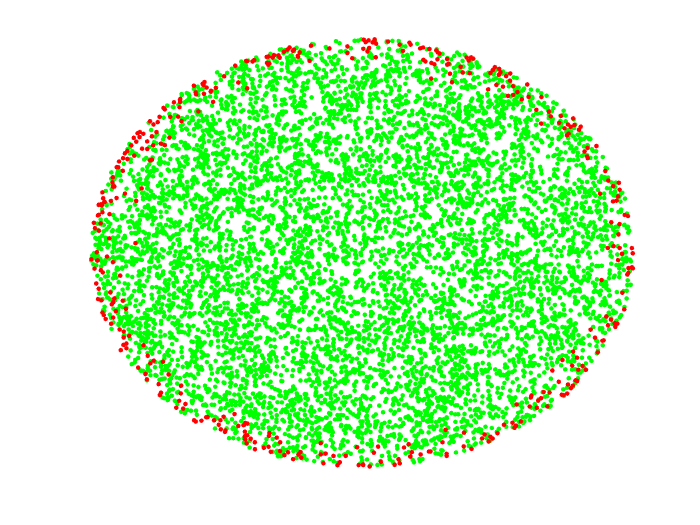}}
  \hfill
  \subfigure[BRIM]{\includegraphics{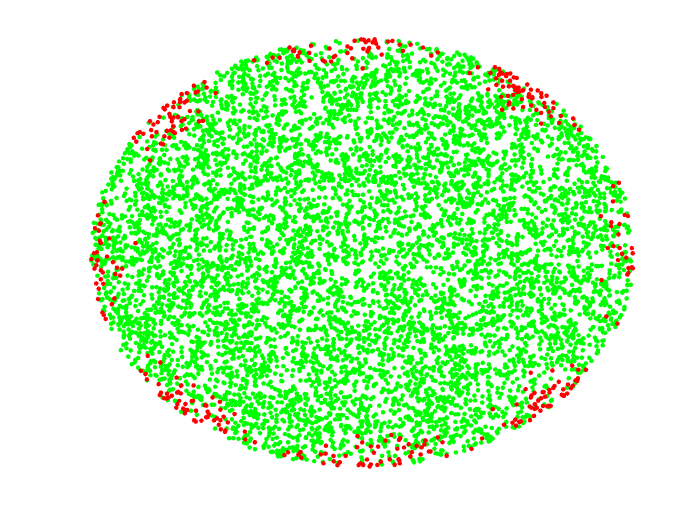}}
  \hfill
  \subfigure[CPS]{\includegraphics{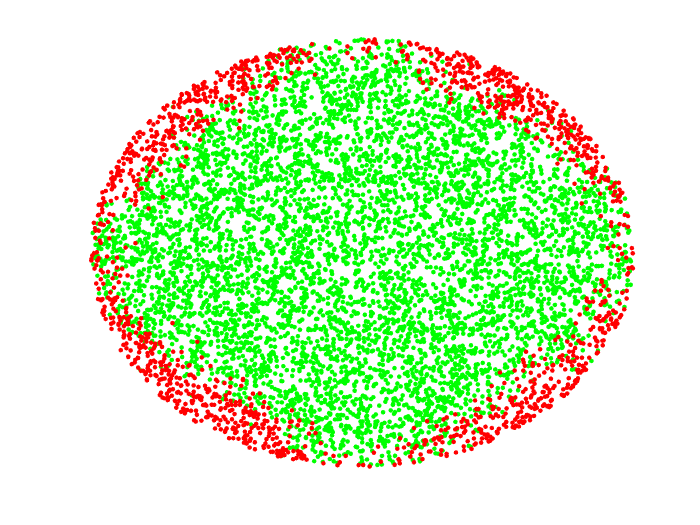}}
  \hfill
 \hspace*{-3.5cm} \subfigure[LEVER]{\includegraphics{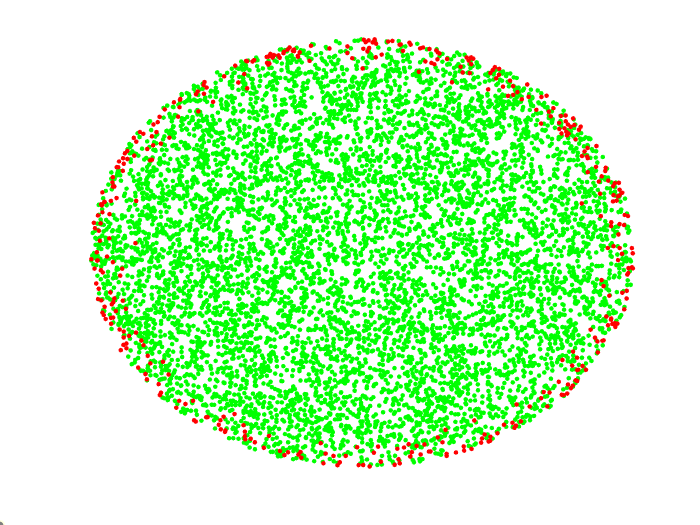}}
  \hfill
\hspace*{-3.5cm}  \subfigure[SPINVER]{\includegraphics{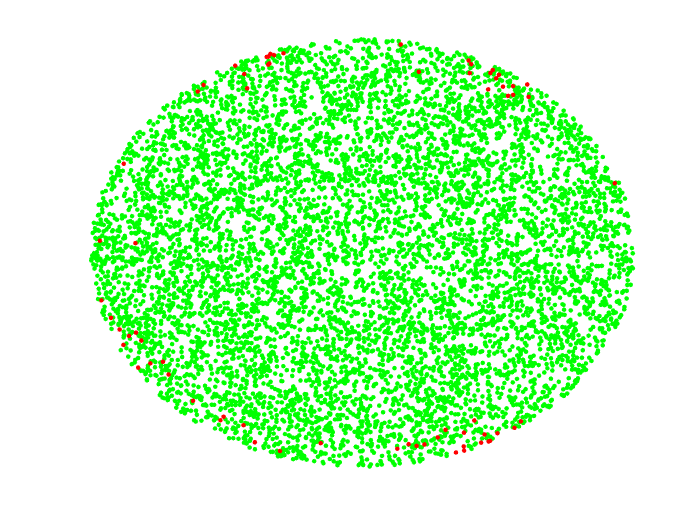}}
  \hfill
   \hfill
\caption{Plot of $(u_i, v_i)$ (red) corresponding to $z_i $ from the detected boundary points $\partial \mathcal{X}_2$ for different algorithms along with $\{(u_{i},v_{i})\}_{i=1}^{7897}$ (red and green) in the domain of the parametrization of $\iota(M)$.}\label{figure: noisy surface DM}
\end{figure}
}

\section{Discussion}
In this work, we delve into the challenge of identifying boundary points from samples on an embedded compact manifold with boundary.  We introduce the BD-LLE algorithm, which utilizes barycentric coordinates within the framework of LLE. This algorithm can be implemented using either an $\epsilon$-radius ball scheme or a KNN scheme. Barycentric coordinates are closely related to the local covariance matrix. We conduct bias and variance analyses of the BD-LLE algorithm under both nearest neighbor search schemes by exploring the spectral properties of the local covariance matrix. These analyses aid in parameter selection. We highlight several potential directions for future research.

The LLE can be considered as a kernel-based dimension reduction method with \eqref{solution y_n} representing an asymmetric kernel function adaptive to the data distribution and the geometry of the underlying manifold. The previous studies \cite{Wu_Wu:2017, wu2018locally} analyze the LLE within the $\epsilon$-radius ball scheme. A future direction involves applying our developed tools to analyze LLE within the KNN scheme. We expect establishing even more challenging results of the spectral convergence of the LLE in the KKN scheme on manifolds with or without boundary.

{In Section \ref{Section: Boundary detection on noisy data}, we apply DM to a noisy point cloud sampled from a compact, embedded manifold with boundary in a high-dimensional space, for the purposes of dimension reduction and denoising. As discussed in Section \ref{review of DM section} of the Supplementary Material, the theoretical foundation for DM is well established when the point cloud lies on a closed manifold; in such cases, DM is known to approximate an embedding of the manifold over the point cloud. Although experimental evidence suggests that DM can also approximate an embedding when the underlying manifold of the point cloud has a boundary, the rigorous theoretical analysis of how DM preserves the manifold structure in this setting remains an open question.}

Another potential direction of research concerns the boundary points augmentation. Given that the boundary is a lower-dimensional subset of the manifold, the limited number of boundary points may not suffice for accurately capturing the geometry of the boundary. Notably, the boundary comprises disjoint unions of closed manifolds without boundary. Hence, one strategy involves applying spectral clustering method to organize  detected boundary points into groups corresponding to different connected components. Closed manifold reconstruction methods \cite{dunson2021inferring} can be further employed on each group to interpolate more points on the boundary. 

\section*{Acknowledgment}
The authors thank Dr. Hau-Tieng Wu for valuable discussions and insightful suggestions. 

\appendix

\section{Analyses of  the local covariance matrix in the $\epsilon$-radius ball scheme}\label{local covariance matrix eps scheme}
We provide the bias and variance analyses of the local covariance matrix in the $\epsilon$-radius ball scheme. The proof of the theorem is a combination of Lemma 31 and Lemma 37 in \cite{wu2018locally}.

\begin{theorem}\label{local covariance epsilon}
Under Assumptions \ref{main assumption 1}, \ref{main assumption 2}, and \ref{assumption tangent space}, let $\frac{1}{n}C_{n,k}$ be the local covariance matrix at $z_k$ constructed in the $\epsilon$-radius ball scheme, where $C_{n,k}$ is defined in \eqref{local covariance matrix}.  Suppose $\epsilon=\epsilon(n)$ such that $\frac{\sqrt{\log(n)}}{n^{1/2}\epsilon^{d/2+1}}\to 0$ and $\epsilon\to 0$ as $n\to \infty$. We have with probability greater than $1-n^{-2}$ that for all $k=1,\ldots,n$,
\begin{align*}
\frac{1}{n\epsilon^{d+2}} C_{n,k}=  P(x_k){
\begin{bmatrix}
M^{(0)}(x_k,\epsilon) & 0 \\
0& 0 
\end{bmatrix}}+{
\begin{bmatrix}
M^{(11)}(x_k,\epsilon) & M^{(12)}(x_k,\epsilon) \\
M^{(21)}(x_k,\epsilon) & 0 
\end{bmatrix}} \epsilon
+O(\epsilon^2) +O\Big(\frac{\sqrt{\log(n)}}{n^{1/2}\epsilon^{d/2}}\Big).
\end{align*}
The block matrices in the above expression satisfy the following properties.
\begin{enumerate}[label=(\arabic*)]
\item $M^{(0)}(x,\epsilon) \in \mathbb{R}^{d \times d}$ is a diagonal matrix. The $i$-th diagonal entry is $\sigma_{2}(\tilde{\epsilon}(x_k) ,\epsilon)$ for $1 \leq i \leq d-1$ and the $d$th diagonal entry is $\sigma_{2,d}(\tilde{\epsilon}(x_k) ,\epsilon)$.
\item $M^{(11)}(x_k,\epsilon)$ is symmetric and  $M^{(12)}(x_k,\epsilon)=M^{(21)}(x_k,\epsilon)^\top$. The entries in $M^{(11)}(x_k,\epsilon)$, $M^{(12)}(x_k,\epsilon)$, and $M^{(21)}(x_k,\epsilon)$ are $0$ when $ x_k \in M \setminus  M_{\epsilon}$.
\item For all $x_k$, the magnitude of the entries in $M^{(11)}(x_k,\epsilon)$, $M^{(12)}(x_k,\epsilon)$, and $M^{(21)}(x_k,\epsilon)$ can be bounded from above by a constant depending on $d$, the $C^1$ norm of $P$, the second fundamental form of $\iota(M)$ in $\mathbb{R}^p$ at $\iota(x_k)$, and the second fundamental form of $\partial M$ in $M$ at $x_{\partial,k}$.
\item $O(\epsilon^2)$ and $O\Big(\frac{\sqrt{\log(n)}}{n^{1/2}\epsilon^{d/2}}\Big)$ represent $p \times p $ matrices whose entries are of orders $O(\epsilon^2)$ and $O\Big(\frac{\sqrt{\log(n)}}{n^{1/2}\epsilon^{d/2}}\Big)$ respectively.
\end{enumerate}
\end{theorem}
Under the assumptions in the above theorem,  suppose $\epsilon=\epsilon(n)$ such that $\frac{\sqrt{\log(n)}}{n^{1/2}\epsilon^{d/2+1}}\to 0$ and $\epsilon\to 0$ as $n\to \infty$. The above theorem implies that with probability greater than $1-n^{-2}$, for all $ x_k \in M \setminus  M_{\epsilon}$,
\begin{align*}
\frac{1}{n\epsilon^{d+2}} C_{n,k}= \frac{ P(x_k) |S^{d-1}|}{d(d+2)}{
\begin{bmatrix}
I_{d \times d}& 0 \\
0& 0 
\end{bmatrix}}+O(\epsilon^2) +O\Big(\frac{\sqrt{\log(n)}}{n^{1/2}\epsilon^{d/2}}\Big).
\end{align*}
This result matches the analysis of the local covariance matrix constructed from samples on a closed manifold.

Since the eigenvalues $\{\lambda_{n,i}(z_k)\}_{i=1}^p$ of $C_{n, k}$ are invariant under translation of $\mathcal{X}$ and orthogonal transformation on $\mathbb{R}^p$. By applying a perturbation argument (Appendix A in \cite{Wu_Wu:2017}), for all $k$
\begin{align*}
& \frac{\lambda_{n,i}(z_k)}{n}= P(x_k)\sigma_{2}(\tilde{\epsilon}(x_k) ,\epsilon) \epsilon^{d+2} +O(\epsilon^{d+3}) +O\Big(\sqrt{\frac{\log(n)}{n}}\epsilon^{d/2+2}\Big) \quad & & \text{for $i=1 ,\cdots, d-1$};\\
& \frac{\lambda_{n,i}(z_k)}{n}= P(x_k)\sigma_{2,d}(\tilde{\epsilon}(x_k) ,\epsilon) \epsilon^{d+2} +O(\epsilon^{d+3}) +O\Big(\sqrt{\frac{\log(n)}{n}}\epsilon^{d/2+2}\Big) & &\text{for $i=d$}; \\
& \frac{\lambda_{n,i}(z_k)}{n}= O(\epsilon^{d+4})  +O\Big(\sqrt{\frac{\log(n)}{n}}\epsilon^{d/2+2}\Big) & &\text{for $i=d+1, \cdots p$}.
\end{align*}
Suppose $U_{n,k} \in O(p)$ is the corresponding orthonormal eigenvector matrix of $C_{n,k}$. Suppose $X_{k,1} \in O(d)$ and $ X_{k,2} \in O(p-d)$. If $x_k \in M_\epsilon$, then
\begin{align*}
U_{n,k}=\begin{bmatrix}
X_{k,1}& 0 \\
0& X_{k,2}
\end{bmatrix}+O(\epsilon) +O\Big(\frac{\sqrt{\log(n)}}{n^{1/2}\epsilon^{d/2}}\Big).
\end{align*}
If $x_k \in M \setminus M_\epsilon$, then
\begin{align*}
U_{n,k}=\begin{bmatrix}
X_{k,1}& 0 \\
0& X_{k,2}
\end{bmatrix}+O(\epsilon^2) +O\Big(\frac{\sqrt{\log(n)}}{n^{1/2}\epsilon^{d/2}}\Big).
\end{align*}

Since we propose Assumption \ref{assumption tangent space}, $\begin{bmatrix}
X_{k,1}\\
0
\end{bmatrix}$ forms an orthonormal basis of $\iota_*T_{x_k}M$, and $\begin{bmatrix}
0\\
X_{k,2}
\end{bmatrix}$ forms an orthonormal basis of $(\iota_*T_{x_k}M)^\bot$. Therefore, when $x_k\in M_\epsilon$, an orthonormal basis of $\iota_*T_{x_k}M$ can be approximated by $U_{n,k}\begin{bmatrix}
I_{d \times d}\\
0_{(p-d) \times d}
\end{bmatrix}$, up to a matrix whose entries are of order $O(\epsilon) +O\Big(\frac{\sqrt{\log(n)}}{n^{1/2}\epsilon^{d/2}}\Big)$.

\section{Proof of Theorem \ref{prop BI 1}}\label{proof prop BI}

\subsection{Preliminary definitions}
Under Assumptions \ref{main assumption 1} and \ref{main assumption 2}, let $X$ be the random variable associated with the probability density function $P$ on $M$. Then, for any function $f$ on $M$ and any function {$F: \iota(M) \rightarrow \mathbb{R}^q$}, we have
{\begin{align}
&\mathbb{E}[f(X)]:= \int_{M}f(x)d\mathbb{P}_X=\int_{M}f(x)P(x)d\mathfrak{m}(x), \nonumber \\
&\mathbb{E}[F(\iota(X))f(X)]:= \int_{M}F(\iota(x))f(x)d\mathbb{P}_X=\int_{M}F(\iota(x))f(x)P(x)d\mathfrak{m}(x) \in \mathbb{R}^q. \nonumber
\end{align}}
Based on the above definitions, the expectation of the local covariance matrix at $\iota(x)$ is defined as
\begin{equation}
C_x:=\mathbb{E}[(\iota(X)-\iota(x))(\iota(X)-\iota(x))^{\top}\chi_{\big( B^{\mathbb{R}^p}_{\epsilon}(\iota(x)) \cap \iota(M) \big) } (\iota(X))]\in\mathbb{R}^{p\times p}. \nonumber
\end{equation}
Suppose $\texttt{rank}(C_x)=r\leq p$. Clearly $r$ depends on $x$, but we ignore $x$ for the simplicity. Denote the eigen-decomposition of $C_x$ as $C_x=U_x \Lambda_x U_x^\top$, where $U_x\in O(p)$ is composed of eigenvectors and $\Lambda_x$ is a diagonal matrix with the associated eigenvalues $\lambda_1 \geq \lambda_2 \geq \cdots \geq  \lambda_r>\lambda_{r+1}= \cdots = \lambda_p=0$. 

Through the eigenpairs of $C_x$, we can construct an augmented vector $\mathbf{T}(x)$ at $x \in M$.  
\begin{definition}\label{DefAugmented}
The augmented vector at $x\in M$ is 
\begin{align}
\mathbf{T}(x)^\top & = \mathbb{E}[(\iota(X)-\iota(x))\chi_{\big( B^{\mathbb{R}^p}_{\epsilon}(\iota(x)) \cap \iota(M) \big) } (\iota(X))] ^\top U_xI_{p,r}(\Lambda_x+\epsilon^{d+3}I_{p\times p})^{-1} U_x^\top \in \mathbb{R}^p\,, \nonumber
\end{align}
which is a $\mathbb{R}^p$-valued vector field on $M$.
\end{definition}

\subsection{Lemmas for the variance analysis}
{For notation simplicity, we define a vector
{\begin{align}
\mathbf{T}_{n,x_k}:= \mathcal{I}_c(C_{n,k})  G_{n,k}\boldsymbol{1}_{N_k} \label{Definition:Tn}\,.
\end{align}}
From \eqref{proof prop1 main} in Proposition \ref{invariant B_k}, we have}
\begin{align*}
B_k=\frac{\mathbf{T}^\top_{n,x_k}  G_{n,k}\boldsymbol{1}_{N_k}}{{N_k}}=\frac{\frac{1}{n\epsilon^d} \mathbf{T}^\top_{n,x_k}  G_{n,k}\boldsymbol{1}_{N_k}}{\frac{1}{n\epsilon^d}{N_k}}.\,
\end{align*}

We will study the terms $\frac{1}{n\epsilon^d}{N_k}$ and $\frac{1}{n\epsilon^d} \mathbf{T}^\top_{n,x_k}  G_{n,k}\boldsymbol{1}_{N_k}$  seperately in the next two lemmas.  We first introduce the following definitions.
\begin{definition}\label{definition of collection of balls}
Denote $B^{\mathbb{R}^p}$ to be a closed ball in $\mathbb{R}^p$ without specifying the center.  We define the following collections of balls intersecting $\iota(M)$,
\begin{align}
\mathcal{B}_r(\iota(M))=\left\{B^{\mathbb{R}^p} \cap \iota(M) \Big|\,B^{\mathbb{R}^p} \cap \iota(M) \not= \emptyset , \mbox{radius of } B^{\mathbb{R}^p} \leq r\right\}\,.
\end{align}
{For data points $\mathcal{X}$},  if $A \in \mathcal{B}_r(\iota(M))$, then $N(A)=|A \cap \mathcal{X}|$. 
\end{definition}

By Definition \ref{def of R(x)}, for any $x \in M$, $N_a(x)=|B_{a}^{\mathbb{R}^p}(\iota(x)) \cap \mathcal{X}|$. Now, we are ready to provide the variance analysis which relates $\frac{1}{n\epsilon^d}{N_\epsilon(x)}$ to $\frac{1}{\epsilon^d} \mathbb{E}[\chi_{\big( B^{\mathbb{R}^p}_{\epsilon}(\iota(x)) \cap \iota(M) \big) } (\iota(X))]$ for any $x \in M$.

\begin{lemma}\label{Prop 1 Lemma 1}

\

\begin{enumerate}[label=(\arabic*)]
\item
Suppose  $\sup_{A \in \mathcal{B}_{2r}(\iota(M))} \mathbb{E}[\chi_{A}(\iota(X))] \leq  b \leq \frac{1}{4}$. For $n$ large enough, with probability 
greater than $1-n^{-2}$, 
\begin{align*}
\sup_{A \in  \mathcal{B}_r(\iota(M))} |\frac{N(A)}{n}- \mathbb{E}[\chi_{A}(\iota(X))] |=O(\sqrt{\frac{b \log (n)}{n}}).
\end{align*}
\item
Suppose $\epsilon=\epsilon(n)$ so that $\frac{\sqrt{\log(n)}}{n^{1/2}\epsilon^{d/2+1}}\to 0$ and $\epsilon\to 0$ as $n\to \infty$. We have with probability greater than $1-n^{-2}$ that for all $x \in M$, 
\begin{equation}
\left|\frac{N_\epsilon(x)}{n\epsilon^d}  - \frac{1}{\epsilon^d} \mathbb{E}[\chi_{\big( B^{\mathbb{R}^p}_{\epsilon}(\iota(x)) \cap \iota(M) \big) } (\iota(X))]\right| =  O\Big(\frac{\sqrt{\log (n)}}{n^{1/2}\epsilon^{d/2}}\Big)\,,\nonumber
\end{equation}
where the constant in $O\Big(\frac{\sqrt{\log (n)}}{n^{1/2}\epsilon^{d/2}}\Big)$ depends on the $C^0$ norm of $P$ and the second fundamental form of $\iota(M)$.
\end{enumerate} 
\end{lemma}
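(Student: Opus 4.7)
The proof splits into a generic VC-type concentration step (part (1)) and a volume-of-intersection estimate (part (2)).

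For part (1), my plan is to treat this as a uniform empirical-process bound over a VC class. The family of closed balls in $\mathbb{R}^p$ has VC dimension $p+1$, and restricting the radii to be at most $r$ and taking the trace on the fixed set $\iota(M)$ does not increase this index, so $\mathcal{B}_r(\iota(M))$ is VC with dimension at most $p+1$. By Sauer's lemma, the number of distinct subsets of $\mathcal{X}$ realized by elements of $\mathcal{B}_{2r}(\iota(M))$ is at most $O(n^{p+1})$. For each fixed $A$ with $P(A)\le b$, Bernstein's inequality gives
\[
P\bigl(|N(A)/n - P(A)| > t\bigr) \leq 2\exp\!\left(-\frac{nt^2}{2b + 2t/3}\right).
\]
I would then combine this pointwise bound with symmetrization and a union bound over the $O(n^{p+1})$ sample-induced subsets (equivalently, invoke the relative Vapnik--Chervonenkis inequality or a Talagrand-type bound for VC classes localized to small variance). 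Setting $t = C\sqrt{b\log n/n}$ with $C$ large enough in terms of $p$ drives the failure probability below $n^{-2}$; the hypothesis $b\le 1/4$ keeps us in the variance-dominated regime of Bernstein, producing the $\sqrt{b}$ factor rather than only $\sqrt{1/n}$. The reason the hypothesis concerns $\mathcal{B}_{2r}$ rather than $\mathcal{B}_r$ is the symmetrization step: when one passes to a ghost sample, every $A\in\mathcal{B}_r$ is contained in a ball of radius $2r$ centered at a ghost-sample point once the nearest ghost sample lies within distance $r$, which is how the sample-induced covering of size $O(n^{p+1})$ enters.

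For part (2), I would apply part (1) with $r=\epsilon$ to the subclass of balls centered at points $\iota(x)\in\iota(M)$ (a subclass of $\mathcal{B}_\epsilon(\iota(M))$). The remaining step is to estimate $b$ uniformly in $x$: by standard manifold geometry (normal coordinates around $x$ together with control of the second fundamental form of $\iota(M)$ in $\mathbb{R}^p$), for $\epsilon$ small the $d$-dimensional volume of $B^{\mathbb{R}^p}_{2\epsilon}(\iota(x))\cap \iota(M)$ is $O(\epsilon^d)$ uniformly in $x$, hence
\[
\sup_{x\in M}\,\mathbb{E}\!\left[\chi_{B^{\mathbb{R}^p}_{2\epsilon}(\iota(x))\cap\iota(M)}(\iota(X))\right] \leq P_M\cdot O(\epsilon^d),
\]
which falls below $1/4$ once $\epsilon$ is sufficiently small. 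Substituting $b = O(\epsilon^d)$ into the conclusion of part (1) produces, uniformly in $x$,
\[
\left|\frac{N_\epsilon(x)}{n} - \mathbb{E}\!\left[\chi_{B^{\mathbb{R}^p}_\epsilon(\iota(x))\cap\iota(M)}(\iota(X))\right]\right| = O\!\left(\sqrt{\frac{\epsilon^d\log n}{n}}\right),
\]
and dividing through by $\epsilon^d$ yields the stated rate $O(\sqrt{\log n}/(n^{1/2}\epsilon^{d/2}))$. The running condition $\sqrt{\log n}/(n^{1/2}\epsilon^{d/2+1})\to 0$ is used only to guarantee this error term is $o(\epsilon)$, which is all that is needed downstream.

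The main obstacle is the variance-dependent uniform bound in part (1): a direct application of the classical VC inequality would only give the suboptimal $\sqrt{\log n/n}$ rate, and recovering the $\sqrt{b}$ improvement requires carefully combining Bernstein's inequality with a chaining or symmetrization argument tailored to VC classes, with explicit tracking of how constants depend on $p$. Part (2), by contrast, is a routine combination of the output of (1) with a standard volume-of-intersection estimate for embedded manifolds, together with checking that the constants only depend on the $C^0$ norm of $P$ and the second fundamental form of $\iota(M)$ as claimed.
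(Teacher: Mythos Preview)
Your plan is correct and matches the paper's route: the paper does not argue part~(1) from scratch but cites Lemma~A.3 of \cite{wu2022strong} (a VC/Bernstein uniform bound of exactly the type you sketch, noting that the argument there is insensitive to whether $M$ has boundary), and derives part~(2) from~(1) as in Corollary~2.2 of the same reference, the only manifold-specific ingredient being the estimate $\sup_{A\in\mathcal{B}_{2\epsilon}(\iota(M))}\mathbb{E}[\chi_A]=O(\epsilon^d)$.

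One small point to tighten in your part~(2): the hypothesis of~(1) requires this supremum over \emph{all} of $\mathcal{B}_{2\epsilon}(\iota(M))$, i.e.\ balls of radius at most $2\epsilon$ with arbitrary centers in $\mathbb{R}^p$ that meet $\iota(M)$, whereas your displayed bound only treats balls centered at $\iota(x)$ for $x\in M$. The patch is the elementary containment $A\subset B^{\mathbb{R}^p}_{4\epsilon}(\iota(x'))\cap\iota(M)$ for any $x'$ with $\iota(x')\in A$, which the paper invokes explicitly elsewhere and which reduces the general case to yours; this recentering, rather than the ghost-sample step, is also the actual reason the hypothesis is stated over $\mathcal{B}_{2r}$.
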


The proof of (1) in the above lemma is a direct consequence of Lemma A.3 in \cite{wu2022strong}.  As shown in Remark A.1 in \cite{wu2022strong},  the proof of Lemma A.3 in \cite{wu2022strong} does not rely on the underlying manifold structure. Hence, it still holds when $M$ is a manifold with boundary.  The proof of (2) in  Lemma \ref{Prop 1 Lemma 1} is a consequence of (1) and is the same as the proof of Corollary 2.2 in \cite{wu2022strong}. The manifold structure is involved in the estimation of $\sup_{A \in \mathcal{B}_{2\epsilon}(\iota(M))} \mathbb{E}[\chi_{A}(\iota(X))]$ which is of order $\epsilon^d$.  

Note that $N_k=N_\epsilon(x_k)-1$. When $\frac{\sqrt{\log(n)}}{n^{1/2}\epsilon^{d/2+1}}\to 0$ and $n$ is large enough,  $\frac{1}{n\epsilon^d} <\frac{\sqrt{\log (n)}}{n^{1/2}\epsilon^{d/2}}$. Hence, the following lemma is a consequence of (2) in Lemma \ref{Prop 1 Lemma 1}.

\begin{lemma}\label{Prop 1 Lemma 1.1}
Suppose $\epsilon=\epsilon(n)$ so that $\frac{\sqrt{\log(n)}}{n^{1/2}\epsilon^{d/2+1}}\to 0$ and $\epsilon\to 0$ as $n\to \infty$. We have with probability greater than $1-n^{-2}$ that for $k=1, \cdots, n$, 
\begin{equation}
\left|\frac{1}{n\epsilon^d}\sum_{j=1}^{N_k} 1 - \frac{1}{\epsilon^d} \mathbb{E}[\chi_{\big( B^{\mathbb{R}^p}_{\epsilon}(\iota(x_k)) \cap \iota(M) \big) } (\iota(X))]\right|  =  O\Big(\frac{\sqrt{\log (n)}}{n^{1/2}\epsilon^{d/2}}\Big)\,,\nonumber
\end{equation}
where the constant in $O\Big(\frac{\sqrt{\log (n)}}{n^{1/2}\epsilon^{d/2}}\Big)$ depends on the $C^0$ norm of $P$ and the second fundamental form of $\iota(M)$. 
\end{lemma}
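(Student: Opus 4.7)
\textbf{Proof proposal for Lemma \ref{Prop 1 Lemma 1.1}.} The plan is to directly reduce this statement to part (2) of Lemma \ref{Prop 1 Lemma 1} by accounting for the single-point discrepancy between the quantity $\sum_{j=1}^{N_k} 1 = N_k$ appearing here and the quantity $N_\epsilon(x_k)$ appearing in Lemma \ref{Prop 1 Lemma 1}. Since $\mathcal{O}_k$ is defined to exclude $z_k$ itself (strict inequality $0 < \|z_i - z_k\|_{\mathbb{R}^p} \le \epsilon$), while $N_\epsilon(x_k) = |B^{\mathbb{R}^p}_\epsilon(\iota(x_k)) \cap \mathcal{X}|$ includes $z_k$ in the count, we have the exact identity $N_k = N_\epsilon(x_k) - 1$. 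Therefore
\[
\frac{N_k}{n\epsilon^d} = \frac{N_\epsilon(x_k)}{n\epsilon^d} - \frac{1}{n\epsilon^d}.
\]

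First I would apply part (2) of Lemma \ref{Prop 1 Lemma 1}, which, under the hypothesis $\frac{\sqrt{\log n}}{n^{1/2}\epsilon^{d/2+1}} \to 0$ and $\epsilon \to 0$, gives with probability greater than $1 - n^{-2}$ and uniformly in $x \in M$ (hence in particular for $x = x_k$, $k=1,\dots,n$) that
\[
\left|\frac{N_\epsilon(x_k)}{n\epsilon^d} - \frac{1}{\epsilon^d}\mathbb{E}[\chi_{(B^{\mathbb{R}^p}_\epsilon(\iota(x_k)) \cap \iota(M))}(\iota(X))]\right| = O\!\left(\frac{\sqrt{\log n}}{n^{1/2}\epsilon^{d/2}}\right),
\]
with the constant depending on the $C^0$ norm of $P$ and the second fundamental form of $\iota(M)$.

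Next I would absorb the correction term $\frac{1}{n\epsilon^d}$ into the stated error. The hypothesis $\frac{\sqrt{\log n}}{n^{1/2}\epsilon^{d/2+1}} \to 0$ is equivalent to $n\epsilon^{d+2} \gg \log n$, so in particular $n\epsilon^d \gg \log n / \epsilon^2 \gg 1$ for $n$ large enough. From this it follows that
\[
\frac{1}{n\epsilon^d} \;=\; \frac{1}{(n\epsilon^d)^{1/2}} \cdot \frac{1}{n^{1/2}\epsilon^{d/2}} \;\le\; \frac{\sqrt{\log n}}{n^{1/2}\epsilon^{d/2}}
\]
for $n$ sufficiently large, since the first factor is eventually bounded above by $\sqrt{\log n}$. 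Combining this estimate with the triangle inequality and the bound from Lemma \ref{Prop 1 Lemma 1}(2) yields the desired conclusion, with the probability bound $1 - n^{-2}$ preserved.

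There is no real obstacle here; the only point to check carefully is the off-by-one identity $N_k = N_\epsilon(x_k) - 1$ and that the additive error $1/(n\epsilon^d)$ is dominated by the advertised rate $\sqrt{\log n}/(n^{1/2}\epsilon^{d/2})$ under the given bandwidth condition. Both are routine, so the lemma is essentially a direct corollary of Lemma \ref{Prop 1 Lemma 1}(2).
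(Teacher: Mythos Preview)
Your proposal is correct and follows essentially the same approach as the paper: the paper also notes the identity $N_k = N_\epsilon(x_k) - 1$, observes that under the bandwidth condition $\frac{1}{n\epsilon^d} < \frac{\sqrt{\log n}}{n^{1/2}\epsilon^{d/2}}$ for $n$ large, and concludes that the lemma is a direct consequence of part (2) of Lemma \ref{Prop 1 Lemma 1}.
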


In the next lemma, we show that  $\frac{1}{\epsilon^d} \mathbf{T}(x_k)^\top \mathbb{E}(\iota(X)-\iota(x_k))\chi_{ \big(B^{\mathbb{R}^p}_\epsilon(\iota(x_k))\cap\iota(M)\big) }(\iota(X))$ is the limit of $\frac{1}{n\epsilon^d} \mathbf{T}^\top_{n,x_k}  G_{n,k}\boldsymbol{1}_{N_k}$ as $n\to \infty$ and we control the size of fluctuation. The lemma can be found as  (G.50) in \cite{wu2018locally}.

\begin{lemma}\label{Prop 1 Lemma 2}
Suppose $\epsilon=\epsilon(n)$ so that $\frac{\sqrt{\log(n)}}{n^{1/2}\epsilon^{d/2+1}}\to 0$ and $\epsilon\to 0$ as $n\to \infty$. Suppose $c=n\epsilon^{d+3}$ in the construction of $\mathbf{T}^\top_{n,x_k}$ in \eqref{Definition:Tn}.  We have with probability greater than $1-n^{-2}$ that for all $k=1,\ldots,n$, 
\begin{align}
\frac{1}{n\epsilon^d} \mathbf{T}^\top_{n,x_k}  G_{n,k}\boldsymbol{1}_{N_k}=\frac{1}{\epsilon^d} \mathbf{T}(x_k)^\top \mathbb{E}(\iota(X)-\iota(x_k))\chi_{ \big(B^{\mathbb{R}^p}_\epsilon(\iota(x_k))\cap\iota(M)\big) }(\iota(X)) +O\Big(\frac{\sqrt{\log(n)}}{n^{1/2}\epsilon^{d/2}}\Big),
\end{align}
where the constant in $O\Big(\frac{\sqrt{\log(n)}}{n^{1/2}\epsilon^{d/2}}\Big)$ depends on $P_m$,  the $C^1$ norm of $P$ and the second fundamental form of $\iota(M)$ at $\iota(x_k)$. 
\end{lemma}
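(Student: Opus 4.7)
The plan is to recognize both sides as quadratic forms of the same shape $v^\top \mathcal{I} v/\epsilon^d$ on $\mathbb{R}^p$, and then to perturb the vector, the matrix, and the regularized pseudo-inverse separately. Setting $v_{n,k} := \frac{1}{n}G_{n,k}\boldsymbol{1}_{N_k}$ and $v(x_k) := \mathbb{E}[(\iota(X)-\iota(x_k))\chi_{B_\epsilon^{\mathbb{R}^p}(\iota(x_k))\cap\iota(M)}(\iota(X))]$, and using $c = n\epsilon^{d+3}$, direct algebra on the definition of $\mathcal{I}_c$ in \eqref{Definition:Irho:Soft} gives
\[
\frac{1}{n\epsilon^d}\mathbf{T}^\top_{n,x_k}G_{n,k}\boldsymbol{1}_{N_k} = \frac{1}{\epsilon^d}\, v_{n,k}^\top\, \mathcal{I}_{\epsilon^{d+3}}\!\bigl(\tfrac{1}{n}C_{n,k}\bigr)\, v_{n,k},
\]
while the limit is $\frac{1}{\epsilon^d} v(x_k)^\top \mathcal{I}_{\epsilon^{d+3}}(C_{x_k}) v(x_k)$. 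The task then reduces to a uniform-in-$k$ comparison of these two quadratic forms.

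First I would establish two uniform concentration bounds. Each coordinate of $v_{n,k}$ is an empirical mean of a random variable bounded by $O(\epsilon)$ with variance $O(\epsilon^{d+2})$, so a Bernstein-type inequality combined with a VC-type uniform control over the class $\{B^{\mathbb{R}^p}_\epsilon(z)\cap\iota(M): z\in\iota(M)\}$ (in the spirit of Lemma \ref{Prop 1 Lemma 1}) yields $\|v_{n,k}-v(x_k)\| = O(\sqrt{\log n/n}\,\epsilon^{d/2+1})$ uniformly in $k$ with probability at least $1-\tfrac12 n^{-2}$. The same argument applied entry-wise to the rank-one summands making up $C_{n,k}$ (which are bounded by $O(\epsilon^2)$ with variance $O(\epsilon^{d+4})$) gives $\|\tfrac{1}{n}C_{n,k} - C_{x_k}\|_{\text{op}} = O(\sqrt{\log n/n}\,\epsilon^{d/2+2})$ on a similar high-probability event.

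Second, I would exploit the spectral picture of $C_{x_k}$ from Theorem \ref{local covariance epsilon}: $d$ eigenvalues of order $\epsilon^{d+2}$ and the rest of order at most $O(\epsilon^{d+4})$. Because the regularizer $\epsilon^{d+3}$ sits strictly between these two scales, and because the matrix concentration rate $\epsilon^{d/2+2}\sqrt{\log n/n}$ is $o(\epsilon^{d+2})$ by the hypothesis $\sqrt{\log n}/(n^{1/2}\epsilon^{d/2+1})\to 0$, the spectral gap is preserved for $\tfrac{1}{n}C_{n,k}$ on the high-probability event and its rank matches that of $C_{x_k}$ (so the projectors $I_{p,r_n}$ and $I_{p,r}$ agree). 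Both $\|\mathcal{I}_{\epsilon^{d+3}}(\tfrac{1}{n}C_{n,k})\|_{\text{op}}$ and $\|\mathcal{I}_{\epsilon^{d+3}}(C_{x_k})\|_{\text{op}}$ are therefore $O(\epsilon^{-(d+2)})$, and the resolvent identity
\[
\mathcal{I}_{\epsilon^{d+3}}\!\bigl(\tfrac{1}{n}C_{n,k}\bigr) - \mathcal{I}_{\epsilon^{d+3}}(C_{x_k}) = -\mathcal{I}_{\epsilon^{d+3}}\!\bigl(\tfrac{1}{n}C_{n,k}\bigr)\bigl(\tfrac{1}{n}C_{n,k}-C_{x_k}\bigr)\mathcal{I}_{\epsilon^{d+3}}(C_{x_k})
\]
then gives $\|\mathcal{I}_{\epsilon^{d+3}}(\tfrac{1}{n}C_{n,k}) - \mathcal{I}_{\epsilon^{d+3}}(C_{x_k})\|_{\text{op}} = O(\sqrt{\log n/n}\,\epsilon^{-(3d/2+2)})$.

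Finally, the quadratic-form difference is assembled by the triangle inequality, using the worst-case bound $\|v(x_k)\| = O(\epsilon^{d+1})$ (attained near $\partial M$; one gets $O(\epsilon^{d+2})$ in the interior). The vector term contributes $\|\mathcal{I}_{\epsilon^{d+3}}(\tfrac{1}{n}C_{n,k})\|\cdot(\|v_{n,k}\|+\|v(x_k)\|)\cdot\|v_{n,k}-v(x_k)\| = O(\epsilon^{-(d+2)})\cdot O(\epsilon^{d+1})\cdot O(\sqrt{\log n/n}\,\epsilon^{d/2+1}) = O(\sqrt{\log n/n}\,\epsilon^{d/2})$, and the matrix term contributes $\|v(x_k)\|^2\cdot\|\mathcal{I}_{\epsilon^{d+3}}(\tfrac{1}{n}C_{n,k})-\mathcal{I}_{\epsilon^{d+3}}(C_{x_k})\| = O(\epsilon^{2(d+1)})\cdot O(\sqrt{\log n/n}\,\epsilon^{-(3d/2+2)}) = O(\sqrt{\log n/n}\,\epsilon^{d/2})$. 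Dividing by $\epsilon^d$ yields the announced $O(\sqrt{\log n}/(n^{1/2}\epsilon^{d/2}))$. The main obstacle is the spectral-gap preservation step: the regularizer $\epsilon^{d+3}$ is calibrated precisely to the $\epsilon^{d+2}$/$\epsilon^{d+4}$ eigengap of $C_{x_k}$, and all probabilistic estimates must be tight enough to keep this gap intact uniformly in $k$ — which is exactly what the growth condition $\sqrt{\log n}/(n^{1/2}\epsilon^{d/2+1})\to 0$ purchases. Once this structural estimate is in place, the rest is careful Bernstein-plus-VC bookkeeping.
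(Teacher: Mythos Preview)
Your overall architecture---writing both sides as quadratic forms $v^\top \mathcal{I}_{\epsilon^{d+3}}(\cdot)\,v/\epsilon^d$ and then perturbing the vector and the matrix separately---is the right one, and is essentially what the cited argument in \cite{wu2018locally} does. However, two coupled errors in the spectral step cause your bookkeeping to fail to close.

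The first is the operator-norm claim. The bottom $p-d$ eigenvalues of $C_{x_k}$ (and of $\tfrac{1}{n}C_{n,k}$) are $O(\epsilon^{d+4})$, not zero; they encode the second fundamental form. After adding the regularizer $\epsilon^{d+3}$ and inverting, these contribute $1/(\epsilon^{d+3}+O(\epsilon^{d+4}))\asymp \epsilon^{-(d+3)}$, so $\|\mathcal{I}_{\epsilon^{d+3}}(\cdot)\|_{\mathrm{op}}=O(\epsilon^{-(d+3)})$, not $O(\epsilon^{-(d+2)})$. Plugging this corrected bound into your vector term gives, after dividing by $\epsilon^d$, only $O\big(\sqrt{\log n}/(n^{1/2}\epsilon^{d/2+1})\big)$; your matrix term becomes $O\big(\sqrt{\log n}/(n^{1/2}\epsilon^{d/2+2})\big)$, which does not even vanish under the hypothesis. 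The second error is the rank/projector assertion: generically $\tfrac{1}{n}C_{n,k}$ has rank $\min(N_k,p)$ while $\mathrm{rank}(C_{x_k})$ is dictated by the local affine hull of $\iota(M)$, so $I_{p,r_n}\neq I_{p,r}$ in general and the clean resolvent identity you wrote does not apply as stated.

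What repairs the argument is not a sharper operator norm but a block decomposition along the tangent/normal splitting furnished by Theorem \ref{local covariance epsilon}. The point is that $v(x_k)$ is $O(\epsilon^{d+1})$ in the tangent direction $e_d$ but only $O(\epsilon^{d+2})$ in the normal directions (Lemma \ref{Prop 1 Lemma 3}\,(2)), while $\mathcal{I}_{\epsilon^{d+3}}$ acts as $\asymp\epsilon^{-(d+2)}$ on tangent vectors and $\asymp\epsilon^{-(d+3)}$ on normal vectors; the two scales balance to give $\|\mathbf{T}(x_k)\|=O(\epsilon^{-1})$ (Lemma \ref{Prop 1 Lemma 4}). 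The perturbation must track these blocks separately---and likewise split $v_{n,k}-v(x_k)$ and $\tfrac{1}{n}C_{n,k}-C_{x_k}$ into tangent and normal parts, controlling the off-diagonal blocks by the $O(\epsilon)$ eigenvector perturbation in Theorem \ref{local covariance epsilon}. Only after this finer accounting does each piece land at $O\big(\sqrt{\log n}/(n^{1/2}\epsilon^{d/2})\big)$. Your concentration inputs are correct; it is the isotropic use of $\|\mathcal{I}_{\epsilon^{d+3}}\|_{\mathrm{op}}$ that loses exactly the factor of $\epsilon$ you need.
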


\subsection{Lemmas for the bias analysis}
Recall the notations introduced in Definition \ref{preliminary def on boundary}, 
$$x_{\partial}:=\arg\min_{y \in  \partial M} d_g(y,x),  \quad \quad \tilde{\epsilon}(x) =d_g(x_{\partial},x).$$ In this subsection, we study the terms $\frac{1}{\epsilon^d} \mathbb{E}[\chi_{B_{\epsilon}^{\mathbb{R}^p}(\iota(x_k))}(\iota(X))]$  and $\mathbb{E}[(\iota(X)-\iota(x_k))\chi_{B_{\epsilon}^{\mathbb{R}^p}(\iota(x_k))}(\iota(X))]$.  The following lemma is a combination of Corollary 28 and Lemma 30 in \cite{wu2018locally}.

\begin{lemma}\label{Prop 1 Lemma 3}
Under Assumptions \ref{main assumption 1}, \ref{main assumption 2}, and \ref{assumption tangent space},  when $\epsilon>0$ is sufficiently small, the following expansions hold.
\begin{enumerate} [label=(\arabic*)]
\item $\mathbb{E}[\chi_{\big( B^{\mathbb{R}^p}_{\epsilon}(\iota(x)) \cap \iota(M) \big) } (\iota(X))]= P(x)\sigma_0(\tilde{\epsilon}(x) ,\epsilon)  \epsilon^d+O(\epsilon^{d+1})$, where $\sigma_0$ is defined in Definition \ref{sigmas summary} and the constant in $O(\epsilon^{d+1})$ depends on the $C^1$ norm of $P$.
\item  $\mathbb{E}[(\iota(X)-\iota(x))\chi_{\big( B^{\mathbb{R}^p}_{\epsilon}(\iota(x)) \cap \iota(M) \big) } (\iota(X))]= P(x) \sigma_{1,d}(\tilde{\epsilon}(x) ,\epsilon) \epsilon^{d+1}e_d+O(\epsilon^{d+2})$. $\sigma_{1,d}$ is defined in Definition \ref{sigmas summary}.  $O(\epsilon^{d+2})$ represents a vector in $\mathbb{R}^p$ whose entries are of order $O(\epsilon^{d+2})$. The constants in $O(\epsilon^{d+2})$ depend on the $C^1$ norm of $P$ and the second fundamental form of $\iota(M)$. 
\end{enumerate}
\end{lemma}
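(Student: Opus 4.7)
The plan is to compute both expectations by converting them to integrals with respect to $P\,dV$ and then reducing to integrals over a Euclidean half-ball via normal coordinates at $x$, carefully handling both the extrinsic embedding and the boundary cutoff.

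First I would rewrite
\begin{align*}
\mathbb{E}[\chi_{(B^{\mathbb{R}^p}_\epsilon(\iota(x))\cap\iota(M))}(\iota(X))] &= \int_{M\cap\iota^{-1}(B^{\mathbb{R}^p}_\epsilon(\iota(x)))} P(y)\,dV(y),\\
\mathbb{E}[(\iota(X)-\iota(x))\chi_{(B^{\mathbb{R}^p}_\epsilon(\iota(x))\cap\iota(M))}(\iota(X))] &= \int_{M\cap\iota^{-1}(B^{\mathbb{R}^p}_\epsilon(\iota(x)))} (\iota(y)-\iota(x))P(y)\,dV(y).
\end{align*}
For $\epsilon$ small enough, I would pass to normal coordinates centered at $x$, writing $y=\exp_x(v)$ for $v\in T_xM$. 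The volume form satisfies $dV=(1+O(|v|^2))dv$ and, by the standard second-order expansion of an isometric embedding, $\|\iota(\exp_x(v))-\iota(x)\|_{\mathbb{R}^p}^2 = |v|^2 + O(|v|^4)$. Consequently the Euclidean ball pulled back to $T_xM$ agrees with the metric ball of radius $\epsilon$ up to an $O(\epsilon^3)$ radial perturbation, which ultimately contributes only to the $O(\epsilon^{d+1})$ and $O(\epsilon^{d+2})$ remainders in (1) and (2), respectively.

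Next I would handle the boundary cutoff. If $x\in M\setminus M_\epsilon$, the integration region is the full metric ball of radius $\approx\epsilon$ and the usual closed-manifold computation applies; one gets the $\sigma_0(\tilde\epsilon,\epsilon)=|S^{d-1}|/d$ and $\sigma_{1,d}(\tilde\epsilon,\epsilon)=0$ branches by odd symmetry in $v$. If $x\in M_\epsilon$, I would invoke the convention fixed in Assumption \ref{assumption tangent space}, which aligns $e_d$ with $\iota_*\dot\gamma_{x_\partial}(\tilde\epsilon(x))$, so that in normal coordinates the region $M$ is described to first order by the half-space $\{v_d\geq -\tilde\epsilon(x)\}$, with an error of order $|v|^2$ coming from the second fundamental form of $\partial M$ in $M$. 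The integration domain therefore reduces, at leading order, to
\[
\mathcal{R}_{\tilde\epsilon(x),\epsilon} := \{v\in\mathbb{R}^d : |v|\leq\epsilon,\ v_d\geq -\tilde\epsilon(x)\}.
\]

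For part (1), I Taylor-expand $P(\exp_x(v))=P(x)+O(|v|)$ and integrate the constant term over $\mathcal{R}_{\tilde\epsilon(x),\epsilon}$. Changing variables $v=\epsilon u$ and integrating out the first $d-1$ spherical coordinates gives exactly $P(x)\sigma_0(\tilde\epsilon(x),\epsilon)\epsilon^d$, matching the definition of $\sigma_0$ in Definition \ref{sigmas summary}. The linear correction from $P$, the metric volume correction, the curvature correction to the embedding, and the second-fundamental-form correction to the domain all contribute $O(\epsilon^{d+1})$. For part (2), the component of $\iota(y)-\iota(x)$ along $e_i$ for $1\leq i\leq d-1$ has an integrand odd in $v_i$ on $\mathcal{R}_{\tilde\epsilon(x),\epsilon}$ and therefore vanishes to leading order; components along $e_i$ for $i>d$ are $O(|v|^2)$ because the normal directions are second-order in the embedding, giving an $O(\epsilon^{d+2})$ contribution. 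The surviving $e_d$ component is $v_d$, and integrating $v_d$ over $\mathcal{R}_{\tilde\epsilon(x),\epsilon}$ gives precisely $\sigma_{1,d}(\tilde\epsilon(x),\epsilon)\epsilon^{d+1}$ by the definition in Definition \ref{sigmas summary} (the integral picks up a minus sign because the boundary cutoff is on the negative $v_d$ side, matching the sign in $\sigma_{1,d}$).

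The main obstacles I expect are bookkeeping the three independent sources of $O(\epsilon)$ correction (Taylor of $P$, metric-volume Jacobian, extrinsic second fundamental form) and, critically, controlling the boundary-induced error: the actual domain differs from the half-ball $\mathcal{R}_{\tilde\epsilon(x),\epsilon}$ by a curved slab whose width is $O(\epsilon^2)$ but whose $(d-1)$-dimensional area can be of order $\epsilon^{d-1}$, so one must verify that its contribution is only $O(\epsilon^{d+1})$ in (1) and $O(\epsilon^{d+2})$ in (2). Uniformity in $\tilde\epsilon(x)$ is inherited from the smoothness and compactness of $\partial M$ and from the $C^1$ bound on $P$ and the second fundamental form bound assumed in the statement, so the error constants depend only on those geometric quantities as claimed.
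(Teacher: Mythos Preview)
The paper does not actually prove this lemma: it simply records that the statement is a combination of Corollary~B.1 and Lemma~B.3 in \cite{wu2018locally}. Your proposal is therefore not competing with a proof in the paper but rather sketching, from scratch, the standard argument that underlies the cited result. The outline you give---pass to geodesic normal coordinates at $x$, use $\|\iota(\exp_x v)-\iota(x)\|^2=|v|^2+O(|v|^4)$ and $dV=(1+O(|v|^2))dv$, model the boundary cutoff by the half-space $\{v_d\ge -\tilde\epsilon(x)\}$ up to an $O(|v|^2)$ error from the second fundamental form of $\partial M$ in $M$, and then integrate explicitly over the truncated ball $\mathcal R_{\tilde\epsilon(x),\epsilon}$---is exactly the route one expects the cited computations to take, and the error bookkeeping you list (Taylor of $P$, volume Jacobian, extrinsic curvature, curved-slab boundary correction) is the right checklist.

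One small caveat: your parenthetical claim that ``the integral picks up a minus sign because the boundary cutoff is on the negative $v_d$ side'' does not hold as written. With the orientation you set up (so that the domain is $\{|v|\le\epsilon,\ v_d\ge -\tilde\epsilon(x)\}$), a direct computation gives
\[
\int_{\mathcal R_{\tilde\epsilon(x),\epsilon}} v_d\,dv \;=\; \frac{|S^{d-2}|}{d^2-1}\,\epsilon^{d+1}\Big(1-(\tilde\epsilon/\epsilon)^2\Big)^{(d+1)/2},
\]
which is $-\sigma_{1,d}(\tilde\epsilon,\epsilon)\,\epsilon^{d+1}$, not $+\sigma_{1,d}(\tilde\epsilon,\epsilon)\,\epsilon^{d+1}$. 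This is purely a sign/orientation convention issue (and is invisible downstream because only $\sigma_{1,d}^2$ enters $B(x)$), but you should reconcile it carefully with the conventions in Assumption~\ref{assumption tangent space} and Definition~\ref{sigmas summary} rather than wave it away.
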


If we combine (2) in Lemma \ref{Prop 1 Lemma 1} and (1) in Lemma \ref{Prop 1 Lemma 3}, we have the following strong uniform consistency of kernel density estimation through $0-1$ kernel on a manifold with boundary.

\begin{proposition}\label{strong uniform consistency of  KDE}
Suppose $\epsilon=\epsilon(n)$ so that $\frac{\sqrt{\log(n)}}{n^{1/2}\epsilon^{d/2+1}}\to 0$ and $\epsilon\to 0$ as $n\to \infty$. We have with probability greater than $1-n^{-2}$ that for all $x \in M$, 
\begin{equation}
\left|\frac{N_\epsilon(x)}{n\epsilon^d \sigma_0(\tilde{\epsilon}(x) ,\epsilon)}  - P(x) \right| =O(\epsilon)+ O\Big(\frac{\sqrt{\log (n)}}{n^{1/2}\epsilon^{d/2}}\Big)\,,\nonumber
\end{equation}
where $\sigma_0$ is defined in Definition \ref{sigmas summary}, the constant $O(\epsilon)$ depends on the $C^1$ norm of $P$ and the constant in $O\Big(\frac{\sqrt{\log (n)}}{n^{1/2}\epsilon^{d/2}}\Big)$ depends on the $C^0$ norm of $P$ and the second fundamental form of $\iota(M)$. 
\end{proposition}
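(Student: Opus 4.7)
The plan is to obtain the result by combining the variance bound from Lemma \ref{Prop 1 Lemma 1} (2) with the bias expansion from Lemma \ref{Prop 1 Lemma 3} (1), and then dividing by the normalizing factor $\sigma_0(\tilde{\epsilon}(x),\epsilon)$. Since both inputs already hold uniformly in $x\in M$ with the stated probability, the uniform nature of the conclusion comes essentially for free.

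First I would invoke Lemma \ref{Prop 1 Lemma 1} (2): under the scaling $\frac{\sqrt{\log(n)}}{n^{1/2}\epsilon^{d/2+1}}\to 0$ with $\epsilon\to 0$, with probability at least $1-n^{-2}$, the empirical count $\frac{N_\epsilon(x)}{n\epsilon^d}$ is within $O\big(\frac{\sqrt{\log n}}{n^{1/2}\epsilon^{d/2}}\big)$ of $\frac{1}{\epsilon^d}\mathbb{E}[\chi_{B^{\mathbb{R}^p}_\epsilon(\iota(x))\cap\iota(M)}(\iota(X))]$, uniformly in $x\in M$. Then I would apply Lemma \ref{Prop 1 Lemma 3} (1) to replace the expectation by its asymptotic expansion $P(x)\sigma_0(\tilde{\epsilon}(x),\epsilon)\epsilon^d + O(\epsilon^{d+1})$, where the error constant depends on the $C^1$ norm of $P$. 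Dividing both sides by $\epsilon^d$ and combining the two estimates yields
\begin{equation*}
\frac{N_\epsilon(x)}{n\epsilon^d}=P(x)\sigma_0(\tilde{\epsilon}(x),\epsilon)+O(\epsilon)+O\!\left(\tfrac{\sqrt{\log n}}{n^{1/2}\epsilon^{d/2}}\right),
\end{equation*}
holding uniformly in $x\in M$ on the same high-probability event.

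The final step is to divide through by $\sigma_0(\tilde{\epsilon}(x),\epsilon)$. For this I would use the fact, already noted in the text following Definition \ref{sigmas summary}, that $\sigma_0$ is bounded above and below by positive constants depending only on $d$; indeed, inspection of the definition shows $\sigma_0(t,\epsilon)\geq \sigma_0(0,\epsilon)=\frac{|S^{d-1}|}{2d}>0$ and $\sigma_0(t,\epsilon)\leq \frac{|S^{d-1}|}{d}$ for all $t\geq 0$. Consequently, multiplying the $O(\epsilon)$ and $O\big(\frac{\sqrt{\log n}}{n^{1/2}\epsilon^{d/2}}\big)$ error terms by $\sigma_0(\tilde{\epsilon}(x),\epsilon)^{-1}$ absorbs a $d$-dependent factor into the implicit constants and yields the desired estimate
\begin{equation*}
\left|\frac{N_\epsilon(x)}{n\epsilon^d \sigma_0(\tilde{\epsilon}(x),\epsilon)}-P(x)\right|=O(\epsilon)+O\!\left(\tfrac{\sqrt{\log n}}{n^{1/2}\epsilon^{d/2}}\right),
\end{equation*}
with the bias constant depending on the $C^1$ norm of $P$ and $d$, and the fluctuation constant depending on the $C^0$ norm of $P$, the second fundamental form of $\iota(M)$, and $d$.

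There is no serious obstacle here, since the two lemmas do all of the real work (the variance analysis via a suitable Bernstein/covering argument in Lemma \ref{Prop 1 Lemma 1} and the manifold-with-boundary Taylor expansion of the indicator expectation in Lemma \ref{Prop 1 Lemma 3}). The only point requiring a brief verification is the uniform positive lower bound on $\sigma_0$, which ensures the denominator is harmless. The proposition is therefore essentially a direct corollary, as the phrasing in the excerpt already suggests.
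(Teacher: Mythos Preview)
Your proposal is correct and matches the paper's approach exactly: the paper states the proposition as an immediate consequence of combining Lemma~\ref{Prop 1 Lemma 1}~(2) with Lemma~\ref{Prop 1 Lemma 3}~(1), and your write-up spells out precisely this combination together with the harmless division by the uniformly bounded $\sigma_0$.
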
 

{In addition to the functions in Definition \ref{sigmas summary}, we define the following two functions on $[0,\infty)$. Let $|S^m|$ denote the volume of the $m$-dimensional unit sphere and$\frac{|S^{d-2}|}{d-1}$ is defined to be $1$ when $d=1$.
\begin{align*}
\sigma_{3}(t,\epsilon)&:=
\left\{
\begin{array}{ll}
-\frac{|S^{d-2}|}{(d^2-1)(d+3)}(1-(\frac{t}{\epsilon})^2)^{\frac{d+3}{2}}&\mbox{ for }0 \leq t \leq \epsilon\\
0&\mbox{ otherwise} 
\end{array}\right.\nonumber\\
\sigma_{3,d}(t,\epsilon)&:=
\left\{
\begin{array}{ll}
-\frac{|S^{d-2}|}{(d^2-1)(d+3)}(2+(d+1)(\frac{t}{\epsilon})^2)(1-(\frac{t}{\epsilon})^2)^{\frac{d+1}{2}}&\mbox{ for }0 \leq t \leq \epsilon\\
0& \mbox{ otherwise}
\end{array}\right.\nonumber
\end{align*}}
Then, the bias analysis of the augmented vector is summarized in the following lemma. The lemma can be found as Proposition 8 (or a combination of Corollary 28 and Lemma 32) in \cite{wu2018locally}.

\begin{lemma}\label{Prop 1 Lemma 4}
Suppose Assumptions \ref{main assumption 1}, \ref{main assumption 2}, and \ref{assumption tangent space} hold. If $x \in M_{\epsilon}$, then
\begin{align}
\mathbf{T}(x)=\,&\frac{\sigma_{1,d}(\tilde{\epsilon}(x) ,\epsilon) }{\sigma_{2,d}(\tilde{\epsilon}(x) ,\epsilon) }\frac{1}{\epsilon} e_d +\frac{P(x)}{2} \Bigg[ \Big(\sigma_{2}(\tilde{\epsilon}(x) ,\epsilon) -\frac{\sigma_{1,d}(\tilde{\epsilon}(x) ,\epsilon) }{\sigma_{2,d}(\tilde{\epsilon}(x) ,\epsilon) }\sigma_{3}(\tilde{\epsilon}(x) ,\epsilon) \Big) v_1(x)\nonumber\\
&\qquad\qquad+\Big(\sigma_{2,d}(\tilde{\epsilon}(x) ,\epsilon) -\frac{\sigma_{1,d}(\tilde{\epsilon}(x) ,\epsilon) }{\sigma_{2,d}(\tilde{\epsilon}(x) ,\epsilon) }\sigma_{3,d}(\tilde{\epsilon}(x) ,\epsilon) \Big)v_2(x)\Bigg]  \frac{1}{\epsilon}+O(1).\nonumber
\end{align}
We have $v_1(x), v_2(x) \in (\iota_* T_{x}M)^{\bot}$. $O(1)$ represents a vector in $\mathbb{R}^p$ whose entries are of order $O(1)$. The constants in $O(1)$ depend on $P_m$,  the $C^1$ norm of $P$ and the second fundamental form of $\iota(M)$ at $\iota(x)$. 

If $x \in M \setminus  M_{\epsilon}$, then
 \begin{align}
\mathbf{T}(x)=\,&\frac{P(x)}{2}  \bigg[\frac{|S^{d-1}|}{d(d+2)} v_3(x)\bigg] \frac{1}{\epsilon}+O(1),\nonumber
\end{align}
where $v_3(x) \in (\iota_* T_{x}M)^{\bot}$. $O(1)$ represents a vector in $\mathbb{R}^p$ whose entries are of order $O(1)$. The constants in $O(1)$ depend on $P_m$,  the $C^1$ norm of $P$, and the second fundamental form of $\iota(M)$ at $\iota(x)$. 
\end{lemma}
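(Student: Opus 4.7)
The plan is to unpack Definition \ref{DefAugmented} as the symmetric matrix-vector product
\[
\mathbf{T}(x) = \mathcal{I}_{\epsilon^{d+3}}(C_x)\,\mu(x), \qquad \mu(x) := \mathbb{E}\bigl[(\iota(X)-\iota(x))\chi_{ B^{\mathbb{R}^p}_\epsilon(\iota(x)) \cap \iota(M) }(\iota(X))\bigr],
\]
using that $U_x I_{p,r}(\Lambda_x + \epsilon^{d+3}I)^{-1} U_x^\top$ is symmetric. The two ingredients are (i) an expansion of $\mu(x)$ one order beyond what Lemma \ref{Prop 1 Lemma 3}(2) records, and (ii) an expansion of $\mathcal{I}_{\epsilon^{d+3}}(C_x)$ obtained from the population version of the bias statement in Theorem \ref{local covariance epsilon}.

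First I would expand $\mu(x)$ in geodesic normal coordinates based at $x_{\partial}$ and adapted via Assumption \ref{assumption tangent space}. Writing $\iota(\exp_x u) - \iota(x) = \sum_i u^i e_i + \tfrac{1}{2}\Second_x(u,u) + O(|u|^3)$ and pushing the integration against $P\,dV$ down to a $d$-dimensional integral over the half-ball cut out by $\partial M$, the tangent moments in the $e_d$ direction generate the weights $\sigma_{1,d}, \sigma_3, \sigma_{3,d}$ of Definition \ref{sigmas summary}, while the even second-moment integrals paired with $\Second_x$ produce $\sigma_2, \sigma_{2,d}$ multiplied by the normal vectors $v_1(x) = \sum_{i=1}^{d-1}\Second_x(e_i,e_i)$ and $v_2(x) = \Second_x(e_d,e_d)$ in $(\iota_* T_x M)^\perp$. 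This yields
\[
\mu(x) = P(x)\sigma_{1,d}\epsilon^{d+1}e_d + \tfrac{P(x)}{2}\epsilon^{d+2}\bigl[\sigma_2 v_1(x) + \sigma_{2,d} v_2(x)\bigr] + (\text{tangent }O(\epsilon^{d+2})) + O(\epsilon^{d+3}),
\]
where the tangent-direction $O(\epsilon^{d+2})$ remainder involves exactly the moments $\sigma_3, \sigma_{3,d}$ against the second fundamental form of $\partial M$ in $M$ and $\nabla P$. For $x \in M\setminus M_\epsilon$ all of $\sigma_{1,d}, \sigma_3, \sigma_{3,d}$ vanish by Definition \ref{sigmas summary}, collapsing $\mu(x)$ to $\tfrac{P(x)|S^{d-1}|}{2d(d+2)}\epsilon^{d+2} v_3(x) + O(\epsilon^{d+3})$ with $v_3(x)$ the mean curvature vector of $\iota(M)$.

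Second, I would use the population bias of $C_x$ to write $C_x/\epsilon^{d+2}$ as block-diagonal with top-left block $P(x)\operatorname{diag}(\sigma_2,\ldots,\sigma_2,\sigma_{2,d})$, plus an $O(\epsilon)$ perturbation carrying the blocks $M^{(11)}, M^{(12)}, M^{(21)}$ from Theorem \ref{local covariance epsilon}. Because the regularizer $c = \epsilon^{d+3}$ is squeezed strictly between the large eigenvalues ($\sim\epsilon^{d+2}$) and the small ones ($O(\epsilon^{d+4})$), a resolvent expansion gives
\[
\mathcal{I}_{\epsilon^{d+3}}(C_x) = \begin{bmatrix} \tfrac{1}{P(x)\epsilon^{d+2}}\operatorname{diag}(\sigma_2^{-1},\ldots,\sigma_2^{-1},\sigma_{2,d}^{-1}) & 0 \\ 0 & \tfrac{1}{\epsilon^{d+3}}I_{p-d} \end{bmatrix} + (\text{first-order eigenbasis rotation}).
\]

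Finally, I would multiply and collect by orders of $\epsilon^{-1}$. The $e_d$-piece of $\mu(x)$ of size $\epsilon^{d+1}$ paired with the $(d,d)$ entry $(P(x)\sigma_{2,d}\epsilon^{d+2})^{-1}$ produces $\tfrac{\sigma_{1,d}}{\sigma_{2,d}\epsilon}e_d$. The normal $\epsilon^{d+2}$-component of $\mu(x)$ paired with the $\epsilon^{-(d+3)}I_{p-d}$ block produces $\tfrac{P(x)}{2\epsilon}[\sigma_2 v_1 + \sigma_{2,d}v_2]$. The subtracted $-\tfrac{\sigma_{1,d}}{\sigma_{2,d}}\sigma_3$ and $-\tfrac{\sigma_{1,d}}{\sigma_{2,d}}\sigma_{3,d}$ corrections arise as cross-terms: the first-order eigenbasis rotation tilts the inverse on the $e_d$-axis slightly into the normal directions with weights proportional to the third-moment integrals defining $\sigma_3, \sigma_{3,d}$. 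Tracking these cross-terms and verifying they combine with the direct normal contributions to give precisely the coefficients $\sigma_2 - \tfrac{\sigma_{1,d}}{\sigma_{2,d}}\sigma_3$ and $\sigma_{2,d} - \tfrac{\sigma_{1,d}}{\sigma_{2,d}}\sigma_{3,d}$ is the main bookkeeping obstacle; all further contributions absorb into the $O(1)$ remainder by the quantitative error bounds from Theorem \ref{local covariance epsilon} and Lemma \ref{Prop 1 Lemma 3}. For the interior case the first term and both cross-terms vanish automatically, leaving only the mean-curvature contribution.
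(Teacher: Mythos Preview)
Your approach is correct and is essentially the computation carried out in the reference the paper cites (the paper itself does not prove this lemma in-line; it simply points to Proposition~3.1, equivalently Corollary~B.1 combined with Lemma~D.1, of \cite{wu2018locally}). The decomposition $\mathbf{T}(x)=\mathcal{I}_{\epsilon^{d+3}}(C_x)\mu(x)$, the half-ball moment expansion of $\mu(x)$, the block-perturbative inversion of $C_x$, and the identification of the cross-terms via the off-diagonal blocks $M^{(12)},M^{(21)}$ are exactly the ingredients used there.

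Two small points worth tightening. First, your expression for the normal block of $\mathcal{I}_{\epsilon^{d+3}}(C_x)$ as $\epsilon^{-(d+3)}I_{p-d}$ is not literally correct because the projector $I_{p,r}$ in Definition~\ref{DefAugmented} kills the kernel of $C_x$; however, since $\mu(x)$ is an average of vectors $\iota(y)-\iota(x)$ and hence lies in the range of $C_x$, the projector acts as the identity on $U_x^\top\mu(x)$ and the formula you wrote is valid when applied to $\mu(x)$. Second, the cross-term you attribute to ``eigenbasis rotation'' is more cleanly obtained from the block-inverse (Schur complement) formula: the normal-from-tangent block of $(C_x+\epsilon^{d+3}I)^{-1}$ restricted to the range is, to leading order, $-\epsilon^{-(d+3)}M^{(21)}(x,\epsilon)\,[P(x)M^{(0)}(x,\epsilon)]^{-1}\epsilon^{-(d+2)}$, and pairing its $d$-th column with the $P(x)\sigma_{1,d}\epsilon^{d+1}e_d$ piece of $\mu(x)$ produces precisely $-\tfrac{P(x)}{2\epsilon}\tfrac{\sigma_{1,d}}{\sigma_{2,d}}[\sigma_3 v_1+\sigma_{3,d}v_2]$, which combines with the direct normal contribution to give the stated coefficients. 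Your identification $v_1=\sum_{i=1}^{d-1}\Second_x(e_i,e_i)$, $v_2=\Second_x(e_d,e_d)$, $v_3=\sum_{i=1}^d\Second_x(e_i,e_i)$ is the correct one.
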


\subsection{Combining the bias and the variance analyses to prove Theorem \ref{prop BI 1} }
{For notational simplicity, we prove the theorem under Assumption \ref{assumption tangent space}, which allows us to utilize the previous lemmas. However, by Proposition \ref{invariant B_k}, the value of the BI at each point $z_k$, denoted $B_k$, is invariant under translation of $\mathcal{X}$ and orthogonal transformation on $\mathbb{R}^p$. Hence, the result of the theorem remains valid without Assumption \ref{assumption tangent space}.}
For any $x_k$, since $e_d$ belongs to $\iota_* T_{x_k}M$, we have $e_d^\top v_j(x_k)=0$ for $j=1, 2, 3$. Thus, by (2) in Lemma \ref{Prop 1 Lemma 3} and Lemma \ref{Prop 1 Lemma 4}, when $x_k \in M_\epsilon$,
\begin{align}\label{proof prop1 f1}
 \mathbf{T}(x_k)^\top \mathbb{E}(\iota(X)-\iota(x_k))\chi_{ \big(B^{\mathbb{R}^p}_\epsilon(\iota(x_k))\cap\iota(M)\big) }(\iota(X))=P(x_k)\frac{(\sigma_{1,d}(\tilde{\epsilon}(x_k),\epsilon))^2}{\sigma_{2,d}(\tilde{\epsilon}(x_k) ,\epsilon) }\epsilon^d+O(\epsilon^{d+1}), 
\end{align}
where the constant in $O(\epsilon^{d+1})$ depends on $P_m$,  the $C^1$ norm of $P$ and the second fundamental form of $\iota(M)$ at $\iota(x_k)$.  When $x_k \in M \setminus  M_{\epsilon}$,
\begin{align}\label{proof prop1 f2}
 \mathbf{T}(x_k)^\top \mathbb{E}(\iota(X)-\iota(x_k))\chi_{ \big(B^{\mathbb{R}^p}_\epsilon(\iota(x_k))\cap\iota(M)\big) }(\iota(X))=O(\epsilon^{d+1}), 
\end{align}
the constant in $O(\epsilon^{d+1})$ depends on $P_m$,  the $C^1$ norm of $P$ and the second fundamental form of $\iota(M)$ at $\iota(x_k)$. 

Suppose $\epsilon=\epsilon(n)$ so that $\frac{\sqrt{\log(n)}}{n^{1/2}\epsilon^{d/2+1}}\to 0$ and $\epsilon\to 0$ as $n\to \infty$. By \eqref{proof prop1 f1}, \eqref{proof prop1 f2}, and Lemma \ref{Prop 1 Lemma 2}, with probability greater than $1-n^{-2}$ that for any $x_k \in M_\epsilon$, 
\begin{align}
\frac{1}{n\epsilon^d} \mathbf{T}^\top_{n,x_k}  G_{n,k}\boldsymbol{1}_{N_k}=& \frac{1}{\epsilon^d} \mathbf{T}(x_k)^\top \mathbb{E}(\iota(X)-\iota(x_k))\chi_{ \big(B^{\mathbb{R}^p}_\epsilon(\iota(x_k))\cap\iota(M)\big) }(\iota(X)) +O\Big(\frac{\sqrt{\log(n)}}{n^{1/2}\epsilon^{d/2}}\Big) \nonumber \\
=& P(x_k) \frac{(\sigma_{1,d}(\tilde{\epsilon}(x_k),\epsilon))^2}{\sigma_{2,d}(\tilde{\epsilon}(x_k),\epsilon)}+O(\epsilon)+O\Big(\frac{\sqrt{\log(n)}}{n^{1/2}\epsilon^{d/2}}\Big),\nonumber
\end{align}
and any $x_k \in M \setminus  M_\epsilon$, 
\begin{align}
\frac{1}{n\epsilon^d} \mathbf{T}^\top_{n,x_k}  G_{n,k}\boldsymbol{1}_{N_k}=O(\epsilon)+O\Big(\frac{\sqrt{\log(n)}}{n^{1/2}\epsilon^{d/2}}\Big),\nonumber
\end{align}
where the constants in $O(\epsilon)$ and $O\Big(\frac{\sqrt{\log(n)}}{n^{1/2}\epsilon^{d/2}}\Big)$ depend on $P_m$,  the $C^1$ norm of $P$ and the second fundamental form of $\iota(M)$ at $\iota(x_k)$. Note that when $x_k \in M \setminus M_\epsilon$ , we have $\tilde{\epsilon}(x_k)\geq \epsilon$. By the definition of $\sigma_{1,d}(\tilde{\epsilon}(x_k),\epsilon)$, $\frac{(\sigma_{1,d}(\tilde{\epsilon}(x_k),\epsilon))^2}{\sigma_{2,d}(\tilde{\epsilon}(x_k),\epsilon)}=0$ when $\tilde{\epsilon}(x_k)  \geq \epsilon$. Thus, we can combine the above two cases and we conclude that 
with probability greater than $1-n^{-2}$  for any $x_k$, 
\begin{align}\label{proof prop1 f3}
\frac{1}{n\epsilon^d} \mathbf{T}^\top_{n,x_k}  G_{n,k}\boldsymbol{1}_{N_k}=  P(x_k) \frac{(\sigma_{1,d}(\tilde{\epsilon}(x_k),\epsilon))^2}{\sigma_{2,d}(\tilde{\epsilon}(x_k),\epsilon)}+O(\epsilon)+O\Big(\frac{\sqrt{\log(n)}}{n^{1/2}\epsilon^{d/2}}\Big).
\end{align}

By Lemma \ref{Prop 1 Lemma 1.1} and (1) in Lemma \ref{Prop 1 Lemma 3}, with probability greater than $1-n^{-2}$  for any $x_k$, 
\begin{align}\label{proof prop1 f4}
\frac{1}{n\epsilon^d}{N_k}=P(x_k)\sigma_0(\tilde{\epsilon}(x_k),\epsilon) +O(\epsilon)+O\Big(\frac{\sqrt{\log (n)}}{n^{1/2}\epsilon^{d/2}}\Big),
\end{align}
where the constants in $O(\epsilon)$ and $O\Big(\frac{\sqrt{\log(n)}}{n^{1/2}\epsilon^{d/2}}\Big)$ depend on the $C^1$ norm of $P$ and the second fundamental form of $\iota(M)$.

By \eqref{proof prop1 main}, $B_k=\frac{\frac{1}{n\epsilon^d} \mathbf{T}^\top_{n,x_k}  G_{n,k}(x_k)\boldsymbol{1}_{N_k}}{\frac{1}{n\epsilon^d}{N_k}}$. By \eqref{proof prop1 f3} and \eqref{proof prop1 f4} and taking a union bound, with probability greater than $1-2 n^{-2}$  for any $x_k$,
\begin{align}
B_k=\frac{ P(x_k) \frac{(\sigma_{1,d}(\tilde{\epsilon}(x_k),\epsilon))^2}{\sigma_{2,d}(\tilde{\epsilon}(x_k),\epsilon)}+O(\epsilon)+O\Big(\frac{\sqrt{\log(n)}}{n^{1/2}\epsilon^{d/2}}\Big)}{P(x_k)\sigma_0(\tilde{\epsilon}(x_k),\epsilon) +O(\epsilon)+O\Big(\frac{\sqrt{\log (n)}}{n^{1/2}\epsilon^{d/2}}\Big)}. \nonumber 
\end{align}
Note that $\sigma_0(\tilde{\epsilon}(x_k),\epsilon)$ is bounded from below and from above by constants. Hence,  
\begin{align}\label{final expression of Bk}
B_k= \frac{(\sigma_{1,d}(\tilde{\epsilon}(x_k),\epsilon))^2}{\sigma_0(\tilde{\epsilon}(x_k),\epsilon) \sigma_{2,d}(\tilde{\epsilon}(x_k),\epsilon)}+O(\epsilon)+O\Big(\frac{\sqrt{\log(n)}}{n^{1/2}\epsilon^{d/2}}\Big),
\end{align}
where the constants in $O(\epsilon)$ and $O\Big(\frac{\sqrt{\log(n)}}{n^{1/2}\epsilon^{d/2}}\Big)$ depend on $P_m$,  the $C^1$ norm of $P$ and the second fundamental form of $\iota(M)$.

The properties (1), (2), (3), (4) and (5) follow directly from Proposition \ref{properties of the distance to boundary function} and the definitions of $\tilde{\epsilon}(x)$,  $\sigma_0(\tilde{\epsilon}(x),\epsilon)$, $\sigma_{1,d}(\tilde{\epsilon}(x),\epsilon)$, and $\sigma_{2,d}(\tilde{\epsilon}(x),\epsilon)$. (6) follows from (1), (3), and (4).

\section{Proofs of Proposition \ref{continuity of R} and Proposition \ref{tildeR and R}}\label{proof prop BI 2}
\subsection{Proof of Proposition \ref{tildeR and R}}
We first prove the following lemma about $V(t,r)$ and $U(t,s)$.

\begin{lemma}\label{lemma U and V}
\
\begin{enumerate}
\item 
When $\delta$ is small enough depending on $d$, $V(t, r(1-\delta)) \leq V(t, r) (1- C \delta)$ and $V(t, r(1+\delta)) \geq V(t, r) (1+ C \delta)$, where $C>0$ is a constant depending on $d$.
\item For any $t$,  we have  $ (\frac{d}{|S^{d-1}|})^{\frac{1}{d}}s^{\frac{1}{d}}\leq U(t, s) \leq  (\frac{2d}{|S^{d-1}|})^{\frac{1}{d}}s^{\frac{1}{d}}$
\end{enumerate}
\end{lemma}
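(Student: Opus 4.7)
My plan rests on two very simple geometric observations. Since $t\ge 0$, the lower hemisphere $\{x\in B_r(0):x_d\le 0\}$ always lies in $\mathcal{R}_{t,r}$, while $\mathcal{R}_{t,r}$ itself sits inside the full ball $B_r(0)$. This gives the universal two-sided bound
\begin{equation*}
\frac{|S^{d-1}|}{2d}\,r^d \;\le\; V(t,r) \;\le\; \frac{|S^{d-1}|}{d}\,r^d,
\end{equation*}
and applying it with $r=U(t,s)$ and solving for $U(t,s)$ immediately yields part (2).

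For part (1), I plan to argue via the fundamental theorem of calculus in $r$. The boundary of $\mathcal{R}_{t,r}$ splits into a spherical piece $\{x:|x|=r,\,x_d\le t\}$ and a flat piece lying in the hyperplane $\{x_d=t\}$; only the spherical piece moves as $r$ varies, so $\partial_r V(t,r)$ equals the area of that spherical piece. Because $t\ge 0$, this area is at least that of the lower hemisphere of radius $r$, giving the uniform bound
\begin{equation*}
\partial_r V(t,r)\;\ge\;\tfrac{1}{2}|S^{d-1}|\,r^{d-1}.
\end{equation*}
Integrating this over $[r(1-\delta),r]$ and $[r,r(1+\delta)]$ respectively produces lower bounds on the two increments proportional to $r^d$ times $1-(1-\delta)^d$ and $(1+\delta)^d-1$. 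The second is at least $d\delta$; the first is at least $d(1-\delta)^{d-1}\delta$, which is at least $d\cdot 2^{-(d-1)}\delta$ provided $\delta\le 1/2$.

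The last step is to convert these absolute lower bounds on $V(t,r)-V(t,r(1-\delta))$ and $V(t,r(1+\delta))-V(t,r)$ into multiplicative ones by dividing through by $V(t,r)$, using the upper bound $V(t,r)\le\tfrac{|S^{d-1}|}{d}r^d$ from part (2) to trade a factor of $r^d$ for $V(t,r)$. This yields a constant $C$ depending only on $d$ (essentially $C=d\cdot 2^{-d}$), valid for $\delta$ below a threshold such as $1/2$. I do not anticipate a real obstacle; the only mild subtlety is that $V(t,\cdot)$ fails to be differentiable at the isolated point $r=t$, but this is harmless since the integral formulation only requires the derivative bound to hold almost everywhere, and $V(t,\cdot)$ is continuous (in fact Lipschitz) throughout.
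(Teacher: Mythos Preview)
Your proof is correct, and part~(2) coincides with the paper's argument essentially verbatim. For part~(1), however, you take a genuinely different route. The paper computes the difference $V(t,r)-V(t,r(1-\delta))$ directly from the integral formula for $\sigma_0$, splits into cases according to whether $t$ lies in $[r(1-\delta),r]$, and manipulates the integrand algebraically to extract a factor of at least $\frac{d-1}{2}\delta$ relative to $V(t,r)$. Your approach instead bounds the radial derivative $\partial_r V(t,r)$ from below by the area of the lower hemisphere, $\tfrac{1}{2}|S^{d-1}|r^{d-1}$, and integrates. This is cleaner: it avoids the case split entirely, since the hemisphere bound holds uniformly in $r$ regardless of whether $r$ crosses $t$; it also sidesteps the explicit algebra with the $(1-x^2)^{(d-1)/2}$ integrand. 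In fact, your argument can be phrased without any appeal to derivatives at all, since $\{r(1-\delta)<|x|\le r,\ x_d\le 0\}\subset \mathcal{R}_{t,r}\setminus\mathcal{R}_{t,r(1-\delta)}$ directly gives $V(t,r)-V(t,r(1-\delta))\ge \tfrac{|S^{d-1}|}{2d}r^d\bigl(1-(1-\delta)^d\bigr)$, making the non-differentiability at $r=t$ a non-issue. The trade-off is only in the constant: the paper obtains $C=\tfrac{d-1}{2}$ while you get $C=d\cdot 2^{-d}$, but since the lemma is used only with $\delta$ of order $(K/n)^{1/d}$ and the constant is absorbed into big-$O$ terms downstream, this is immaterial.
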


\begin{proof}
(1) We prove $V(t, r(1-\delta)) \geq V(t, r) (1- C \delta)$, while $V(t, r(1+\delta)) \leq V(t, r) (1+ C \delta)$ can be proved similarly. Without loss of generality, suppose $t<r(1-\delta)<r$. The cases when $r(1-\delta)<t<r$ or $r(1-\delta)<r<t$ are straightforward. By the definition of $V(t, r)$,
\begin{align}
& V(t, r)- V(t, r(1-\delta)) \nonumber \\
=& \frac{|S^{d-1}|}{2d}r^d(1-(1-\delta)^d)+\frac{|S^{d-2}|}{d-1}\int_{0}^{t} \big[ (r^2-x^2)^{\frac{d-1}{2}}-(r^2(1-\delta)^2-x^2)^{\frac{d-1}{2}} \big]dx \nonumber \\
=&  \frac{|S^{d-1}|}{2d}r^d(1-(1-\delta)^d)+\frac{|S^{d-2}|}{d-1}\int_{0}^{t}  (r^2-x^2)^{\frac{d-1}{2}}\big[1-(\frac{r^2(1-\delta)^2-x^2}{r^2-x^2})^{\frac{d-1}{2}} \big]dx \nonumber \\
=&  \frac{|S^{d-1}|}{2d}r^d(1-(1-\delta)^d)+\frac{|S^{d-2}|}{d-1}\int_{0}^{t}  (r^2-x^2)^{\frac{d-1}{2}}\big[1-(1-\frac{2\delta r^2-r^2\delta^2}{r^2-x^2})^{\frac{d-1}{2}} \big]dx \nonumber \\
= & \frac{|S^{d-1}|}{2d}r^d(1-(1-\delta)^d)+\frac{|S^{d-2}|}{d-1}\int_{0}^{t}  (r^2-x^2)^{\frac{d-1}{2}}\big[1-(1-\frac{2\delta-\delta^2}{1-x^2/r^2})^{\frac{d-1}{2}} \big]dx  \nonumber \\
\geq & \frac{|S^{d-1}|}{2d}r^d(1-(1-\delta)^d)+\frac{|S^{d-2}|}{d-1}\int_{0}^{t}  (r^2-x^2)^{\frac{d-1}{2}}\big[1-(1-2\delta+\delta^2)^{\frac{d-1}{2}} \big]dx  \nonumber \\
=  & \frac{|S^{d-1}|}{2d}r^d(1-(1-\delta)^d)+\frac{|S^{d-2}|}{d-1}\int_{0}^{t}  (r^2-x^2)^{\frac{d-1}{2}}\big[1-(1-\delta)^{d-1} \big]dx 
 \nonumber \\
\geq & \frac{d}{2}\delta \frac{|S^{d-1}|}{2d}r^d+\frac{d-1}{2} \delta \frac{|S^{d-2}|}{d-1}\int_{0}^{t}  (r^2-x^2)^{\frac{d-1}{2}}dx  \geq \frac{d-1}{2} \delta V(t, r). \nonumber
\end{align}
Note that in the second last step we use the fact that $1-(1-\delta)^d \geq  \frac{d}{2} \delta$ when $\delta$ is small enough depending on $d$.

(2) Fix any $s$, $r=U(t, s)$ is the radius of the region $\mathcal{R}_{t,r}$ with volume $s$ bounded between the ball of radius $r$ centered at the origin in $\mathbb{R}^d$ and the hyperplane $x_d=t$.   The radius of the region achieves maximum when $t=0$, i.e. $s= \frac{|S^{d-1}|}{2d} r^d$. Hence $U(t,s) \leq (\frac{2d}{|S^{d-1}|})^{\frac{1}{d}}s^{\frac{1}{d}}$.   The radius of the region achieves minimum when $t \geq (\frac{d}{|S^{d-1}|})^{\frac{1}{d}}s^{\frac{1}{d}}$, i.e. when $\mathcal{R}_{t,r}$ is the ball of volume of $s$  . Hence $U(t,s) \geq (\frac{d}{|S^{d-1}|})^{\frac{1}{d}}s^{\frac{1}{d}}$
\end{proof}

\noindent\underline{\textbf{Prove Proposition \ref{tildeR and R} by applying Lemma \ref{lemma U and V}}}

We estimate the probability of the events $\{R(x) \leq \tilde{R}(x)(1-\delta)\}$ and $\{R(x) \geq \tilde{R}(x)(1+\delta)\}$. Based on the definition of $R(x)$,  the event of $\{R(x) \leq \tilde{R}(x)(1-\delta)\}$ is same as the event that $\{N_{\tilde{R}(x)(1-\delta)}(x)\geq K+1\}$. Thus, we have
\begin{equation}
\Pr\big\{R(x) \leq \tilde{R}(x)(1-\delta)\big\} = \Pr\big\{\frac{1}{n}N_{\tilde{R}(x)(1-\delta)}(x) \geq \frac{K+1}{n}\big\}. 
 \end{equation}
Similarly, we have
 \begin{equation}
\Pr\big\{R(x) \geq \tilde{R}(x)(1+\delta)\big\} =\Pr\big\{\frac{1}{n}N_{\tilde{R}(x)(1+\delta)}(x) \leq \frac{K+1}{n}\big\}. 
 \end{equation}
 
We start to evaluate $\Pr\big\{\frac{1}{n}N_{\tilde{R}(x)(1-\delta)}(x) \geq \frac{K+1}{n}\big\}$. By (2) in Lemma \ref{lemma U and V},
\begin{align}\label{proof R(x) tilde R(x) eq 0}
\tilde{R}(x)=U(\tilde{\epsilon}(x), \frac{K+1}{P(x)n}) \leq (\frac{2d}{|S^{d-1}|})^{\frac{1}{d}}(\frac{K+1}{P_m n})^{\frac{1}{d}}:=C_1(\frac{K+1}{n})^{\frac{1}{d}}.
\end{align}
Define $\tilde{R}^*:=C_1(\frac{K+1}{n})^{\frac{1}{d}}$. Recall the definitions of $B^{\mathbb{R}^p}_r$ and $\mathcal{B}_r(\iota(M))$ in Definition \ref{definition of collection of balls}. Observe that for any $r>0$, no matter where the center of $B^{\mathbb{R}^p}_r$ is, if $B^{\mathbb{R}^p}_r \cap \iota(M) \not= \emptyset$, then $B^{\mathbb{R}^p}_r \cap \iota(M) \subset B^{\mathbb{R}^p}_{2r}(\iota(x')) \cap \iota(M)$ for $\iota(x') \in B^{\mathbb{R}^p}_r \cap \iota(M)$. Hence, by (1) in Lemma \ref{Prop 1 Lemma 3},
\begin{align}
\sup_{A \in \mathcal{B}_{4\tilde{R}^*}(\iota(M))} \mathbb{E}[\chi_{A}(\iota(X))] \leq \sup_{x' \in M} \mathbb{E}[\chi_{\big( B^{\mathbb{R}^p}_{8\tilde{R}^*}(\iota(x')) \cap \iota(M) \big) } (\iota(X))] \leq C_2 (\frac{K+1}{n}),
\end{align}
where $C_2$ depends on $C^0$ norm of $P$ and $P_m$. 

\

By (1) in Lemma \ref{Prop 1 Lemma 1},
Suppose  $\frac{K}{n} \rightarrow 0$ as $n \rightarrow \infty$, then $C_2 (\frac{K+1}{n}) \leq \frac{1}{4}$. For $n$ large enough, with probability 
greater than $1-n^{-2}$, for all $x$, 
\begin{align}
&|\frac{1}{n}N_{\tilde{R}(x)(1-\delta)}(x)- \mathbb{E}[\chi_{\big( B^{\mathbb{R}^p}_{\tilde{R}(x)(1-\delta)}(\iota(x)) \cap \iota(M) \big) } (\iota(X))] | \\
\leq & \sup_{A \in  \mathcal{B}_{2\tilde{R}^*}(\iota(M)) } |\frac{N(A)}{n}- \mathbb{E}[\chi_{A}(\iota(X))] | \leq C_3 \frac{\sqrt{(K+1) \log (n)}}{n}, \nonumber
\end{align}
where $C_3$ depends on  $C^0$ norm of $P$ and $P_m$.

Next, we derive the condition on $\delta$ such that $\frac{1}{n}N_{\tilde{R}(x)(1-\delta)}(x) \geq \frac{K+1}{n}$ implies 
\begin{align}\label{proof R(x) tilde R(x) eq 1}
\frac{1}{n}N_{\tilde{R}(x)(1-\delta)}(x)- \mathbb{E}[\chi_{\big( B^{\mathbb{R}^p}_{\tilde{R}(x)(1-\delta)}(\iota(x)) \cap \iota(M) \big) } (\iota(X))] \geq C_3 \frac{\sqrt{(K+1) \log (n)}}{n}.
\end{align}
If we subtract both sides of  $\frac{1}{n}N_{\tilde{R}(x)(1-\delta)}(x) \geq \frac{K+1}{n}$ by $\mathbb{E}[\chi_{\big( B^{\mathbb{R}^p}_{\tilde{R}(x)(1-\delta)}(\iota(x)) \cap \iota(M) \big) } (\iota(X))]$, we have 
\begin{align}
& \frac{1}{n}N_{\tilde{R}(x)(1-\delta)}(x)- \mathbb{E}[\chi_{\big( B^{\mathbb{R}^p}_{\tilde{R}(x)(1-\delta)}(\iota(x)) \cap \iota(M) \big) } (\iota(X))] \\
\geq & \frac{K+1}{n} - \mathbb{E}[\chi_{\big( B^{\mathbb{R}^p}_{\tilde{R}(x)(1-\delta)}(\iota(x)) \cap \iota(M) \big) } (\iota(X))] \nonumber \\
\geq & \frac{K+1}{n} - P(x) V(\tilde{\epsilon}(x) ,\tilde{R}(x)(1-\delta)) -C_4 \big(\tilde{R}(x)(1-\delta)\big)^{d+1} \nonumber \\
\geq & \frac{K+1}{n} - P(x) V(\tilde{\epsilon}(x) ,\tilde{R}(x))+C\delta  P(x) V(\tilde{\epsilon}(x) ,\tilde{R}(x))  -C_4 \big(\tilde{R}(x)(1-\delta)\big)^{d+1} \nonumber \\
= & C\delta \frac{K+1}{n} -C_4 \big(\tilde{R}(x)(1-\delta)\big)^{d+1}. \nonumber
\end{align}
where $C_4>0$ depends on $C^1$ norm of $P$. Lemma \ref{Prop 1 Lemma 3} is applied in the third last step, Lemma \ref{lemma U and V} is applied in the second last step, and the definitions of the function $V$ and the term $\tilde{R}(x)$ are applied in the last step. If 
\begin{align}\label{proof R(x) tilde R(x) eq 2}
C\delta \frac{K+1}{n} -C_4 \big(\tilde{R}(x)(1-\delta)\big)^{d+1} \geq C_3 \frac{\sqrt{(K+1) \log (n)}}{n}, 
\end{align}
then we have \eqref{proof R(x) tilde R(x) eq 1}. Hence, 
\begin{align*}
& \Pr\big\{\frac{1}{n}N_{\tilde{R}(x)(1-\delta)}(x) \geq \frac{K+1}{n}\big\} \\
\leq & \Pr\Bigg\{ \Big(\frac{1}{n}N_{\tilde{R}(x)(1-\delta)}(x)- \mathbb{E}[\chi_{\big( B^{\mathbb{R}^p}_{\tilde{R}(x)(1-\delta)}(\iota(x)) \cap \iota(M) \big) } (\iota(X))] \Big)\geq C_3 \frac{\sqrt{(K+1) \log (n)}}{n}\Bigg\} \\
\leq & \Pr\Bigg\{\Big| \frac{1}{n}N_{\tilde{R}(x)(1-\delta)}(x)- \mathbb{E}[\chi_{\big( B^{\mathbb{R}^p}_{\tilde{R}(x)(1-\delta)}(\iota(x)) \cap \iota(M) \big) } (\iota(X))] \Big| \geq C_3 \frac{\sqrt{(K+1) \log (n)}}{n}\Bigg\}  \leq n^{-2}.
\end{align*}
In order to have \eqref{proof R(x) tilde R(x) eq 2}, it suffices to require
\begin{align}\label{eqn delta bound 1}
\frac{C}{2}\delta \frac{K+1}{n} \geq C_4 ( \tilde{R}^*)^{d+1} \geq C_4 \big(\tilde{R}(x)(1-\delta)\big)^{d+1},
\end{align} 
and 
\begin{align}\label{eqn K N bound 1}
\frac{C}{2}\delta \frac{K+1}{n}  \geq C_3 \frac{\sqrt{(K+1) \log (n)}}{n}.
\end{align}

By \eqref{proof R(x) tilde R(x) eq 0}, \eqref{eqn delta bound 1} is equivalent to $\delta \geq \frac{2C_4}{C} C_1^{d+1} (\frac{K+1}{n})^{\frac{1}{d}}$. Moreover, \eqref{eqn K N bound 1} is equivalent to  $\delta \geq \frac{2C_3}{C} \sqrt{\frac{\log(n)}{K+1}}$. If $\frac{\log(n)}{K+1} (\frac{n}{K+1})^{2/d}\to 0$ as $n \to \infty$, then we have $\frac{2C_4}{C} C_1^{d+1} (\frac{K+1}{n})^{\frac{1}{d}} \geq  \frac{2C_3}{C} \sqrt{\frac{\log(n)}{K+1}}$. Hence, it suffices to require $\delta \geq \frac{2C_4}{C} C_1^{d+1} (\frac{K+1}{n})^{\frac{1}{d}}$ which is guaranteed by $\delta \geq \frac{4C_4}{C} C_1^{d+1} (\frac{K}{n})^{\frac{1}{d}}$. Therefore, we choose  $\delta = \frac{4C_4}{C} C_1^{d+1} (\frac{K}{n})^{\frac{1}{d}}$. Note that $\frac{\log(n)}{K+1} (\frac{n}{K+1})^{2/d}\to 0$ is equivalent to $\frac{\log(n)}{K} (\frac{n}{K})^{2/d}\to 0$. 

Hence, we show that if $\frac{K}{n} \rightarrow 0$ and $\frac{\log(n)}{K} (\frac{n}{K})^{2/d}\to 0$ as $n \to \infty$, then with probability less than $n^{-2}$, for all $x$, $R(x) \leq \tilde{R}(x)(1-\delta)$, where $\delta = \frac{4C_4}{C} C_1^{d+1} (\frac{K}{n})^{\frac{1}{d}}$. 

\

By \eqref{proof R(x) tilde R(x) eq 0},  if $\delta$ is small, $\tilde{R}(x)(1+\delta)\leq 2\tilde{R}^*$.  By (1) in Lemma \ref{Prop 1 Lemma 1}, suppose  $\frac{K}{n} \rightarrow 0$ as $n \rightarrow \infty$, then $C_2 (\frac{K+1}{n}) \leq \frac{1}{4}$. Hence, for $n$ large enough, with probability 
greater than $1-n^{-2}$, for all $x$, 
\begin{align}
&|\frac{1}{n}N_{\tilde{R}(x)(1+\delta)}(x)- \mathbb{E}[\chi_{\big( B^{\mathbb{R}^p}_{\tilde{R}(x)(1+\delta)}(\iota(x)) \cap \iota(M) \big) } (\iota(X))] | \\
\leq & \sup_{A \in  \mathcal{B}_{2\tilde{R}^*}(\iota(M)) } |\frac{N(A)}{n}- \mathbb{E}[\chi_{A}(\iota(X))] | \leq C_3 \frac{\sqrt{(K+1) \log (n)}}{n}, \nonumber
\end{align}
where $C_3$ depends on  $C^0$ norm of $P$ and $P_m$.

We derive the condition on $\delta$ such that $\frac{1}{n}N_{\tilde{R}(x)(1+\delta)}(x) \leq \frac{K+1}{n}$ implies 
\begin{align}\label{proof R(x) tilde R(x) eq 3}
\frac{1}{n}N_{\tilde{R}(x)(1+\delta)}(x)- \mathbb{E}[\chi_{\big( B^{\mathbb{R}^p}_{\tilde{R}(x)(1+\delta)}(\iota(x)) \cap \iota(M) \big) } (\iota(X))] \leq -C_3 \frac{\sqrt{(K+1) \log (n)}}{n}.
\end{align}
If we subtract both sides of  $\frac{1}{n}N_{\tilde{R}(x)(1+\delta)}(x) \leq \frac{K+1}{n}$ by $\mathbb{E}[\chi_{\big( B^{\mathbb{R}^p}_{\tilde{R}(x)(1+\delta)}(\iota(x)) \cap \iota(M) \big) } (\iota(X))]$, we have 
\begin{align}
& \frac{1}{n}N_{\tilde{R}(x)(1+\delta)}(x)- \mathbb{E}[\chi_{\big( B^{\mathbb{R}^p}_{\tilde{R}(x)(1+\delta)}(\iota(x)) \cap \iota(M) \big) } (\iota(X))] \\
\leq & \frac{K+1}{n} - \mathbb{E}[\chi_{\big( B^{\mathbb{R}^p}_{\tilde{R}(x)(1+\delta)}(\iota(x)) \cap \iota(M) \big) } (\iota(X))] \nonumber \\
\leq & \frac{K+1}{n} - P(x) V(\tilde{\epsilon}(x) ,\tilde{R}(x)(1+\delta)) +C_4 \big(\tilde{R}(x)(1+\delta)\big)^{d+1} \nonumber \\
\leq & \frac{K+1}{n} - P(x) V(\tilde{\epsilon}(x) ,\tilde{R}(x))-C\delta  P(x) V(\tilde{\epsilon}(x) ,\tilde{R}(x)) +C_4 \big(\tilde{R}(x)(1+\delta)\big)^{d+1} \nonumber \\
= & C_4 \big(\tilde{R}(x)(1+\delta)\big)^{d+1}-C\delta \frac{K+1}{n} . \nonumber
\end{align}
where $C_4>0$ depends on $C^1$ norm of $P$. Lemma \ref{Prop 1 Lemma 3} is applied in the third last step, Lemma \ref{lemma U and V} is applied in the second last step, and the definitions of the function $V$ and the term $\tilde{R}(x)$ are applied in the last step. If 
\begin{align}\label{proof R(x) tilde R(x) eq 4}
C_4 \big(\tilde{R}(x)(1+\delta)\big)^{d+1}-C\delta \frac{K+1}{n} \leq -C_3 \frac{\sqrt{(K+1) \log (n)}}{n}, 
\end{align}
then we have \eqref{proof R(x) tilde R(x) eq 3}. Hence, 
\begin{align*}
& \Pr\big\{\frac{1}{n}N_{\tilde{R}(x)(1+\delta)}(x) \leq \frac{K+1}{n}\big\} \\
\leq & \Pr\Bigg\{\Big( \frac{1}{n}N_{\tilde{R}(x)(1+\delta)}(x)- \mathbb{E}[\chi_{\big( B^{\mathbb{R}^p}_{\tilde{R}(x)(1+\delta)}(\iota(x)) \cap \iota(M) \big) } (\iota(X))]\Big) \leq -C_3 \frac{\sqrt{(K+1) \log (n)}}{n}\Bigg\} \\
\leq & \Pr\Bigg\{\Big| \frac{1}{n}N_{\tilde{R}(x)(1+\delta)}(x)- \mathbb{E}[\chi_{\big( B^{\mathbb{R}^p}_{\tilde{R}(x)(1+\delta)}(\iota(x)) \cap \iota(M) \big) } (\iota(X))] \Big| \geq C_3 \frac{\sqrt{(K+1) \log (n)}}{n}\Bigg\}  \leq n^{-2}.
\end{align*}
In order to have \eqref{proof R(x) tilde R(x) eq 4}, it suffices to require
\begin{align}\label{eqn delta bound 11}
\frac{C}{2}\delta \frac{K+1}{n} \geq C_4 ( 2\tilde{R}^*)^{d+1} \geq C_4 \big(\tilde{R}(x)(1+\delta)\big)^{d+1},
\end{align} 
and 
\begin{align}\label{eqn K N bound 11}
\frac{C}{2}\delta \frac{K+1}{n}  \geq C_3 \frac{\sqrt{(K+1) \log (n)}}{n}.
\end{align}

Note that by \eqref{proof R(x) tilde R(x) eq 0}, \eqref{eqn delta bound 11} is equivalent to $\delta \geq \frac{2^{d+2}C_4}{C} C_1^{d+1} (\frac{K+1}{n})^{\frac{1}{d}}$ and \eqref{eqn K N bound 11} is equivalent to  $\delta \geq \frac{2C_3}{C} \sqrt{\frac{\log(n)}{K+1}}$. If $\frac{\log(n)}{K+1} (\frac{n}{K+1})^{2/d}\to 0$ as $n \to \infty$, then we have $\frac{2^{d+2}C_4}{C} C_1^{d+1} (\frac{K+1}{n})^{\frac{1}{d}} \geq  \frac{2C_3}{C} \sqrt{\frac{\log(n)}{K+1}}$. Hence, it suffices to require $\delta \geq \frac{2^{d+2}C_4}{C} C_1^{d+1} (\frac{K+1}{n})^{\frac{1}{d}}$ which is guaranteed by $\delta \geq \frac{2^{d+3}C_4}{C} C_1^{d+1} (\frac{K}{n})^{\frac{1}{d}}$. Therefore, we choose  $\delta = \frac{2^{d+3}C_4}{C} C_1^{d+1} (\frac{K}{n})^{\frac{1}{d}}$. At last, note that $\frac{\log(n)}{K+1} (\frac{n}{K+1})^{2/d}\to 0$ is equivalent to $\frac{\log(n)}{K} (\frac{n}{K})^{2/d}\to 0$. 

Hence, we show that if $\frac{K}{n} \rightarrow 0$ and $\frac{\log(n)}{K} (\frac{n}{K})^{2/d}\to 0$ as $n \to \infty$, than with probability less than $n^{-2}$, for all $x$, $R(x) \geq \tilde{R}(x)(1+\delta)$, where $\delta = \frac{2^{d+3}C_4}{C} C_1^{d+1} (\frac{K}{n})^{\frac{1}{d}}$. 

In conclusion if $\frac{K}{n} \rightarrow 0$ and $\frac{\log(n)}{K} (\frac{n}{K})^{2/d}\to 0$ as $n \to \infty$, than with probability greater than $1-2n^{-2}$, for all $x$, $R(x) = \tilde{R}(x)(1+O( (\frac{K}{n})^{\frac{1}{d}}))$, where the constant in $O( (\frac{K}{n})^{\frac{1}{d}})$ depends on $d$, $C^1$ norm of $P$, and $P_m$

When $n$ is large enough, we have $\frac{1}{2}\tilde{R}(x) \leq R(x) \leq \frac{3}{2}\tilde{R}(x)$. Hence, by (2) in Lemma \ref{lemma U and V} and the definition of $\tilde{R}(x)$,  
\begin{align}\label{Appendix proof upper and lower bounds of Rx for all x}
\frac{1}{2} (\frac{d}{|S^{d-1}|})^{\frac{1}{d}}(\frac{K}{P_M n})^{\frac{1}{d}} \leq \frac{1}{2} (\frac{d(K+1)}{|S^{d-1}|P_M n})^{\frac{1}{d}}\leq R(x) \leq  \frac{3}{2} (\frac{2d(K+1)}{|S^{d-1}|P_m n})^{\frac{1}{d}}\leq 3 (\frac{2d}{|S^{d-1}|})^{\frac{1}{d}}(\frac{K}{P_m n})^{\frac{1}{d}}.
\end{align}
Hence, 
\begin{align*}
\frac{1}{2} (\frac{d}{|S^{d-1}|})^{\frac{1}{d}}(\frac{K}{P_M n})^{\frac{1}{d}} \leq  R^* \leq 3 (\frac{2d}{|S^{d-1}|})^{\frac{1}{d}}(\frac{K}{P_m n})^{\frac{1}{d}}.
\end{align*}

\subsection{Proof Proposition \ref{continuity of R}}
(1) Consider $x, x' \in M$. Assume $R(x') \geq R(x)$. Observe that by triangle inequality, $B^{\mathbb{R}^p}_{R(x)}(\iota(x)) \subset B^{\mathbb{R}^p}_{R(x)+\|\iota(x)-\iota(x')\|_{\mathbb{R}^p}}(\iota(x'))$. Hence, $B^{\mathbb{R}^p}_{R(x)+\|\iota(x)-\iota(x')\|_{\mathbb{R}^p}}(\iota(x'))$ contains at least $K+1$ points. We have $R(x') \leq R(x)+\|\iota(x)-\iota(x')\|_{\mathbb{R}^p}$ , i.e. $R(x') -R(x) \leq \|\iota(x)-\iota(x')\|_{\mathbb{R}^p}$. Similarly, when $R(x') \leq R(x)$, we have $R(x) -R(x') \leq \|\iota(x)-\iota(x')\|_{\mathbb{R}^p}$. Hence, $|R(x') -R(x)| \leq \|\iota(x)-\iota(x')\|_{\mathbb{R}^p}$. When $d_g(x, x') \to 0$, then $\|\iota(x)-\iota(x')\|_{\mathbb{R}^p} \to 0$ and $|R(x') -R(x)| \to 0$.

\

(2) From the proof of (1),  $|R(\gamma_x(t_1)) -R(\gamma_x(t_2))| \leq \|\iota(\gamma_x(t_1))-\iota(\gamma_x(t_2))\|_{\mathbb{R}^p}$. Since $\gamma_x(t)$ is unit speed and {length minimizing} on $[0, t_2]$, we have 
\begin{align}\label{relation R and t}
|R(\gamma_x(t_1)) -R(\gamma_x(t_2))| \leq \|\iota(\gamma_x(t_1))-\iota(\gamma_x(t_2))\|_{\mathbb{R}^p} \leq t_2-t_1. 
\end{align}
Observe that
\begin{align}
\frac{t_2}{R(\gamma_x(t_2))}-\frac{t_1}{R(\gamma_x(t_1))}=\frac{R(\gamma_x(t_1))-t_1 \frac{R(\gamma_x(t_2))-R(\gamma_x(t_1))}{t_2-t_1}}{R(\gamma_x(t_1))R(\gamma_x(t_2))/(t_2-t_1).}\nonumber
\end{align}
If $R(\gamma_x(t_2)) \leq R(\gamma_x(t_1))$, then $\frac{t_2}{R(\gamma_x(t_2))}>\frac{t_1}{R(\gamma_x(t_1))}$. If $R(\gamma_x(t_2))>R(\gamma_x(t_1))$, we have $\frac{R(\gamma_x(t_2))-R(\gamma_x(t_1))}{t_2-t_1}<1$ by \eqref{relation R and t}. Hence, if $t_1<R(\gamma_x(t_1))$, then $\frac{t_1}{R(\gamma_x(t_1))}<\frac{t_2}{R(\gamma_x(t_2))}$. The conclusion follows.

\section{Proofs of Theorem \ref{Covariance KNN setup} and Theorem \ref{prop BI 2}}\label{proof THM BI 2}

\subsection{Proof of Theorem \ref{Covariance KNN setup}}
First, we relate  $C_{n,k}$ constructed through the KNN scheme to $C_{n,k}$ constructed through the $\epsilon$ radius ball scheme through $R(x)$ defined in Definition \ref{def of R(x)}. Observe that for each $x_k$, $C_{n,k}$ constructed through the KNN scheme is equal to the  $C_{n,k}$ constructed through the $R(x_k)$-radius ball scheme. By Theorem \ref{local covariance epsilon},  if for any $k$, $R(x_k) \to 0$ and $\frac{\sqrt{\log(n)}}{n^{1/2} R(x_k)^{d/2+1}}\to 0$ and  as $n\to \infty$, then with probability greater than $1-2n^{-2}$,  for all $k$,
\begin{align}\label{Cnk in R(x) 1}
\frac{1}{n} C_{n,k}= & P(x_k){
\begin{bmatrix}
M^{(0)}(x_k,R(x_k)) & 0 \\
0& 0 
\end{bmatrix}} R(x_k)^{d+2} 
+\begin{bmatrix}
M^{(11)}(x_k, R(x_k)) & M^{(12)}(x_k,R(x_k)) \\
M^{(21)}(x_k,R(x_k)) & 0 
\end{bmatrix}
R(x_k)^{d+3}  \\
&+O(R(x_k)^{d+4})+O\Big(\frac{\sqrt{\log(n)}}{n^{1/2}}R(x_k)^{d/2+2}\Big). \nonumber
\end{align}
When $\tilde{\epsilon}(x_k) \geq R^*$,
\begin{align}
\frac{1}{n} C_{n,k}= & P(x_k){
\begin{bmatrix}
M^{(0)}(x_k,R(x_k)) & 0 \\
0& 0 
\end{bmatrix}} R(x_k)^{d+2} 
+O(R(x_k)^{d+4}) +O\Big(\frac{\sqrt{\log(n)}}{n^{1/2}}R(x_k)^{d/2+2}\Big). \label{Cnk in R(x) 2}
\end{align}
Second, we bound $R(x)$ by $\frac{K}{n}$. {By \eqref{Appendix proof upper and lower bounds of Rx for all x},} suppose we have $\frac{K}{n} \rightarrow 0$ and $\frac{\log(n)}{K} (\frac{n}{K})^{2/d}\to 0$ as $n \to \infty$, then for all $x$, with probability greater than $1-2n^{-2}$, 
\begin{align*}
\frac{1}{2} (\frac{d}{|S^{d-1}|})^{\frac{1}{d}}(\frac{K}{P_M n})^{\frac{1}{d}} \leq R(x) \leq  3 (\frac{2d}{|S^{d-1}|})^{\frac{1}{d}}(\frac{K}{P_m n})^{\frac{1}{d}}.
\end{align*}
Hence,  $\frac{K}{n} \rightarrow 0$ is equivalent to $R(x_k)\to 0$ and $\frac{\log(n)}{K} (\frac{n}{K})^{2/d}\to 0$ is equivalent to $\frac{\sqrt{\log(n)}}{n^{1/2}{R(x_k)^{d/2+1}}}\to 0$.  If we substitute the above bounds of $R(x)$ into  \eqref{Cnk in R(x) 1} and \eqref{Cnk in R(x) 2}, then with probability greater than $1-4n^{-2}$, we have
\begin{align*}
\frac{1}{n} C_{n,k}= & P(x_k){
\begin{bmatrix}
M^{(0)}(x_k,R(x_k)) & 0 \\
0& 0 
\end{bmatrix}} R(x_k)^{d+2} 
+\begin{bmatrix}
\tilde{M}^{(11)}(x_k, \frac{K}{n}) & \tilde{M}^{(12)}(x_k, \frac{K}{n}) \\
\tilde{M}^{(21)}(x_k, \frac{K}{n}) & 0 
\end{bmatrix} \\
&+O( (\frac{K}{n})^{\frac{d+4}{d}}) +O\Big(\frac{\sqrt{K \log(n)}}{n} (\frac{K}{n})^{\frac{2}{d}}\Big). 
\end{align*}
The magnitudes of the entries in $\tilde{M}^{(11)}(x_k, \frac{K}{n})$ , $\tilde{M}^{(12)}(x_k, \frac{K}{n})$, and $\tilde{M}^{(21)}(x_k, \frac{K}{n}) $ are bounded by $\tilde{C}  (\frac{K}{n})^{\frac{d+3}{d}}$, where $\tilde{C}$ is constant depending on $d$, $P_m$, the $C^1$ norm of $P$, the second fundamental form of $\iota(M)$ in $\mathbb{R}^p$ at $\iota(x_k)$, and the second fundamental form of $\partial M$ in $M$ at $x_{\partial,k}$.

When $\tilde{\epsilon}(x_k) \geq R^*$,
\begin{align*}
\frac{1}{n} C_{n,k}= & P(x_k){
\begin{bmatrix}
M^{(0)}(x_k,R(x_k)) & 0 \\
0& 0 
\end{bmatrix}} R(x_k)^{d+2} 
+O( (\frac{K}{n})^{\frac{d+4}{d}}) +O\Big(\frac{\sqrt{K \log(n)}}{n} (\frac{K}{n})^{\frac{2}{d}}\Big). 
\end{align*}

At last, we discuss the entries in $\tilde{M}^{(0)}(x)=P(x)M^{(0)}(x,R(x))
 R(x)^{d+2}$ for $x \in M$.  $\tilde{M}^{(0)}(x)$ is a diagonal matrix.  By Theorem \ref{local covariance epsilon} and Proposition \ref{tildeR and R}, the $i$th diagonal entry of $\tilde{M}^{(0)}(x)$ is 
\begin{align*}
P(x) \sigma_{2}(\tilde{\epsilon}(x) ,R(x)) R(x)^{d+2} = P(x) \sigma_{2}(\tilde{\epsilon}(x) ,R(x)) (\tilde{R}(x)(1+O( (\frac{K}{n})^{\frac{1}{d}})))^{d+2},
\end{align*}
for $i=1, \cdots, d-1$. And the $d$th diagonal entry is 
\begin{align*}
P(x) \sigma_{2,d}(\tilde{\epsilon}(x) ,R(x)) R(x)^{d+2} = P(x) \sigma_{2,d}(\tilde{\epsilon}(x) ,R(x)) (\tilde{R}(x)(1+O( (\frac{K}{n})^{\frac{1}{d}})))^{d+2}.
\end{align*}
By \eqref{proof R(x) tilde R(x) eq 0}, we can derive the following equalities,
\begin{align*}
& P(x) \sigma_{2}(\tilde{\epsilon}(x) ,R(x)) R(x)^{d+2} =P(x)\sigma_{2}(\tilde{\epsilon}(x) ,R(x)) \tilde{R}(x)^{d+2}+O((\frac{K}{n})^{\frac{d+3}{d}}). \\
& P(x) \sigma_{2,d}(\tilde{\epsilon}(x) ,R(x)) R(x)^{d+2} =P(x) \sigma_{2,d}(\tilde{\epsilon}(x) , R(x))  \tilde{R}(x)^{d+2}+O((\frac{K}{n})^{\frac{d+3}{d}}).
\end{align*}
Define $\mu_1(x)=P(x) \sigma_{2}(\tilde{\epsilon}(x) , R(x))  \tilde{R}(x)^{d+2}$ and $ \mu_2(x)=P(x) \sigma_{2,d}(\tilde{\epsilon}(x) , R(x))  \tilde{R}(x)^{d+2}$. 

Both $\mu_1(x)$ and $\mu_2(x)$ are continuous functions. We focus on $ \mu_1(x)$, while $\mu_2(x)$ can be discussed similarly.  Note that $\frac{|S^{d-1}|}{2d(d+2)} \leq \sigma_{2}(\tilde{\epsilon}(x) , R(x)) \leq \frac{|S^{d-1}|}{d(d+2)}$. By (2) in Lemma \ref{lemma U and V} and the definition of $\tilde{R}(x)$, $ (\frac{d}{|S^{d-1}|})^{\frac{d+2}{d}}(\frac{K+1}{P(x) n})^{\frac{d+2}{d}}\leq \tilde{R}(x)^{d+2} \leq  (\frac{2d}{|S^{d-1}|})^{\frac{d+2}{d}}(\frac{K+1}{P(x) n})^{\frac{d+2}{d}}$. Hence, 
$$\frac{1}{2(d+2)}(\frac{d}{|S^{d-1}| P_M}) ^{\frac{2}{d}} (\frac{K+1}{n})^{\frac{d+2}{d}}\leq \mu_1(x) \leq \frac{2}{d+2}(\frac{2d}{|S^{d-1}| P_m}) ^{\frac{2}{d}} (\frac{K+1}{n})^{\frac{d+2}{d}}.$$
When $x \in \partial M$, $\sigma_{2}(\tilde{\epsilon}(x) , R(x))=\frac{|S^{d-1}|}{2d(d+2)}$ and $\tilde{R}(x)^{d+2}=(\frac{2d}{|S^{d-1}|})^{\frac{d+2}{d}}(\frac{K+1}{P(x) n})^{\frac{d+2}{d}}$. Therefore,
$$\mu_1(x) = \frac{1}{(d+2)}(\frac{2d}{|S^{d-1}| P(x)}) ^{\frac{2}{d}} (\frac{K+1}{n})^{\frac{d+2}{d}}.$$
The same results hold for $\mu_2(x)$.

When $\tilde{\epsilon}(x_k)  \geq R^*$,
$$\sigma_{2}(\tilde{\epsilon}(x_k) , R(x_k))=\sigma_{2,d}(\tilde{\epsilon}(x_k) , R(x_k))= \frac{|S^{d-1}|}{d(d+2)}.$$
Moreover, by (2) in Lemma \ref{lemma U and V}, $\tilde{R}(x_k)=(\frac{d}{|S^{d-1}|})^{\frac{1}{d}}(\frac{K+1}{P(x_k) n})^{\frac{1}{d}}$. Therefore, 
$$\mu_1(x_k)= \mu_2(x_k)=\frac{1}{(d+2)}(\frac{d}{|S^{d-1}| P(x_k)}) ^{\frac{2}{d}} (\frac{K+1}{n})^{\frac{d+2}{d}}.$$

\subsection{Proof of Theorem \ref{prop BI 2}}
By Proposition \ref{tildeR and R}, suppose $\frac{K}{n} \rightarrow 0$ and $\frac{\log(n)}{K} (\frac{n}{K})^{2/d}\to 0$ as $n \to \infty$.   Then, for all $x$, with probability greater than $1-2 n^{-2}$,  we have $\frac{1}{2}\tilde{R}(x) \leq R(x) \leq \frac{3}{2}\tilde{R}(x)$. Hence, by (2) in Lemma \ref{lemma U and V} and the definition of $\tilde{R}(x)$,  
\begin{align}\label{R X_k double bounds}
\frac{1}{2} (\frac{d}{|S^{d-1}|})^{\frac{1}{d}}(\frac{K}{P_M n})^{\frac{1}{d}} \leq R(x) \leq  3 (\frac{2d}{|S^{d-1}|})^{\frac{1}{d}}(\frac{K}{P_m n})^{\frac{1}{d}}.
\end{align}

Observe that for each $x_k$, $B_k$ constructed through the KNN scheme is equal to the $B_k$ constructed through the $R(x_k)$-radius ball scheme. Based on the proof of Lemmas \ref{Prop 1 Lemma 2} and \ref{Prop 1 Lemma 4} (refer to \cite{wu2018locally}), the conclusions of the lemmas still hold whenever we choose $\tilde{C}_1 n \epsilon^{d+3}\leq c \leq \tilde{C}_2 n \epsilon^{d+3}$, where $\tilde{C}_1$ and $\tilde{C}_2$ are constants independent of $n$ and $\epsilon$.  Suppose we choose $\tilde{C}_1 n R(x_k)^{d+3} \leq c \leq \tilde{C}_2 n R(x_k)^{d+3}$ where $\tilde{C}_1$ and $\tilde{C}_2$  are constants independent of $n$ and $K$. By \eqref{final expression of Bk} {which follows from Lemmas \ref{Prop 1 Lemma 2} and \ref{Prop 1 Lemma 4}}, if  for any $k$, $R(x_k) \to 0$ and $\frac{\sqrt{\log(n)}}{n^{1/2} R(x_k)^{d/2+1}}\to 0$ and  as $n\to \infty$, then with probability greater than $1-2n^{-2}$,  for all $k$,
\begin{align}\label{B_k in R_k}
B_k= \frac{(\sigma_{1,d}(\tilde{\epsilon}(x_k),R(x_k)))^2}{\sigma_0(\tilde{\epsilon}(x_k),R(x_k)) \sigma_{2,d}(\tilde{\epsilon}(x_k),R(x_k))}+O(R(x_k))+O\Big(\frac{\sqrt{\log(n)}}{n^{1/2}R(x_k)^{d/2}}\Big). 
\end{align}
where $\tilde{\epsilon}(x_k)$ is the distance from $x_k \in M$ to $\partial M$ as defined in \eqref{Definition tildeepsilon gamma}.  The constants in $O(R(x_k))$ and $O\Big(\frac{\sqrt{\log(n)}}{n^{1/2}R(x_k)^{d/2}}\Big)$ depend on $P_m$,  the $C^1$ norm of $P$ and the second fundamental form of $\iota(M)$. Moreover, if  $\tilde{\epsilon}(x)$ is the distance from $x \in M$ to $\partial M$, then we define
$$ \tilde{B}(x)=\frac{(\sigma_{1,d}(\tilde{\epsilon}(x),R(x)))^2}{\sigma_0(\tilde{\epsilon}(x),R(x)) \sigma_{2,d}(\tilde{\epsilon}(x),R(x))}.$$
By \eqref{R X_k double bounds}, $\frac{K}{n} \rightarrow 0$ is equivalent to $R(x_k)\to 0$ and $\frac{\log(n)}{K} (\frac{n}{K})^{2/d}\to 0$ is equivalent to $\frac{\sqrt{\log(n)}}{n^{1/2}{R(x_k)^{d/2+1}}}\to 0$.  Moreover, if $c=n (\frac{K}{n})^{\frac{d+3}{d}}$ , then 
$$(\frac{2}{3})^{d+3} (\frac{|S^{d-1}|}{2d})^{\frac{d+3}{d}} P_m^{\frac{d+3}{d}} n R(x_k)^{d+3} \leq  c \leq  2^{d+3} (\frac{|S^{d-1}|}{d})^{\frac{d+3}{d}} P_M^{\frac{d+3}{d}}n R(x_k)^{d+3}.$$

By taking the union bound for the probability, with probability greater than $1-4n^{-2}$,  for all $k$, we have \eqref{R X_k double bounds} for all $x_k$ and \eqref{B_k in R_k}. If we substitute \eqref{R X_k double bounds} into \eqref{B_k in R_k},
$$ B_k= \frac{(\sigma_{1,d}(\tilde{\epsilon}(x_k),R(x_k)))^2}{\sigma_0(\tilde{\epsilon}(x_k),R(x_k)) \sigma_{2,d}(\tilde{\epsilon}(x_k),R(x_k))}+O((\frac{K}{n})^{\frac{1}{d}})+O\Big(\sqrt{\frac{\log(n)}{K}}\Big). $$
The constants in $O(\frac{K}{n})^{\frac{1}{d}}$ and $O\Big(\sqrt{\frac{\log(n)}{K}}\Big)$ depend on $P_m$,  the $C^1$ norm of $P$ and the second fundamental form of $\iota(M)$. 

Next, we discuss the properties of $\tilde{B}(x)$. By Proposition \ref{continuity of R} and the definitions of $\sigma_0$, $\sigma_{1,d}$, and $\sigma_{2,d}$,
$\tilde{B}(x)$ is a continuous function on $M$. When $x\in \partial M$, $\tilde{\epsilon}(x)=0$ and we have $ \tilde{B}(x)=\frac{4d^2(d+2)|S^{d-2}|^2}{(d^2-1)^2|S^{d-1}|^2}$ .

Suppose that we have $t_1 > R(\gamma_x(t_1))$ and $t_1<t_2< R^*$.  Since $\gamma_x(t)$ is distance mimizing on $[0, 2R^*]$, by \eqref{relation R and t}, $R(\gamma_x(t_2))<R(\gamma_x(t_1))+ t_2-t_1<t_2$. Since $R(x)$ is continuous, $R(\gamma_x(0))>0$, and $R(\gamma_x(R^*)) \leq R^*$, by the intermediate value theorem and the above discussion, there is a $0<t_x^* \leq R^*$ such that 
\begin{enumerate}
\item $R(\gamma_x(t_x^*))=t_x^*$,
\item $R(\gamma_x(t))>t$, for $t<t_x^*$,
\item $R(\gamma_x(t))<t$, for $t>t_x^*$.
\end{enumerate}

Fix $x \in \partial M$, $d_g(\gamma_x(t), \partial M)=t$ for $0\leq t \leq 2R^*$. Then,
$$ \tilde{B}(\gamma_x(t))=\frac{(\sigma_{1,d}(t,R(\gamma_x(t))))^2}{\sigma_0(t,R(\gamma_x(t))) \sigma_{2,d}(t,R(\gamma_x(t)))}.$$ 
Based on the definitions of $\sigma_0$, $\sigma_{1,d}$, and $\sigma_{2,d}$, $ \tilde{B}(\gamma_x(t))=0$ for $t \geq t_x^*$. 
Suppose $t_1<t_2<t_x^*$, then $t_1<R(\gamma_x(t_1))$. Hence, by Proposition \ref{continuity of R}, we have $\frac{t_1}{R(\gamma_x(t_1))}<\frac{t_2}{R(\gamma_x(t_2))}$.
Since $ \tilde{B}(\gamma_x(t))$ is a decreasing function of $\frac{t}{R(\gamma_x(t))}$ based on the definitions, the conclusion follows.

\section{Review of the boundary detection algorithms}\label{Summary algorithms}
Let $(M,g)$ be a d-dimensional compact, smooth Riemannian manifold with boundary isometrically embedded in $\mathbb{R}^p$ via $\iota:M \hookrightarrow \mathbb{R}^p$. We assume the boundary of $M$, denoted as $\partial M$, is smooth. Suppose  $\{x_1 \cdots, x_n\} \subset M$ are i.i.d. samples based on a p.d.f $P$ on M. Given $\mathcal{X}=\{z_i=\iota(x_i)\}_{i=1}^n$, the detected boundary points from $\mathcal{X}$ are denoted as $\partial \mathcal{X}$. In this section, we review the boundary detection algorithms that we apply in Section \ref{numerical simulation}. Furthermore, in the original formulations of some algorithms, the threshold parameters are not explicitly specified. We describe our chosen thresholds for the algorithm implementations in Section \ref{numerical simulation}.

\subsection{$\alpha$-shape algorithm}
The $\alpha$-shape algorithm\cite{Edelsbrunner:1983,Edelsbrunner:1994} is widely applied algorithm in boundary detection. It works effectively when $M$ has the same dimension $p$ as the ambient space $\mathbb{R}^p$. Intuitively, since each connected component of $\partial \iota(M)$ is a hypersurface in $\mathbb{R}^p$, we approximate $\partial \iota(M)$ using hyperspheres, where points on these hyperspheres can be classified as $\partial \mathcal{X}$. The algorithm is summarized as follows. First, the {\em generalized $\alpha$ ball} in $\mathbb{R}^p$ for $\alpha\in \mathbb{R}$ is defined in the following way. For $\alpha>0$, a generalized $\alpha$ ball is a closed $p$-ball of radius $1/\alpha$; for $\alpha<0$, it is the closure of complement of a $p$-ball of radius $-1/\alpha$; if $\alpha=0$, it is the closed half space. Using the generalized $\alpha$ ball, we can define the {\em $\alpha$-boundary}. If there is an $\alpha$ ball containing $\mathcal{X}$ and there are $p$ points of $\mathcal{X}$ on the boundary of the $\alpha$ ball, then these $p$ points are called $\alpha$-neighbours. The union of all $\alpha$-neighbors is called $\alpha$ boundary points, denoted as  $\partial \mathcal{X}$.  However, identifying $\alpha$-boundary points directly from the definition is generally challenging. In practice, the relationship between $\alpha$-boundary points and the Delaunay triangulation is utilized.  Recall that the Delaunay triangulation of $\mathcal{X}$ is a triangulation, denoted as $\texttt{DT}(\mathcal{X})$, such that no point in $\mathcal{X}$ is in the circumhypersphere of any $p$-simplex in $\texttt{DT}(\mathcal{X})$. For each k-simplex $T$ in $\texttt{DT}(\mathcal{X})$,  where $0 \leq k \leq p$, let $\sigma_T$ be the radius of circumhypersphere of $T$. The {\em $\alpha$-complex} $C_\alpha$ is defined as $\{ T \in \texttt{DT}(\mathcal{X}) \mid \sigma_T < 1/|\alpha|\}$. The vertices on the boundary of $C_\alpha$ constitute the $\alpha$-boundary. For a comprehensive review of the Delaunay triangulation and $\alpha$-complex, refer to \cite{toth2017handbook}.

\subsection{ BORDER algorithm}

In BORDER algorithm \cite{Xia:2006}, let $\mathcal{O}_k \subset \mathcal{X}$  be the nearest neighbors of $z_k \in \mathcal{X}$ in the KNN scheme. The {\em reverse $K$ nearest neighbors} of $z_k$ is defined as $\mathcal{R}_k:= \{z_i \in \mathcal{X}| z_k \in \mathcal{O}_{i} \}$. If $|\mathcal{R}_k|$ is smaller than a specified threshold, $z_k$ is classified as a boundary point. Otherwise, it is an interior point. Note that distinguishing $|\mathcal{R}_k|$ between boundary and interior points can be challenging when the points in $\mathcal{X}$ are not uniformly distributed on an embedded manifold within Euclidean space. Consequently, the algorithm's performance may be sensitive to the data distribution.

Suppose $\delta$ represents the value at the $5$th percentile of $\{|\mathcal{R}_i|\}_{i=1}^n$. In the simulations in Section \ref{numerical simulation}, we implement BORDER such that $z_k \in \partial \mathcal{X}$ if $|\mathcal{R}_k|<\delta$.

\subsection{BRIM algorithm}
In BRIM algorithm \cite{Qiu:2007}, let $\mathcal{O}_k \subset \mathcal{X}$  be the nearest neighbors of $z_k \in \mathcal{X}$ in the $\epsilon$-radius ball scheme, consisting of $N_k$ points.  For each $z_k$,  the attractor of $z_k$ is defined as $\texttt{Att}(z_k)=\arg\max_{z_i \in \mathcal{O}_k} N_i$. For each $z_i \in \mathcal{O}_k$, define $\theta(z_i)=\angle_{z_i, z_k, \texttt{Att}(z_k)}\in [0, \pi] $. Using the $\theta$ function, define $\texttt {PN}(z_k):=\{z_i \in \mathcal{O}_k| \theta(z_i) \leq \pi/2 \}$ and $\texttt {NN}(z_k):=\{z_i \in \mathcal{O}_k| \theta(z_i) > \pi/2 \}$. Finally, define $\texttt {BD}(z_k):=\frac{|\texttt {PN}(z_k)|}{|\texttt {NN}(z_k)|} \big||\texttt {PN}(z_k)|-|\texttt {NN}(z_k)|\big|$. A threshold $\delta$ is chosen such that if $\texttt{BD}(z_k) > \delta$, then $z_k \in \partial \mathcal{X}$; otherwise, it is an interior point. However, the distinction in $\texttt{BD}(z_k)$ between a boundary point and an interior point is significant only if the attractor is selected appropriately. Specifically, under the manifold assumption, for any $z_i \in \mathcal{O}_\epsilon(z_k)$, $N_i$ is of order $n\epsilon^d$ up to a constant depending on the density of the data. Therefore, the algorithm's accuracy depends on comparing quantities of the same order with respect to $\epsilon$ and could be sensitive to the data distribution. 

Suppose $\delta$ represents the value at the $95$th percentile of $\{\texttt{BD}(z_i)\}_{i=1}^n$. In the simulations in Section \ref{numerical simulation}, we implement BRIM such that $z_k \in \partial \mathcal{X}$ if $\texttt{BD}(z_k)>\delta$.

\subsection{BAND,  LEVER, and SPINVER algorithms}
Let $\mathcal{O}_k \subset \mathcal{X}$  denote the nearest neighbors of $z_k \in \mathcal{X}$ in the KNN scheme, consisting of $N_k$ points. 

In BAND \cite{xue2009boundary}, define $D(z_k)$ as the inverse of the average distance between $z_k$ and the points in $\mathcal{O}_k$, given by $D(z_k) = ( \frac{1}{N_k} \sum_{z_i \in \mathcal{O}_k}\|z_i - z_k\|_{\mathbb{R}^p} )^{-1}$. This makes $D(z_k)$ function as a density estimator. Let $VD(z_k)$ represent the variance of $D(z_i)$ over the points ${z_i}$ in ${z_k} \cup \mathcal{O}_k$. Suppose the data points are distributed according to a density function with a small derivative. Intuitively, near the boundary, $D(z_k)$ should be smaller than in the interior to reflect the lack of symmetry near the boundary. Conversely, the variance $VD(z_k)$ should be small in the interior, indicating a slow change in density. Consequently, the authors propose thresholds $\delta$ and $\delta'$ such that $z_k \in \partial \mathcal{X}$ if $D(z_k) < \delta$ and $VD(z_k) > \delta'$. 

In SPINVER \cite{qiu2016clustering}, let $s(z_k)=||\sum_{z_i \in \mathcal{O}_k}( z_i-z_k)||_{1}$, where $\|\cdot\|_1$ denotes the $L^1$ norm. Thus, $s(z_k)$ quantifies the asymmetry of the neighborhood $\mathcal{O}_k$ with respect to $z_k$.  Moreover, the authors propose using $f(z_k)=\exp(\frac{1}{N_k}\sum_{z_i \in \mathcal{O}_k}\|z_i-z_k\|_{\mathbb{R}^p}^{2})$ to measure the local data density in $\mathcal{O}_k$. Assuming uniform data distribution, when $z_k$ is near the boundary, the data points in $\mathcal{O}_k$ should be sparser and less symmetric. Therefore, thresholds $\delta$ and $\delta'$ are suggested such that $z_k \in \partial \mathcal{X}$ if $s(z_k) > \delta$ and $f(z_k)<\delta'$.

The idea of LEVER \cite{cao2018multidimensional} is similar to SPINVER. Let $H(z_k)=||z_k-\frac{1}{N_k}\sum_{z_i \in \mathcal{O}_k}z_i||_{1}$. In fact, $H(z_k)=\frac{1}{N_k}s(z_k)$, where $s(z_k)$ is defined in the SPINVER.  Hence,  $H(z_k)$ assesses the asymmetry of $\mathcal{O}_k$ with respect to $z_k$. Define $D(z_k)=\sum_{z_i \in \mathcal{O}_k}\exp(\|z_i-z_k\|_{\mathbb{R}^p})$ to quantify the data density in $\mathcal{O}_k$. Similarly, $z_k$ is identified as  a boundary point if $H(z_k)>\delta$ and $D(z_k)<\delta'$ for thresholds $\delta$ and $\delta'$.  Alternatively, the authors suggest selecting bounds $\delta<\delta'$ such that $z_k \in \partial \mathcal{X}$ if $\delta<H(z_k)D(z_k)<\delta'$.

Clearly, the BAND SPINVER, and  LEVER are sensitive to the data distribution, and selecting appropriate thresholds becomes especially challenging when the data points are non-uniformly distributed.

Let $\delta$ denote the value at the 20th percentile of $\{D(z_i)\}_{i=1}^n$, and $\delta'$ denote the value at the 80th percentile of $\{VD(z_i)\}_{i=1}^n$. In the simulations in Section \ref{numerical simulation}, BAND is implemented such that $z_k \in \partial \mathcal{X}$ if $D(z_k) < \delta$ and $VD(z_k) > \delta'$. Similarly, let $\delta$ represent the value at the 95th percentile of $\{s(z_i)\}_{i=1}^n$, and $\delta'$ represent the value at the 5th percentile of $\{f(z_i)\}_{i=1}^n$. For SPINVER in the simulations in Section \ref{numerical simulation}, $z_k \in \partial \mathcal{X}$ if $s(z_k) > \delta$ and $f(z_k) < \delta'$. Lastly, suppose $\delta$ indicates the value at the 95th percentile of $\{H(z_i)\}_{i=1}^n$, and $\delta'$ indicates the value at the 5th percentile of $\{D(z_i)\}_{i=1}^n$. In the simulations in Section \ref{numerical simulation}, LEVER is implemented such that $z_k \in \partial \mathcal{X}$ if $H(z_k) > \delta$ and $D(z_k) < \delta'$.

\subsection{CPS algorithm}
We introduce the CPS algorithm \cite{calder2022boundary}(abbreviated by authors' initials for brevity). The authors propose detecting the boundary points by directly estimating the distance to the boundary function:
$$d_g(\iota(x),\partial \iota (M))=\min_{y \in \partial M} d_g(\iota(x) ,\iota(y)).$$
A key observation is that if $B^{\mathbb{R}^p}_{\epsilon}(\iota(x)) \cap \partial \iota (M) \not= \emptyset$, then 
$$d_g(\iota(x),\partial \iota (M))=\max_{\iota(y) \in B^{\mathbb{R}^p}_{\epsilon}(\iota(x)) \cap \iota(M)}\big(d_g(\iota(x),\partial \iota (M))-d_g(\iota(y),\partial \iota (M))\big),$$
where the maximum is attained when $y \in \partial M$. Let $\gamma(t)$ be the unit speed geodesic defined as in (3) of Assumption \ref{assumption tangent space}, perpendicular to $\partial M$ and passing through $x$ at $t=t_0=d_g(\iota(x),\partial \iota (M))$. Define $v(\iota(x))=\iota_*\frac{d \gamma(t_0)}{dt}$ to be the unit tangent vector at $\iota(x)$. Through a second order Taylor expansion of $d_g(\iota(x),\partial \iota (M))-d_g(\iota(y),\partial \iota (M))$ with respect to $\iota(x)-\iota(y)$, the authors approximate $d_g(\iota(x),\partial \iota (M))$ as
\begin{align}\label{CPS estimator}
\max_{\iota(y) \in B^{\mathbb{R}^p}_{\epsilon}(\iota(x)) \cap \iota(M)}\big(\iota(x)-\iota(y)\big) \cdot \frac{1}{2} \big(v(\iota(x))+v(\iota(y))\big).
\end{align}

With the above motivation, the steps of the CPS algorithm can be summarized as follows. Let $\mathcal{O}_k \subset \mathcal{X}$  denote the nearest neighbors of $z_k \in \mathcal{X}$ in the $\epsilon$-radius ball scheme. Suppose the $\frac{\epsilon}{2}$-radius ball neighborhood of $z_k$ contains $\tilde{N}_k$ points.
For any $z_k \in \mathcal{X}$ close to $\partial \iota(M)$, $v(z_k)$ can be approximated by taking the mean of $z_i-z_k$ in  $\mathcal{O}_k$, adjusted by a $0-1$ kernel density estimation.  Specifically, define  
$$\hat{v}(z_k) =\frac{\tilde{v}(z_k)}{\|\tilde{v}(z_k)\|_{\mathbb{R}^p}},  \quad \quad \tilde{v}(z_k)=\frac{|S^{d-1}|}{d} (\frac{\epsilon}{2})^d \sum_{z_i \in \mathcal{O}_k} \frac{z_i-z_k}{\tilde{N}_i}.$$
Then, $\hat{v}(z_k) $ is an estimator of $v(z_k)$. Let $\frac{1}{n}C_{n,k}$ be the local covariance matrix associated with $\mathcal{O}_k$ defined in \eqref{local covariance matrix}. Let $T_k$ be the subspace generated by the eigenvectors corresponding to the first $d$ eigenvalues of $\frac{1}{n}C_{n,k}$. Thus, $T_k$  approximates the tangent space of $\iota(M)$ at $z_k$.  If $\mathcal{P}_k$ is the projection operator from $\mathbb{R}^p$ onto $T_k$, then $\big(z_i-z_k\big) \cdot \frac{1}{2} \big(v(z_i)+v(z_k)\big)$ can be approximated by 
$$\mathcal{P}_k(z_i-z_k\big) \cdot \Bigg(\frac{\hat{v}(z_i)+\hat{v}(z_k)}{2}\Bigg)=\mathcal{P}_k(z_i-z_k\big) \cdot \Bigg(\hat{v}(z_k)+\frac{\hat{v}(z_i)-\hat{v}(z_k)}{2}\Bigg).$$
However, when $z_i$ and $z_k$ are away from $\partial \iota(M)$. The estimations $\hat{v}(z_i)$ and $\hat{v}(z_k)$ may not be accurate and can even form an angle close to $\pi$.  Therefore, the authors suggest adding a cutoff function to $\frac{\hat{v}(z_i)-\hat{v}(z_k)}{2}$.  According to \eqref{CPS estimator}, the estimator of $d_g(z_k,\partial \iota (M))$ is defined as
$$\hat{d}_k=\max_{z_i \in \mathcal{O}_k}\mathcal{P}_k(z_i-z_k\big) \cdot \Bigg(\hat{v}(z_k)+\frac{\hat{v}(z_i)-\hat{v}(z_k)}{2} \mathcal{\chi}_{\mathbb{R}^+}\Big(\mathcal{P}_k(\hat{v}(z_i)) \cdot \mathcal{P}_k(\hat{v}(z_k)) \Big)\Bigg),$$
where $ \mathcal{\chi}_{\mathbb{R}^+}$ is the characteristic function supported on $\mathbb{R}^+$. By applying a small threshold $r$, $z_k \in \partial \mathcal{X}$ if $\hat{d}_k<r$.

\section{Review of the Diffusion map}\label{review of DM section}
{We provide a brief review of Diffusion Maps (DM). The algorithm described below corresponds to the DM with $\alpha=1$ normalization, as introduced in the original work \cite{coifman2006diffusion}.  The $\alpha=1$ normalization is designed to preserve the intrinsic structure of the data regardless of its distribution. Given point cloud $\{z_i\}_{i=1}^n \subset \mathbb{R}^p$, DM constructs a kernel normalized graph Laplacian $L_{DM} \in \mathbb{R}^{n \times n}$ using the kernel $k(z, z')=\exp(-\frac{\|z-z'\|^2_{\mathbb{R}^p}}{4\epsilon_{DM}^2})$ as shown in the following steps.
\begin{enumerate}
\item
Let $W_{ij}=\frac{k(z_i,z_j)}{q(z_i) q(z_j)} \in \mathbb{R}^{n \times n}$, $1 \leq i,j, \leq n$, where $q(z_i)=\sum_{j=1}^n k(z_i, z_j)$.
\item
Define an $n \times n$ diagonal matrix $D$ as $D_{ii}=\sum_{j=1}^{n} W_{ij}$, where $i=1,\ldots,n$. 
\item The kernel normalized graph Laplacian $L_{DM}$ is defined as $L_{DM}=\frac{I-D^{-1}W}{\epsilon^2_{DM}} \in  \mathbb{R}^{n \times n}$.
\end{enumerate}
Suppose $(\lambda^{DM}_j, V_j)_{j=0}^{n-1}$ are the orthonormal eigenpairs of $L_{DM}$ with $\lambda^{DM}_0 \leq \lambda^{DM}_1 \leq \cdots, \leq \lambda^{DM}_{n-1}$. Then, $\lambda^{DM}_0=0$ and $V_0$ is a constant vector. The map $z_i \rightarrow (V_1(i), \cdots, V_{\ell}(i)) \in \mathbb{R}^{\ell}$ provides the coordinates of $z_i$ in a low-dimensional space $\mathbb{R}^{\ell}$.}

{Next, we review results that justify how DM reduces the dimensionality of data while preserving the underlying manifold structure. Let $-\Delta$ denote the Laplace-Beltrami operator of a closed (compact without boundary) smooth Riemannian manifold $M$. Let $\{\lambda_j\}_{j=0}^\infty$ be the eigenvalues of $-\Delta$, ordered so that $0=\lambda_0 <\lambda_1 \leq \lambda_2 < \cdots$, and let $-\Delta \varphi_j=\lambda_j \varphi_j$ with $\varphi_j$ being the corresponding eigenfunction normalized in $L^2(M)$. It is shown in \cite{jones2008manifold, bates2014, portegies2016embeddings} that $\Phi=(\varphi_1, \cdots, \varphi_\ell):M\rightarrow \mathbb{R}^\ell$ constitutes an embedding of $M$ in $\mathbb{R}^\ell$ for sufficiently large $\ell$. Suppose $M$ is isometrically embedded in $\mathbb{R}^p$ via $\iota$ and let $\{x_i\}_{i=1}^n \subset M$ with $\mathcal{X}=\{z_i=\iota(x_i)\}_{i=1}^n$ being the point cloud.  We construct $L_{DM}$ from $\mathcal{X}$ and let $(\lambda^{DM}_j, V_j)_{j=0}^{n-1}$ be the increasing ordered orthonormal eigenpairs of $L_{DM}$ defined as above. Then, as shown in \cite{dunson2021spectral}, for sufficiently large $n$ , $\lambda^{DM}_j$ approximates $\lambda_j$ and $V_j$ (after a proper normalization) approximates the vector $(\varphi_j(x_1), \cdots, \varphi_j(x_n))$ in $\ell^\infty$ norm for $j=0, \cdots, \ell$. Similar results are proved in \cite{wormell2021, calder2022lipschitz} under different assumptions about the manifold and the kernel used in DM. Hence, the map $z_i \rightarrow (V_1(i), \cdots, V_{\ell}(i))$ approximates the embedding $\Phi$ of $M$ in $\mathbb{R}^\ell$ over $\{x_i\}_{i=1}^n$ and topologically preserves the underlying manifold structure of $\mathcal{X}$.  When  $\{x_i\}_{i=1}^n$ are sampled from $M$ which is a compact smooth manifold with boundary, numerical evidence suggests that DM still approximates an embedding of $M$ in the Euclidean space. A recent result \cite{peoples2025} shows that if $\varphi_j$ satisfies the Neumann boundary condition, the $j$-th eigenvector of a symmetrized graph Laplacian converges to $\varphi_j$ in $\ell^2$ sense. However, a complete theoretical framework establishing that DM approximates an embedding in the case of manifold with boundary, analogous to the closed manifold case, remains to be developed.}

\bibliographystyle{plain}  
\bibliography{bib}   
\end{document}